\documentclass{article} 
\usepackage[T1]{fontenc}
\usepackage{arxiv}
\usepackage{graphicx}
\usepackage{fullpage}
\usepackage{amsmath,bm,bbm}
\usepackage{enumerate}
\usepackage{amssymb}
\usepackage{enumitem}
\usepackage{mathrsfs}
\usepackage{mathtools}
\usepackage{amsthm}
\usepackage[sort]{natbib}
\usepackage{algorithm}
\usepackage{algorithmicx}
\usepackage[dvipsnames]{xcolor}
\usepackage{algpseudocode}
\usepackage[colorlinks,citecolor=blue,urlcolor=blue]{hyperref}
\usepackage{hyperref}
\usepackage{geometry}
\usepackage{authblk}
\usepackage{pdfpages}
\usepackage{mdframed}
\usepackage{eqparbox}
\usepackage[all]{xy}
\usepackage[table]{xcolor}
\usepackage{cleveref}
\usepackage{setspace}

\newlength{\dhatheight}

\newcommand{\h}{h} 

\DeclareSymbolFont{bbold}{U}{bbold}{m}{n}
\DeclareSymbolFontAlphabet{\mathbbold}{bbold}

\newcommand{\E}{\mathbb{E}}

\newcommand{\argmax}{\mathop{\rm argmax}}
\newcommand{\argmin}{\mathop{\rm argmin}}

\newcommand{\ignore}[1]{}
\newcommand{\oldstuff}[1]{}

\newcommand{\err}{\mathrm{er}}

\newcommand{\cX}{\mathcal{X}}

\newcommand{\ER}[1]{\mathrm{ER}\left(#1\right)}
\newcommand{\ERin}[1]{\mathrm{ER}(#1)}

\newtheorem*{itheorem}{Informal Theorem}

\newsavebox{\savepar}

\makeatletter
\newcommand{\vast}{\bBigg@{3}}
\newcommand{\Vast}{\bBigg@{4}}
\makeatother

\newtheorem{lemma}{Lemma}
\newtheorem{theorem}{Theorem}

\newtheorem{definition}{Definition}
\newtheorem{remark}{Remark}

\renewcommand{\Pr}{\mathbb{P}}

\newcommand\naturalnumber{\mathbb{N}}
\newcommand\hpc{\mathcal{H}}

\newcommand\vcdim{\mathrm{VC}(\mathcal{H})}

\def\ddefloop#1{\ifx\ddefloop#1\else\ddef{#1}\expandafter\ddefloop\fi}
\def\ddef#1{\expandafter\def\csname v#1\endcsname{\ensuremath{\boldsymbol{#1}}}}
\ddefloop abcdefghijklmnopqrstuvwxyzABCDEFGHIJKLMNOPQRSTUVWXYZ\ddefloop
\def\ddef#1{\expandafter\def\csname v#1\endcsname{\ensuremath{\boldsymbol{\csname #1\endcsname}}}}
\ddefloop {alpha}{beta}{gamma}{delta}{epsilon}{varepsilon}{zeta}{eta}{theta}{var
theta}{iota}{kappa}{lambda}{mu}{nu}{xi}{pi}{varpi}{rho}{varrho}{sigma}{varsigma}
{tau}{upsilon}{phi}{varphi}{chi}{psi}{omega}{Gamma}{Delta}{Theta}{Lambda}{Xi}{Pi
}{Sigma}{varSigma}{Upsilon}{Phi}{Psi}{Omega}{ell}\ddefloop
	
\newlength\oversetwidth
\newlength\underwidth

\title{Optimal Learning Under Tsybakov Noise}

\author{\textbf{Steve Hanneke \quad Hongao Wang \quad Mingyue Xu} \\
Department of Computer Science\\
Purdue University\\

\texttt{steve.hanneke@gmail.com}\\

\texttt{\{wang5270, xu1864\}@purdue.edu}
}
\date{}

\begin{document}
\maketitle

\begin{abstract}
Probably Approximately Correct (PAC) learning is a fundamental learning model that has been extensively investigated since it was introduced by Valiant \cite{valiant1984theory} to the theoretical computer science community. 
In this model, $\hpc$ is a concept class consisting of functions mapping from an instance space $\mathcal{X}$ to binary labels $\{0,1\}$, and $h^{*}\in\hpc$ is the target concept to be learned. 
Having access to i.i.d.\ labeled examples from a distribution $\mathcal{D}$ over $\mathcal{X}\times\{0,1\}$, which admits $h^*$ as the best concept in $\hpc$, the goal is to design a learning algorithm that outputs a hypothesis having low error competitive to $h^{*}$ with high probability.
This model was initially studied under the realizable setting, which may be considered unrealistic, as it assumes that the instances are labeled exactly by $h^*$. 
A natural relaxation is to allow label noise, that is, the true label can be flipped with probability $\eta\in(0,1/2)$. 
The simplest noise model assumes that $\eta$ is bounded away from $1/2$ on every instance, known as Massart noise. 
However, in reality, certain labels might be extremely noisy, especially for those points near the decision boundary. 
Hence, it is natural to allow very noisy points to exist, though perhaps only rarely.
This is quantified by a noise model introduced by Mammen and Tsybakov \cite{mammen1999smooth} and Tsybakov \cite{tsybakov2004optimal}, now known as Tsybakov noise.

For learning general concept classes, Massart and N\'{e}d\'{e}lec \cite{massart2006risk} gave the first general upper bound for error guarantees under Tsybakov noise, analyzing the popular ERM learning algorithm. They also proved a general lower bound on the achievable guarantees. However, their upper and lower bounds differ by a logarithmic factor. Resolving this gap has remained a well-known open question for the past twenty years.

In this work, we resolve this open question by improving the upper bound to match the best known lower bound, thus establishing the optimal error guarantee for learning under Tsybakov noise. Our learning algorithm operates by adaptively partitioning the instance space into regions, roughly corresponding to different noise levels, and returning a hypothesis in the concept class satisfying a specific error constraint for each region. Our technique shares a conceptual foundation with several recent advances in non-realizable learning, such as \cite{hanneke2024revisiting} and \cite{hanneke2025agnostic}.  
\end{abstract}

\newpage
\section{Introduction}
  \label{sec:introduction}
Probably Approximately Correct (PAC) learning, dating back to the early works of \cite{vapnik1974theory,vapnik1964class,valiant1984theory}, is one of the most fundamental models for understanding supervised statistical learning. Under this framework, one of the most basic problems is the binary classification. Specifically, we are given i.i.d.\ data $S_{n}=\{(x_{1},y_{1}),\ldots,(x_{n},y_{n})\}\sim\mathcal{D}^{n}$ generated from some unknown data distribution $\mathcal{D}$ over $\mathcal{X}\times\{0,1\}$, where $\mathcal{X}$ is called the instance space (or input domain). Equipped with the knowledge of a concept class $\hpc\subseteq\{0,1\}^{\mathcal{X}}$, the goal is to design a learning algorithm that takes the dataset $S_{n}$ as input and outputs a hypothesis $\hat{h}_{n}:\mathcal{X}\mapsto\{0,1\}$ that achieves low excess error competitive to the best concept $h^{*}_{\mathcal{D}}$ satisfies $\mathrm{er}_{\mathcal{D}}(h^{*}_{\mathcal{D}})=\inf_{h\in\hpc}\mathrm{er}_{\mathcal{D}}(h)=\inf_{h\in\hpc}\Pr_{(x,y)\sim\mathcal{D}}(h(x)\neq y)$. To formalize the problem mathematically, for any $\epsilon>0$ and $\delta\in(0,1)$, we aim to design a learning algorithm $\mathcal{A}$ to achieve $\Pr_{S_{n}\sim\mathcal{D}^{n}}(\mathrm{er}_{\mathcal{D}}(\mathcal{A}(S_{n}))-\mathrm{er}_{\mathcal{D}}(h^{*}_{\mathcal{D}})>\epsilon)<\delta$. From a statistical perspective, a learning algorithm's performance can be evaluated by its learning rates (of convergence), namely, given $n$ i.i.d.\ samples $S_{n}$ and $\delta\in(0,1)$, what is the best excess risk $\epsilon=\epsilon(n,\delta)$ such that $\Pr_{S_{n}\sim\mathcal{D}^{n}}(\mathrm{er}_{\mathcal{D}}(\mathcal{A}(S_{n}))-\mathrm{er}_{\mathcal{D}}(h^{*}_{\mathcal{D}})>\epsilon)<\delta$ holds. 

\paragraph{Realizable learning.} The realizable setting is the most basic case of PAC learning, where distribution $\mathcal{D}$ is restricted to be realizable with respect to $\hpc$, i.e., $\mathrm{er}_{\mathcal{D}}(h^{*}_{\mathcal{D}})=\inf_{h\in\hpc}\mathrm{er}_{\mathcal{D}}(h)=0$. For any distribution $\mathcal{D}$ over $\mathcal{X}\times\{0,1\}$, define the Bayes (optimal) classifier with respect to $\mathcal{D}$, denoted by $h^{*}_{\mathcal{D}}$, as $h^{*}_{\mathcal{D}}(x)=\mathbbm{1}(\Pr_{\mathcal{D}}(Y=1|X=x)\geq 1/2)$, where $\mathbbm{1}(\cdot)$ is the indicator function. Hence, the realizability assumption is equivalent to saying that the Bayes classifier $h^{*}_{\mathcal{D}}$ has zero error and lies in the concept class $\hpc$. This simplifies our goal to bounding the error rate $\mathrm{er}_{\mathcal{D}}(\mathcal{A}(S_{n}))\leq\epsilon$. 

Over the past few decades, an abundant line of works has focused on developing learning algorithms that can guarantee minimax optimal learning rates under the realizable setting, formalizing an almost exhaustive understanding of its statistical limits of convergence. Perhaps, the most natural family of learning algorithms is the celebrated Empirical Risk Minimization (ERM) rule. An ERM algorithm outputs a concept $\hat{h}_{n}\in\hpc$ that has a zero empirical error (breaking ties arbitrarily), that is, $\hat{\mathrm{er}}_{S_{n}}(\hat{h}_{n})=\frac{1}{n}\sum_{(x,y)\in S_{n}}\mathbbm{1}(\hat{h}_{n}(x)\neq y)=0$. Note that such a concept exists since any data sequence is realizable. For any concept class $\hpc$ with finite VC dimension $\vcdim=d$, which is defined as the largest number of points $x_{1},\ldots,x_{d}\in\mathcal{X}$ such that $\hpc$ contains concepts to realize all $2^{d}$ possible labelings of them, \cite{vapnik1974theory,blumer1989learnability} showed that any sample-consistent learning rules (including ERM) can at best achieve $O((d/n)\log{(n/d)}+(1/n)\log(1/\delta))$ learning rates. This upper bound was complemented by a general lower bound \citep{ehrenfeucht1989general}, saying that for any learning algorithm $\mathcal{A}$ (not necessarily sample-consistent), there exists a realizable distribution $\mathcal{D}$ such that $\mathrm{er}_{\mathcal{D}}(\mathcal{A}(S_{n}))=\Omega((d/n)+(1/n)\log(1/\delta))$, with probability greater than $\delta$. Clearly, there is a logarithmic factor gap between the lower and upper bounds, which was shown to be sometimes unavoidable by sample-consistent learning rules \citep{auer2007new} and even by proper learners \citep{bousquet2020proper}, i.e., any algorithm outputs a concept in $\hpc$. A surge of follow-up works then contributed to resolving such a gap for special cases or proposing sometimes better upper bounds (e.g., \citep{haussler1994predicting,natarajan1987learning,warmuth2004optimal,gine2006concentration,simon2015almost,hanneke2015minimax,hanneke2016refined}). We would like to mention that a complete review of this literature can be found in \cite{hanneke2015minimax}. Resolving such a gap had been a really challenging open problem for decades and was finally addressed by the seminal work of \cite{hanneke2016optimal}, which sharpened the upper bound to $O((d/n)+(1/n)\log(1/\delta))$. Hanneke's algorithm returns the majority vote of ERM classifiers trained on carefully and recursively selected sub-sampling datasets, which is indeed an improper learner. Recently, several works proposed arguably simpler learning algorithms to guarantee optimal PAC learning rates, including the classic heuristic known as Bagging or bootstrap aggregation \citep{larsen2023bagging}, a variant of the one-inclusion-graph algorithm \cite{aden2023optimal}, as well as an inspiring conjecture\footnote{\cite{aden2024majority} proved that the majority vote of three ERM classifiers achieves an optimal in-expectation error bound, and conjectured that it is in fact also optimal in the high-probability regime.} on a simple majority vote of three ERMs \citep{aden2024majority}.

\paragraph{Learning with label noise.} The assumption of realizability is usually too restrictive to hold in practice. A more general setting of learning in the presence of various classification noises is thus motivated and has become a crucial challenge in machine learning. Let us define the regression function as $\eta_{\mathcal{D}}(x)=\Pr_{\mathcal{D}}(Y=1|X=x)$. It is clear that the Bayes classifier $h^{*}_{\mathcal{D}}(x)=\max(2\eta_{\mathcal{D}}(x)-1,0)$ for any $x\in\mathcal{X}$. Now, let us introduce those noise models commonly studied in the literature (formal definitions can be found later). The simplest noise model beyond the realizable case is called the Massart (bounded) noise condition \citep{massart2006risk,sloan1988types}, guaranteeing that the probability that labels are flipped is uniformly bounded away from $1/2$, i.e., for any $x\in\mathcal{X}$, $|\eta_{\mathcal{D}}(x)-1/2|\geq\gamma$ for some $\gamma\in(0,1/2)$. While still assuming $h^{*}_{\mathcal{D}}\in\hpc$, the Massart noise induces a non-zero error to the Bayes classifier. The classic Random Classification Noise (RCN) model \citep{angluin1988learning}, described as that each label is flipped independently with probability exactly $1/2-\gamma$, is a special case of the Massart noise model. 

However, this assumption of uniformly bounded noise still fails to capture certain realistic noise sources. Consider in computer vision or NLP, it is very likely that some examples (blurry images, ambiguous text, etc.) are intrinsically much harder to annotate than others, making the bounded noise condition too rigid to hold. This motivates the proposal of the Tsybakov noise model, which was first proposed in \cite{mammen1999smooth} and subsequently refined in \cite{tsybakov2004optimal}. While keeping $h^{*}_{\mathcal{D}}\in\hpc$, the Tsybakov noise condition significantly extends the Massart noise condition by allowing a small fraction of arbitrarily large noises while having the total noise under control. Specifically, it guarantees that $\Pr_{x\sim\mathcal{D}_{X}}(|\eta_{\mathcal{D}}(x)-1/2|\leq t)=O(t^{\alpha/(1-\alpha)})$ for any $t>0$ and some $\alpha\in(0,1)$ parametrizing the level of noise in label distribution. To see how the Tsybakov noise generalizes the Massart noise, think about the classic logistic regression problem $\eta_{\mathcal{D}}(x)=\sigma(\langle w,x\rangle+b)$, where $\sigma(t)=(1+e^{-t})^{-1}$ is called the logistic (or sigmoid) function. The Tsybakov noise condition only requires $\langle w,x\rangle+b$ to have a small probability near zero, while the Massart noise condition requires $\langle w,x\rangle+b$ to stay uniformly away from zero almost surely, thus excluding a broad family of natural distributions. It is clear that the Tsybakov noise model becomes more challenging as the parameter $\alpha$ decreases. Indeed, taking $\alpha=0$ (also removing the assumption of $h^{*}_{\mathcal{D}}\in\hpc$) leads to the most general agnostic setting \citep{haussler1992decision,kearns1992toward}. The Tsybakov noise model thus interpolates between the realizable and agnostic settings. 

Later works \cite{bartlett2004local,bartlett2006convexity} further extended the Tsybakov noise condition to a more general Bernstein class condition. Since these initial works, there has been a substantial literature in learning theory and statistics built towards understanding the statistical limits of the aforementioned noise models under various scenarios (for the Massart noise condition, e.g., \cite{awasthi2015efficient,zhang2020efficient,hanneke2015minimax,hanneke2016refined,diakonikolas2019distribution,diakonikolas2020learning}, for the Tsybakov noise condition, e.g., \citep{boucheron2005theory,bartlett2006empirical,massart2006risk,balcan2007margin,hanneke2011rates,hanneke2015minimax,van2015fast,hanneke2016refined,diakonikolas2021efficiently}, and for the Bernstein class condition, e.g., \citep{bartlett2006convexity,massart2006risk,van2015fast,hanneke2019value,hanneke2022no,hanneke2026more}).

\subsection{Our contributions} 
  \label{subsec:our-contribution}
In this paper, we focus on understanding the minimax optimal learning rates when the labels are corrupted by Tsybakov noise. To formalize the problem setup, assume that i.i.d.\ samples $S_{n}=\{(x_{1}, y_{1}),\ldots,(x_{n}, y_{n})\}$ are generated from some unknown distribution $\mathcal{D}$ that satisfies a Tsybakov noise condition with parameters $(a,\alpha)$ with respect to a concept class $\hpc$. Below, we formally state its definition.

\begin{definition}  [\textbf{Tsybakov Noise Condition}]
  \label{def:Tsybakov-noise-condition}
Let $\alpha\in(0,1)$ and $a>0$. We say that a distribution $\mathcal{D}$ over $\mathcal{X}\times\{0,1\}$ satisfies a Tsybakov noise condition with parameters $(a,\alpha)$ w.r.t.\ $\hpc$, denoted by $\mathcal{D}\in\mathrm{TN}_{\hpc}(a,\alpha)$, if $h^{*}_{\mathcal{D}}\in\hpc$ and
\begin{equation*}
    \Pr_{\mathcal{D}}\left(|\eta_{\mathcal{D}}(x)-1/2| \leq t\right) \leq a't^{\frac{\alpha}{1-\alpha}} ,\;\; \forall t>0 ,
\end{equation*}
where $a' = (1-\alpha)(2\alpha)^{\alpha/(1-\alpha)}a^{1/(1-\alpha)}$\footnote{We will use the definition of $a'$ in all of the following sections of this paper.}.
\end{definition}

In this paper, we are particularly interested in passively learning a concept class $\hpc$ with $\vcdim=d$. The best known lower bound is $\mathrm{er}_{\mathcal{D}}(\mathcal{A}(S_{n}))-\mathrm{er}_{\mathcal{D}}(h^{*}_{\mathcal{D}})=\Omega((d/n+\log(1/\delta)/n)^{1/(2-\alpha)})$, with probability at least $1-\delta$. Such a type of lower bounds has been developed in \cite{tsybakov2004optimal,massart2006risk,audibert2007fast} from a statistical/nonparametric perspective. A clear statement of the lower bound for learning a VC class can be found in \cite[Section 6]{hanneke2015minimax}. We include it here for completeness:
\begin{equation*}
    \frac{1}{\epsilon^{2-\alpha}}\left(d+\log\left(\frac{1}{\delta}\right)\right) \lesssim \mathcal{M}_{\mathrm{TN}(a,\alpha)}(\epsilon,\delta) \lesssim \frac{1}{\epsilon^{2-\alpha}}\left(d\log\left(\frac{1}{a\epsilon^{\alpha}}\right)+\log\left(\frac{1}{\delta}\right)\right) ,
\end{equation*}
where $\mathcal{M}_{\mathrm{TN}(a,\alpha)}(\epsilon,\delta)$ denote the sample complexity for passively learning $\hpc$ to guarantee $(\epsilon,\delta)$ excess risk under Tsybakov noise. Solving the above for $\epsilon$ directly yields that
\begin{equation*}
    \left(\frac{d}{n}+\frac{1}{n}\log\left(\frac{1}{\delta}\right)\right)^{\frac{1}{2-\alpha}} \lesssim \min_{\mathcal{A}}\max_{\mathcal{D}\in\mathrm{TN}_{\hpc}(a,\alpha)}\mathrm{er}_{\mathcal{D}}(\mathcal{A}(S_{n}))-\mathrm{er}_{\mathcal{D}}(h^{*}_{\mathcal{D}}) \lesssim \left(\frac{d}{n}\log\left(\frac{n}{d}\right)+\frac{1}{n}\log\left(\frac{1}{\delta}\right)\right)^{\frac{1}{2-\alpha}} ,
\end{equation*}
with probability at least $1-\delta$. In fact, the upper bound is realized by the natural ERM learning algorithm Let $\mathrm{ERM}(\hpc;S)=\{h\in\hpc: h=\argmin_{h\in\hpc}\hat{\mathrm{er}}_{S}(h)\}$ denote the class of empirical risk minimizers of $\hpc$ on sample $S$. We have the following result (see Lemma~\ref{lem:excess-risk-under-tsybakov-noise} for a formal version and its proof).

\begin{lemma}  [\textbf{ERM Bound under Tsybakov Noise, \cite{massart2006risk}}]
Let $\mathcal{X}$ be an instance space, $\hpc\subseteq\{0,1\}^{\mathcal{X}}$ be a concept class with $\vcdim=d$. Let $a>0$ and $\alpha\in(0,1)$. Assume that $\mathcal{D}$ satisfies a Tsybakov noise condition with parameters $(a,\alpha)$ w.r.t.\ $\hpc$. Let $h^{*}_{\mathcal{D}}\in\hpc$ be the Bayes-optimal classifier w.r.t.\ $\mathcal{D}$. Let $n\in\naturalnumber$ and $S_{n}=\{(x_{i}, y_{i})\}_{i=1}^{n}\sim\mathcal{D}^{n}$ be i.i.d.\ samples. Then, for any $\delta\in(0,1)$, we have
\begin{equation*}
    \sup_{h\in\mathrm{ERM}(\hpc;S_{n})}\mathrm{er}_{\mathcal{D}}(h) - \mathrm{er}_{\mathcal{D}}(h^{*}_{\mathcal{D}}) \leq \left(a\left(\frac{d}{n}\log\left(\frac{n}{a^2d}\right)+\frac{1}{n}\log\left(\frac{1}{\delta}\right)\right)\right)^{\frac{1}{2-\alpha}} ,
\end{equation*}
for some constant $c=c(a,\alpha)>0$, with probability at least $1-\delta$.
\end{lemma}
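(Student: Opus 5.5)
The plan is to follow the standard localization argument of Massart and N\'{e}d\'{e}lec. It has two ingredients: a Bernstein-type \emph{variance--excess-risk} relation implied by the Tsybakov condition, and a ratio-type (local) uniform deviation bound for VC classes. Throughout, write $\mathcal{E}(h)=\mathrm{er}_{\mathcal{D}}(h)-\mathrm{er}_{\mathcal{D}}(h^{*}_{\mathcal{D}})$.

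\textbf{Step 1 (margin $\Rightarrow$ Bernstein condition).}
For any $h$, we have
\begin{equation*}
\mathcal{E}(h)=\E\bigl[|2\eta_{\mathcal{D}}(x)-1|\,\mathbbm{1}(h(x)\neq h^{*}_{\mathcal{D}}(x))\bigr].
\end{equation*}
Fix $t>0$ and split according to whether $|\eta_{\mathcal{D}}(x)-1/2|>t$. This gives
\begin{equation*}
\Pr(h\neq h^{*}_{\mathcal{D}})\le a't^{\alpha/(1-\alpha)}+\frac{\mathcal{E}(h)}{2t}.
\end{equation*}
Optimizing over $t$, the choice of $a'$ in Definition~\ref{def:Tsybakov-noise-condition} is exactly what makes this collapse to
\begin{equation*}
\Pr(h\neq h^{*}_{\mathcal{D}})\le a\,\mathcal{E}(h)^{\alpha}.
\end{equation*}
Consequently the loss difference $f_h=\mathbbm{1}(h(x)\neq y)-\mathbbm{1}(h^{*}_{\mathcal{D}}(x)\neq y)$ satisfies $\E f_h^2\le a\,(\E f_h)^{\alpha}$.

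\textbf{Step 2 (local uniform deviation).}
Consider the classes $\mathcal{F}_r=\{f_h: h\in\hpc,\ \Pr(h\neq h^{*}_{\mathcal{D}})\le r\}$. These are differences of indicators from a class of VC dimension $O(d)$, with variance at most $r$. Talagrand's inequality (Bousquet's version), combined with a localized Rademacher/chaining bound for VC classes, gives the following: with probability at least $1-\delta$, for all $f\in\mathcal{F}_r$,
\begin{equation*}
\E f-\hat{\E}_n f\le \phi(r):=C\left(\sqrt{\frac{r\,(d\log(1/r)+\log(1/\delta))}{n}}+\frac{d\log(1/r)+\log(1/\delta)}{n}\right).
\end{equation*}
To make this hold simultaneously over all radii, I would apply a peeling argument over $r=2^{-k}$ together with a union bound, costing only a $\log\log$ term that is absorbed. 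Equivalently, I could invoke the standard weighted-empirical-process bound with the sub-root function $\phi$.

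\textbf{Step 3 (fixed point).}
Let $\hat h$ be any ERM, so $\hat{\E}_n f_{\hat h}\le 0$ because $h^{*}_{\mathcal{D}}\in\hpc$. Write $\epsilon=\mathcal{E}(\hat h)$. By Step 1 we may take $r=a\epsilon^{\alpha}$, and Step 2 then gives $\epsilon\le\phi(a\epsilon^{\alpha})$. Solving this inequality is dominated by the square-root term, giving
\begin{equation*}
\epsilon^{2}\lesssim a\epsilon^{\alpha}\,\frac{d\log(1/(a\epsilon^{\alpha}))+\log(1/\delta)}{n},
\quad\text{so}\quad
\epsilon\lesssim\left(a\,\frac{d\log(1/(a\epsilon^{\alpha}))+\log(1/\delta)}{n}\right)^{1/(2-\alpha)}.
\end{equation*}
Substituting the crude value of $\epsilon$ back into the logarithm turns $\log(1/(a\epsilon^{\alpha}))$ into $O(\log(n/(a^{2}d)))$ up to constants depending on $\alpha$. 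This yields the claimed bound, with the constant $c(a,\alpha)$ absorbing these factors. The linear term $(d\log+\log(1/\delta))/n$ is of lower order whenever the bound is nontrivial, since $1/(2-\alpha)<1$.

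\textbf{Main obstacle.}
The delicate step is Step 2: obtaining the localized bound with a $\log(1/r)$ factor rather than $\log n$, uniformly over radii. I would first try the chaining bound $\E\sup_{\mathcal{F}_r}(\E-\hat{\E}_n)\lesssim\sqrt{rd\log(1/r)/n}+d\log(1/r)/n$, which holds for VC classes via the entropy bound $\log N(\epsilon,\mathcal{F},L_2)\lesssim d\log(1/\epsilon)$. I would then combine it with Talagrand's concentration inequality and peeling. Tracking the exact dependence on $a$ inside the logarithm is bookkeeping that I would not expect to fail.
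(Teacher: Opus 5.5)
Your proposal is correct and follows essentially the same route as the paper. The chain is the same: first the Tsybakov condition implies the Bernstein class condition $\Pr_{\mathcal{D}}(\{h\neq h^{*}_{\mathcal{D}}\})\le a\,\mathcal{E}(h)^{\alpha}$, and your optimization over $t$ checks the normalization of $a'$ exactly. Then a localized uniform Bernstein-type deviation bound for the loss-difference class (the paper's \Cref{lem:uniform-bernstein-inequality} with $\Delta=\mathcal{X}$), and finally a fixed-point step that substitutes a crude lower bound on $\epsilon$ into the logarithm, as in the proof of \Cref{lem:prune}. The only difference is presentation: the paper cites the Massart--N\'ed\'elec/Gin\'e--Koltchinskii ERM bound stated via the disagreement coefficient and then uses $\theta_{\hpc}(r_0)\le 1/r_0$, which is exactly what your $\log(1/r)$ entropy factor encodes, so your self-contained derivation loses nothing.
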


Clearly, the sharpest known lower and upper bounds on the minimax optimal learning rates have a logarithmic factor gap. Indeed, this question of what the minimax optimal learning rates are and what learning algorithms can achieve such rates has remained open for decades.

In this paper, we resolve this open question by providing a learning algorithm $\mathcal{A}$ that guarantees an excess risk bound of $\mathrm{er}_{\mathcal{D}}(\mathcal{A}(S_{n}))-\mathrm{er}_{\mathcal{D}}(h^{*}_{\mathcal{D}})=O((d/n+\log(1/\delta)/n)^{1/(2-\alpha)})$, with probability at least $1-\delta$. Our upper bound, together with the best-known lower bound, yields that the optimal minimax optimal learning rates $\mathrm{er}_{\mathcal{D}}(\mathcal{A}(S_{n}))-\mathrm{er}_{\mathcal{D}}(h^{*}_{\mathcal{D}})=\Theta((d/n+\log(1/\delta)/n)^{1/(2-\alpha)})$, or equivalently, the optimal sample complexity is $\mathcal{M}_{\mathrm{TN}(a,\alpha)}(\epsilon,\delta)=\Theta(\epsilon^{\alpha-2}(d+\log(1/\delta)))$. Specifically, we show that

\begin{itheorem}
Let $\mathcal{X}$ be an instance space, $\hpc\subseteq\{0,1\}^{\mathcal{X}}$ be a concept class with $\vcdim=d$. Let $\alpha\in(0,1)$, $a>0$. For any $\mathcal{D}\in\mathrm{TN}_{\hpc}(a,\alpha)$ and any $\delta\in(0,1)$, there exists a learning algorithm $\mathcal{A}$ such that, for sufficiently large $n\in\naturalnumber$ and $S_{n}=\{(x_i,y_i)\}_{i=1}^{n}\sim\mathcal{D}^{n}$, with probability at least $1-\delta$,
\begin{equation*}
    \err_{\mathcal{D}}\left(\mathcal{A}(S_{n})\right) - \err_{\mathcal{D}}\left(h^{*}_\mathcal{D}\right) \leq C\left(\frac{1}{n}\left(d+\log\left(\frac{1}{\delta}\right)\right)\right)^{\frac{1}{2-\alpha}},
\end{equation*}
for some constant $C=C(a,\alpha)>0$.
\end{itheorem}

Perhaps surprisingly, our new learning algorithm is a \textbf{\emph{proper}} learner. For comparison, optimal excess risks can not be achieved by any proper learner under the realizable setting as discussed before, but can indeed be achieved by proper learners (e.g., ERM) under the agnostic setting. The core behind our new algorithm, named `\textbf{M}assart'' \textbf{E}rror \textbf{R}egions \textbf{I}solation under \textbf{T}sybakov noise (\textbf{MERIT}), is to recursively isolate error regions that nearly satisfy Massart noise conditions with a carefully quantified scheduling of noise levels. A similar strategy of adaptively isolating variance regions has been adopted in the work of \cite{hanneke2025agnostic}, where the author proved the minimax optimal first-order query complexity of agnostic active learning. We will elaborate on this idea in Section~\ref{sec:technical-overview} and hope it finds further applications in learning theory and statistics.

\subsection{Paper outline} 
  \label{subsec:paper-outline}
The rest of the paper is organized as follows. In Section~\ref{sec:preliminaries}, we formally describe the problem setup of learning under Tsybakov noise and introduce required definitions. In Section~\ref{sec:technical-overview}, we present a thorough technical overview of our main results, containing a conceptual proof strategy as well as certain insights into our algorithm design. In Section~\ref{sec:main-results}, we present our optimal learning algorithm under Tsybakov noise, together with a complete proof of its performance guarantee. Finally, in Section~\ref{sec:conclusion}, we conclude the paper's contribution and propose some interesting future research directions. Indirectly relevant proofs and useful concentration inequalities are deferred to Appendices~\ref{sec:omitted-proofs} and \ref{sec:concentration-inequalities}.

\section{Preliminaries}
  \label{sec:preliminaries}
Before proceeding to our main results and technical overview, we first introduce a few additional definitions and convenient notational conventions.
  
For any function $h:\mathcal{X}\mapsto\{0,1\}$, we define its error region as $\mathrm{ER}(h)=\{(x,y)\in\mathcal{X}\times\{0,1\}:h(x)\neq y\}$. It follows immediately that the error rate of $h$ under distribution $\mathcal{D}$ is $\mathrm{er}_{\mathcal{D}}(h)=\Pr_{\mathcal{D}}(\mathrm{ER}(h))$. For any two functions $f,g:\mathcal{X}\mapsto\{0,1\}$, we define their pairwise disagreement region as $\{f\neq g\}=\{x\in\mathcal{X}: f(x)\neq g(x)\}$. The probability of their pairwise disagreement region under distribution $\mathcal{D}$ is $\Pr_{\mathcal{D}_{X}}(\{f\neq g\})$ where $\mathcal{D}_{X}$ is the marginal of $X$. We adopt the following notational conventions. For any $\Delta\subseteq\mathcal{X}$, we define $\Pr_{\mathcal{D}}(\Delta)=\Pr_{\mathcal{D}}(\Delta\times\{0,1\})=\Pr_{\mathcal{D}_{\mathcal{X}}}(\Delta)$. Hence, we abbreviate $\Pr_{\mathcal{D}}(\{f\neq g\})=\Pr_{\mathcal{D}_{X}}(\{f\neq g\})$. Given sample $S=\{(x_{1}, y_{1}),(x_{2}, y_{2}),\ldots\}\in(\mathcal{X}\times\{0,1\})^{*}$, we denote by $\hat{\Pr}_{S}(\mathrm{ER}(h))=\frac{1}{|S|}\sum_{(x,y)\in S}\mathbbm{1}(h(x)\neq y)=\widehat{\mathrm{er}}_{S}(h)$ the empirical error rate of $h$ on $S$ and $\hat{\Pr}_{S}(\{f\neq g\})=\frac{1}{|S|}\sum_{(x,y)\in S}\mathbbm{1}(f(x)\neq g(x))$. Furthermore, we also apply conventions in combination: we define $S\cap\Delta$ as the subsample $\{(x,y)\in S: x\in\Delta\}$, $\mathrm{ER}(h)\cap\Delta=\{(x,y)\in\mathcal{X}\times\{0,1\}:h(x)\neq y \land x\in\Delta\}$, and then define $\Pr_{\mathcal{D}}(\mathrm{ER}(h)\cap\Delta)$, $\Pr_{\mathcal{D}}(\{f\neq g\}\cap\Delta)$, $\hat{\Pr}_{S}(\mathrm{ER}(h)\cap\Delta)$ and $\hat{\Pr}_{S}(\{f\neq g\}\cap\Delta)$ accordingly. Finally, we define the ``decision-list'' aggregation of functions, e.g. $h=f\mathbbm{1}_{\Delta}+g\mathbbm{1}_{\mathcal{X}\setminus\Delta}$ implies $h(x)=f(x)$ for any $x\in\Delta$ and $h(x)=g(x)$ for any $x\in(\mathcal{X}\setminus\Delta)$.

We redefine $\log(x)=\log_{2}(x \lor 2)$. We adopt convenient asymptotic notations. For any real-valued functions $f,g$, we use standard asymptotic notations $O(\cdot)$ and $\Omega(\cdot)$ to hide numerical constant factors (e.g. $f(z)=O(g(z))$). 
Sometimes, we use $\lesssim$ and $\gtrsim$ instead (e.g. $f(z) \lesssim g(z)$). 

Finally, we also include the definitions for the Massart noise condition and the Bernstein class condition. 

\begin{definition}  [\textbf{Massart Noise Condition}]
  \label{def:Massart-noise-condition}
Let $\gamma\in(0,1/2)$. We say that a distribution $\mathcal{D}$ over $\mathcal{X}\times\{0,1\}$ satisfies a Massart noise condition with parameter $\gamma$ w.r.t.\ $\hpc$ if $h^{*}_{\mathcal{D}}\in\hpc$ and
\begin{equation*}
    |\eta_{\mathcal{D}}(x)-1/2| \geq \gamma ,\;\; \forall x\in\mathcal{X} .
\end{equation*}
\end{definition}

\begin{definition}  [\textbf{Bernstein Class Condition}]
  \label{def:Bernstein-class-condition}
Let $\alpha\in[0,1]$ and $a\geq1$. We say that a distribution $\mathcal{D}$ over $\mathcal{X}\times\{0,1\}$ satisfies a Bernstein class condition with parameters $(a,\alpha)$ w.r.t.\ $\hpc$ if $h^{*}_{\mathcal{D}}\in\hpc$ and
\begin{equation*}
    \Pr_{\mathcal{D}}\big(\{h\neq h^{*}_{\mathcal{D}}\}\big) \leq a\Big(\Pr_{\mathcal{D}}\big(\mathrm{ER}(h)\big)-\Pr_{\mathcal{D}}\big(\mathrm{ER}(h^{*}_{\mathcal{D}})\big)\Big)^{\alpha} ,\;\; \forall h\in\hpc.
\end{equation*}
\end{definition}

\section{Technical Overview}
  \label{sec:technical-overview}
In this section, we present a high-level idea and technical review of our new PAC learning algorithm under Tsybakov noise. Before proceeding to the technical overview of our new algorithm, let us briefly recall that our problem is to learn a concept class $\hpc$ with $\vcdim=d$ under a distribution $\mathcal{D}$ over $\mathcal{X}\times\{0,1\}$ that satisfies a Tsybakov noise condition. The goal is to design a learning algorithm $\mathcal{A}$ that takes i.i.d.\ samples $\{(x_{1}, y_{1}),(x_{2}, y_{2}),\ldots,(x_{n}, y_{n})\}\sim\mathcal{D}^{n}$ and outputs a hypothesis $\hat{h}_{\mathcal{A}}$ that achieves the optimal excess risk $\Pr_{\mathcal{D}}(\ERin{\hat{h}_{\mathcal{A}}})-\Pr_{\mathcal{D}}(\ERin{h^{*}_{\mathcal{D}}})=O((d/n+\log(1/\delta)/n)^{1/(2-\alpha)})$, with probability at least $1-\delta$. Perhaps surprisingly, our new PAC learning algorithm, achieving optimal excess risk under Tsybakov noise, is a \textbf{\emph{proper}} learner. Throughout this section, let us ignore the $\delta$-dependence in the bounds for the simplicity of presentation.

\subsection{Proof of concepts} 
  \label{subsec:proof-of-concepts}
We first explain conceptually what inspires our design of a new algorithm that achieves an optimal excess risk of $O((d/n)^{1/(2-\alpha)})$. Recall that the natural (proper) learning algorithm of Empirical Risk Minimization (ERM) can only guarantee an excess risk of $O(((d/n)\log(n/d))^{1/(2-\alpha)})$ (\cite{massart2006risk}, see also our Lemma~\ref{lem:excess-risk-under-tsybakov-noise}), which is of course a $\log(n/d)$ factor worse than what we are aiming for. Inspired by a general principle of techniques in \cite{hanneke2025agnostic,bousquet2021fast,puchkin2021exponential}, our new algorithm adopts a paradigm of adaptively partitioning the instance space $\mathcal{X}$ into regions $(\mathcal{X}\setminus\Delta_{1}),(\Delta_{1}\setminus\Delta_{2}),\ldots,(\Delta_{T-1}\setminus\Delta_{T})$ and $\Delta_{T}$ with increasing noise levels and then learning using ERM within each region separately. This scheduling of noise resolutions gradually prunes out sub-optimal concepts, improves the performance of ERM on each region, and leads to a learning algorithm $\mathcal{A}$ that returns a hypothesis being a decision-list aggregation of ERM predictors, that is, $\hat{h}_{\mathcal{A}}=\sum_{t=1}^{T}\hat{h}_{t-1}\mathbbm{1}(\Delta_{t-1}\setminus\Delta_{t})+\hat{h}_{T}\mathbbm{1}(\Delta_{T})$,\footnote{We write $\Delta_{0}=\mathcal{X}$ for simplicity.} where $\hat{h}_{t-1}\in\mathrm{ERM}(\hpc;S_{n}\cap(\Delta_{t-1}\setminus\Delta_{t}))$ for any $1\leq t\leq T$ and $\hat{h}_{T}\in\mathrm{ERM}(\hpc;S_{n}\cap\Delta_{T})$. Let us denote by $\mathcal{E}(\hpc;S)$ the upper bound on the excess risk of ERM for learning $\hpc$ given data $S$, i.e., $\sup_{h\in\mathrm{ERM}(\hpc;S)}\Pr_{\mathcal{D}}(\mathrm{ER}(h))-\Pr_{\mathcal{D}}(\mathrm{ER}(h^{*}_{\mathcal{D}}))\leq\mathcal{E}(\hpc;S)$. We have that $\Pr_{\mathcal{D}}(\ERin{\hat{h}_{\mathcal{A}}})-\Pr_{\mathcal{D}}(\ERin{h^{*}_{\mathcal{D}}})=$
\begin{align*}
    &\sum_{t=1}^{T}\left(\Pr_{\mathcal{D}}(\ERin{\hat{h}_{t-1}}\cap(\Delta_{t-1}\setminus\Delta_{t}))-\Pr_{\mathcal{D}}(\ERin{h^{*}_{\mathcal{D}}}\cap(\Delta_{t-1}\setminus\Delta_{t}))\right) \\
    &\hspace{2em}+\Pr_{\mathcal{D}}(\ERin{\hat{h}_{T}}\cap\Delta_{T})-\Pr_{\mathcal{D}}(\ERin{h^{*}_{\mathcal{D}}}\cap\Delta_{T}) \\
    \leq &\sum_{t=1}^{T}\mathcal{E}(\hpc;S_{n}\cap(\Delta_{t-1}\setminus\Delta_{t}))+\mathcal{E}(\hpc;S_{n}\cap\Delta_{T}) = O((d/n)^{1/(2-\alpha)}) .
\end{align*}
So far, we have already got an improper learning algorithm achieving optimal excess risks. Now, we explain how we can design a proper learner built upon this improper strategy. For any $1\leq t\leq T$, we define $\hat{\hpc}_{t-1}=\{h\in\hpc: \hat{\Pr}_{S_{n}\cap(\Delta_{t-1}\setminus\Delta_{t})}(\mathrm{ER}(h))-\hat{\Pr}_{S_{n}\cap(\Delta_{t-1}\setminus\Delta_{t})}(\mathrm{ER}(\hat{h}_{t-1})) \leq \mathcal{E}(\hpc;S_{n}\cap(\Delta_{t-1}\setminus\Delta_{t}))\}$, and also $\hat{\hpc}_{T}=\{h\in\hpc: \hat{\Pr}_{S_{n}\cap\Delta_{T}}(\mathrm{ER}(h))-\hat{\Pr}_{S_{n}\cap\Delta_{T}}(\mathrm{ER}(\hat{h}_{T})) \leq \mathcal{E}(\hpc;S_{n}\cap\Delta_{T})\}$. The desired proper learner then returns any concept $\hat{h}_{\mathcal{A}}$ in $\hat{\hpc}=(\bigcap_{1\leq t\leq T}\hat{\hpc}_{t-1})\cap\hat{\hpc}_{T}\subseteq\hpc$. Note that such a concept exists since $h^{*}_{\mathcal{D}}$ is always therein. Moreover, for any concept $\hat{h}_{\mathcal{A}}\in\hat{\hpc}$, we have $\Pr_{\mathcal{D}}(\ERin{\hat{h}_{\mathcal{A}}})-\Pr_{\mathcal{D}}(\ERin{h^{*}_{\mathcal{D}}})=$
\begin{align*}
    &\sum_{t=1}^{T}\left(\Pr_{\mathcal{D}}(\ERin{\hat{h}_{\mathcal{A}}}\cap(\Delta_{t-1}\setminus\Delta_{t}))-\Pr_{\mathcal{D}}(\ERin{\hat{h}_{t-1}}\cap(\Delta_{t-1}\setminus\Delta_{t}))\right) \\
    &\hspace{2em}+\Pr_{\mathcal{D}}(\ERin{\hat{h}_{\mathcal{A}}}\cap\Delta_{T})-\Pr_{\mathcal{D}}(\ERin{\hat{h}_{T}}\cap\Delta_{T}) \\
    + &\sum_{t=1}^{T}\left(\Pr_{\mathcal{D}}(\ERin{\hat{h}_{t-1}}\cap(\Delta_{t-1}\setminus\Delta_{t}))-\Pr_{\mathcal{D}}(\ERin{h^{*}_{\mathcal{D}}}\cap(\Delta_{t-1}\setminus\Delta_{t}))\right) \\
    &\hspace{2em}+\Pr_{\mathcal{D}}(\ERin{\hat{h}_{T}}\cap\Delta_{T})-\Pr_{\mathcal{D}}(\ERin{h^{*}_{\mathcal{D}}}\cap\Delta_{T}) \\ 
    \leq &2\sum_{t=1}^{T}\mathcal{E}(\hpc;S_{n}\cap(\Delta_{t-1}\setminus\Delta_{t}))+2\mathcal{E}(\hpc;S_{n}\cap\Delta_{T}) = O((d/n)^{1/(2-\alpha)}) .
\end{align*}
In the remaining part of this subsection, we provide a conceptual proof of the aforementioned improper algorithm of decision-list aggregation of ERMs. The main intuition stems from some basic understandings of the underlying Tsybakov noise model. Unlike the Massart noise condition, the Tsybakov noise condition $\Pr_{\mathcal{D}}(|\eta_{\mathcal{D}}(x)-1/2| \leq t) \leq at^{\alpha/(1-\alpha)}$ does not guarantee uniformly bounded noise over then entire instance space, but only restricts the noise in a probabilistic sense, that is, a noisier region has a lower probability mass assigned. Specifically, if $\Delta\subseteq\mathcal{X}$ is some region such that $\Pr_{\mathcal{D}}(\{h\neq h^{*}_{\mathcal{D}}\}\cap(\mathcal{X}\setminus\Delta))\leq \beta(\Pr_{\mathcal{D}}(\ERin{h}\cap(\mathcal{X}\setminus\Delta))-\Pr_{\mathcal{D}}(\ERin{h^{*}_{\mathcal{D}}}\cap(\mathcal{X}\setminus\Delta)))$ holds for any $h\in\hpc$ with some $\beta>0$, one can show that $\Pr_{\mathcal{D}}(\Delta)=O(\beta^{-\alpha/(1-\alpha)})$ (this can be easily proved following the same idea of proving Lemma~\ref{lem:upper-bound-of-p-Delta}). In particular, for an extremely noisy region $\Delta_{T}$ under Tsybakov noise, the optimal learning guarantee one could hope to achieve is no better than performing agnostic learning, for which an ERM learner $\hat{h}_{T}$ yields an excess risk $\Pr_{\mathcal{D}}(\ERin{\hat{h}_{T}}\cap\Delta_{T})-\Pr_{\mathcal{D}}(\ERin{h^{*}_{\mathcal{D}}}\cap\Delta_{T})=O(\sqrt{(d/n)\Pr_{\mathcal{D}}(\Delta_{T})})$. These together convince us to set up our last resolution $\beta_{T}=(n/d)^{(1-\alpha)/(2-\alpha)}$ so that to derive an optimal excess risk $O(\sqrt{(d/n)\Pr_{\mathcal{D}}(\Delta_{T})})=O((d/n)^{1/2}\beta_{T}^{-\alpha/2(2-\alpha)})=O((d/n)^{1/(2-\alpha)})$. Next, to figure out which resolution $\beta_{1}$ to start, consider some region $\Delta_{1}$ such that $\Pr_{\mathcal{D}}(\{h\neq h^{*}_{\mathcal{D}}\}\cap(\mathcal{X}\setminus\Delta_{1}))\leq \beta_{1}(\Pr_{\mathcal{D}}(\ERin{h}\cap(\mathcal{X}\setminus\Delta_{1}))-\Pr_{\mathcal{D}}(\ERin{h^{*}_{\mathcal{D}}}\cap(\mathcal{X}\setminus\Delta_{1})))$ holds for any $h\in\hpc$. This is indeed saying that $\mathcal{D}$ satisfies a linear Bernstein class condition restricting onto $(\mathcal{X}\setminus\Delta_{1})$. Under that, we have a tight ERM bound (Lemma~\ref{lem:excess-risk-under-linear-BCC}) yields an excess risk $\Pr_{\mathcal{D}}(\ERin{\hat{h}_{0}}\cap(\mathcal{X}\setminus\Delta_{1}))-\Pr_{\mathcal{D}}(\ERin{h^{*}_{\mathcal{D}}}\cap(\mathcal{X}\setminus\Delta_{1}))=O((\beta_{1}d/n)\log(n\Pr_{\mathcal{D}}(\mathcal{X}\setminus\Delta_{1})/\beta_{1}^{2}d))=O((\beta_{1}d/n)\log(n/\beta_{1}^{2}d))$. This motivates us to set up our first resolution as $\beta_{1}=(n/d)^{(1-\alpha)/(2-\alpha)}/\log(n/d)$ in order to achieve $O((\beta_{1}d/n)\log(n/\beta_{1}^{2}d))=O((d/n)^{1/(2-\alpha)})$ excess risk on $(\mathcal{X}\setminus\Delta_{1})$. What remains is to design a scheduling of noise resolutions from $\beta_{1}$ to $\beta_{T}$ such that, all intermediate regions $\{(\Delta_{t-1}\setminus\Delta_{t}), 2\leq t\leq T\}$ satisfying $\Pr_{\mathcal{D}}(\{h\neq h^{*}_{\mathcal{D}}\}\cap(\Delta_{t-1}\setminus\Delta_{t}))\leq \beta_{t}(\Pr_{\mathcal{D}}(\ERin{h}\cap(\Delta_{t-1}\setminus\Delta_{t}))-\Pr_{\mathcal{D}}(\ERin{h^{*}_{\mathcal{D}}}\cap(\Delta_{t-1}\setminus\Delta_{t}))), \forall h\in\hpc$, together admit an optimal excess risk by running ERMs separately and then aggregrate. Our approach realizes this by setting $\beta_{t}=2^{t-1}\beta_{1}$ for every $1\leq t\leq \lceil\log\log(n/d)\rceil+1$. Indeed, for any such region $(\Delta_{t-1}\setminus\Delta_{t})$, Lemma~\ref{lem:excess-risk-under-linear-BCC} guarantees that $\Pr_{\mathcal{D}}(\ERin{\hat{h}_{t-1}}\cap(\Delta_{t-1}\setminus\Delta_{t}))-\Pr_{\mathcal{D}}(\ERin{h^{*}_{\mathcal{D}}}\cap(\Delta_{t-1}\setminus\Delta_{t}))=O((\beta_{t}d/n)\log(n\Pr_{\mathcal{D}}(\Delta_{t-1}\setminus\Delta_{t})/\beta_{t}^{2}d))$. Note that $\Pr_{\mathcal{D}}(\Delta_{t-1}\setminus\Delta_{t})\leq\Pr_{\mathcal{D}}(\Delta_{t-1})=O(\beta_{t-1}^{-\alpha/(1-\alpha)})$. This yields $O((\beta_{t}d/n)\log(n\Pr_{\mathcal{D}}(\Delta_{t-1}\setminus\Delta_{t})/\beta_{t}^{2}d))=O((\beta_{t}d/n)\log(n\beta_{t-1}^{-\alpha/(1-\alpha)}/\beta_{t}^{2}d))$. Now, we can bound the aggregated excess risk by
\begin{align*}
    &\sum_{t=2}^{T}\Pr_{\mathcal{D}}\left(\ERin{\hat{h}_{t-1}}\cap(\Delta_{t-1}\setminus\Delta_{t})\right)-\Pr_{\mathcal{D}}\left(\ERin{h^{*}_{\mathcal{D}}}\cap(\Delta_{t-1}\setminus\Delta_{t})\right) = O\left(\sum_{t=2}^{T}\frac{\beta_{t}d}{n}\log\left(\frac{n\beta_{t-1}^{-\alpha/(1-\alpha)}}{\beta_{t}^{2}d}\right)\right) \\
    = &O\left(\left(\frac{d}{n}\right)^{\frac{1}{2-\alpha}}\log^{-1}\left(\frac{n}{d}\right)\sum_{t=2}^{\lceil\log\log(n/d)\rceil+1}2^{t-1}\left(\frac{2-\alpha}{1-\alpha}\log\log\left(\frac{n}{d}\right)-\frac{t-2\alpha}{1-\alpha}\right)\right) \\
    = &O\left(\left(\frac{d}{n}\right)^{\frac{1}{2-\alpha}}\sum_{t=2}^{\lceil\log\log(n/d)\rceil+1}2^{t-T}\left(\frac{(2-\alpha)T}{1-\alpha}-\frac{t-2\alpha}{1-\alpha}\right)\right)\\
    = &O\left(\left(\frac{d}{n}\right)^{\frac{1}{2-\alpha}}\sum_{t=2}^{T}2^{t-T}\frac{T-t}{1-\alpha}\right) = O\left(\left(\frac{d}{n}\right)^{\frac{1}{2-\alpha}}\right) .
\end{align*}
Putting together, we can design a conceptual algorithm that outputs a hypothesis being a decision-list aggregation of ERM predictors, that is, $\hat{h}_{\mathcal{A}}=\sum_{t=1}^{T}\hat{h}_{t-1}\mathbbm{1}(\Delta_{t-1}\setminus\Delta_{t}) + \hat{h}_{T}\mathbbm{1}(\Delta_{T})$, where $\Delta_{0}=\mathcal{X}$. By the law of total probability, we can bound its excess risk by $\Pr_{\mathcal{D}}(\ERin{\hat{h}_{\mathcal{A}}})-\Pr_{\mathcal{D}}(\ERin{h^{*}_{\mathcal{D}}})=\sum_{t=1}^{T}(\Pr_{\mathcal{D}}(\ERin{\hat{h}_{t-1}}\cap(\Delta_{t-1}\setminus\Delta_{t}))-\Pr_{\mathcal{D}}(\ERin{h^{*}_{\mathcal{D}}}\cap(\Delta_{t-1}\setminus\Delta_{t})))+\Pr_{\mathcal{D}}(\ERin{\hat{h}_{T}}\cap\Delta_{T})-\Pr_{\mathcal{D}}(\ERin{h^{*}_{\mathcal{D}}}\cap\Delta_{T})=O((d/n)^{1/(2-\alpha)})$, where $O$ hides a constant that depends on the Tsybakov noise model parameter $(a,\alpha)$. As mentioned previously, this can be translated into a proper learner.

\subsection{Algorithmic design} 
  \label{subsec:algorithmic-design}
Recall that in our conceptual analysis above, the main idea is to adaptively separate the instance space $\mathcal{X}$ into regions satisfying $\Pr_{\mathcal{D}}(\{h\neq h^{*}_{\mathcal{D}}\}\cap(\Delta_{t-1}\setminus\Delta_{t})) \leq \beta_{t}(\Pr_{\mathcal{D}}(\ERin{h}\cap(\Delta_{t-1}\setminus\Delta_{t}))-\Pr_{\mathcal{D}}(\ERin{h^{*}_{\mathcal{D}}}\cap(\Delta_{t-1}\setminus\Delta_{t}))), \forall h\in\hpc$, with different noise levels $\{\beta_{t}, 1\leq t\leq T\}$. This is roughly saying that the distribution $\mathcal{D}$ satisfies a Massart noise condition with parameter $1/\beta_{t}$ on the region $(\Delta_{t-1}\setminus\Delta_{t})$ since a Massart noise condition with margin parameter $\gamma>0$ implies a Bernstein class condition with $a=1/\gamma$ and $\alpha=1$ (but not vice versa) \citep{boucheron2005theory,bousquet2021fast}. Our new algorithm, concretizes this conceptual idea via approximating those near-Massart's regions statistically, and is thus named ``\textbf{M}assart'' \textbf{E}rror \textbf{R}egions \textbf{I}solation under \textbf{T}sybakov noise (\textbf{MERIT}), which is presented as in \Cref{alg:main-algo}. Below, we illustrate several key insights regarding our algorithmic design.

A natural idea of directly approximating the ratio between the excess risk $(\Pr_{\mathcal{D}}(\ERin{h}\cap(\Delta_{t-1}\setminus\Delta_{t}))-\Pr_{\mathcal{D}}(\ERin{h^{*}_{\mathcal{D}}}\cap(\Delta_{t-1}\setminus\Delta_{t})))$ and the $L_{1}$ pseudo distance $\Pr_{\mathcal{D}}(\{h\neq h^{*}_{\mathcal{D}}\}\cap(\Delta_{t-1}\setminus\Delta_{t}))$. Algorithmically, this should be verified on their empirical estimates, i.e., to find the region $(\Delta_{t-1}\setminus\Delta_{t})$ such that $(\hat{\Pr}_{S_{n}}(\ERin{h}\cap(\Delta_{t-1}\setminus\Delta_{t}))-\hat{\Pr}_{S_{n}}(\ERin{\hat{h}^{*}_{t-1}}\cap(\Delta_{t-1}\setminus\Delta_{t}))) \geq \beta_{t}^{-1}\hat{\Pr}_{S_{n}}(\{h\neq \hat{h}^{*}_{t-1}\}\cap(\Delta_{t-1}\setminus\Delta_{t})), \forall h\in\hpc$, where we use an ERM $\hat{h}^{*}_{t-1}$ to approximate the Bayes classifier on $(\Delta_{t-1}\setminus\Delta_{t})$. Unfortunately, this approach does not work out. A quick intuition can be derived from the fact that one cannot guarantee $\hat{\Pr}_{S_{n}}(\{h\neq \hat{h}^{*}_{t-1}\}\cap(\Delta_{t-1}\setminus\Delta_{t}))-\Pr_{\mathcal{D}}(\{h\neq h^{*}_{\mathcal{D}}\}\cap(\Delta_{t-1}\setminus\Delta_{t}))=O(\Pr_{\mathcal{D}}(\{h\neq h^{*}_{\mathcal{D}}\}\cap(\Delta_{t-1}\setminus\Delta_{t})))$ for any $h\in\hpc$, and thus a triangle inequality might fail to bridge the boundednesses between $\hat{\Pr}_{S_{n}}(\{h\neq \hat{h}^{*}_{t-1}\}\cap(\Delta_{t-1}\setminus\Delta_{t}))$ and $\Pr_{\mathcal{D}}(\{h\neq h^{*}_{\mathcal{D}}\}\cap(\Delta_{t-1}\setminus\Delta_{t}))$. Inspired by this, our algorithm applies a remedy of picking out those pairs of nearly-optimal functions $f,g$ with $(\hat{\Pr}_{S_{n}}(\ERin{f}\cap(\Delta_{t-1}))-\hat{\Pr}_{S_{n}}(\ERin{g}\cap(\Delta_{t-1})))\leq\gamma_{t}$ that are far apart from the each other in a sense of $\hat{\Pr}_{S_{n}}(\{f \neq g\}\cap(\Delta_{t-1}))\geq\beta_{t}\gamma_{t}$. It is clear that after extracting the disagreement regions of them into $\Delta_{t}$, an empirical linear Bernstein class condition with parameter $\beta_{t}^{-1}$ holds on the remain region $(\Delta_{t-1}\setminus\Delta_{t})$, leading to a population linear Bernstein class condition with high probability. 

Furthermore, our algorithm design leverages an idea of gradually pruning the concept class, that is, adaptively choosing a decreasing schedule of $\gamma_{t}$'s and gradually pruning the surviving concept class $\hpc_{t-1}=\{h\in\hpc:\Pr_{\mathcal{D}}(\ERin{h}\cap\Delta_{t-1})-\Pr_{\mathcal{D}}(\ERin{h^{*}_{\mathcal{D}}}\cap\Delta_{t-1})\leq\gamma_{t}\}$. A similar idea has been adopted in \cite{hanneke2025agnostic}. This pruning schedule $\{\gamma_{t},1\leq t\leq T\}$ together with the scheduling of the noise resolutions $\{\beta_{t},1\leq t\leq T\}$ allow us to guarantee optimal excess risks using ERM by applying a strong uniform Bernstein inequality.\footnote{We would like to emphasize that all our presented concentration inequalities in Section~\ref{sec:concentration-inequalities} are direct implications of the uniform Bernstein inequality. In other words, using Lemma~\ref{lem:uniform-bernstein-inequality} does not degrade any bound compared to using Lemma~\ref{lem:excess-risk-under-linear-BCC}, which has been shown to provide optimal guarantee in Subsection~\ref{subsec:proof-of-concepts}.} It is worth mentioning that such types of inequalities have been established in the literature for standard i.i.d.\ data \citep{koltchinskii2006local,gine2006concentration,hanneke2025universal,hanneke2025agnostic}, as well as for non-i.i.d.\ data \citep{hanneke2022no}. In particular, our presenting Lemma~\ref{lem:uniform-bernstein-inequality} gives out its most general form for i.i.d.\ data. Technically, the decreasing resolutions $\beta_{t}$'s allow us to have tight bounds $\Pr_{\mathcal{D}}(\Delta_{t-1})=O(\beta_{t-1}^{-\alpha/(1-\alpha)})$ to plug into Lemma~\ref{lem:uniform-bernstein-inequality} rather than using trivial bounds $\Pr_{\mathcal{D}}(\Delta_{t-1})\leq 1$. Collaborating together with $\beta_{t}$'s, pruning the concept class with $\gamma_{t}$'s then yields tight bounds $\Pr_{\mathcal{D}}(\{h\neq h^{*}_{\mathcal{D}}\}\cap(\Delta_{t-1}\setminus\Delta_{t}))=O(\beta_{t}\gamma_{t})$ rather than $\Pr_{\mathcal{D}}(\{h\neq h^{*}_{\mathcal{D}}\})\leq 1$. To formalize it, since $\Pr_{\mathcal{D}}(\{h\neq h^{*}_{\mathcal{D}}\}\cap(\Delta_{t-1}\setminus\Delta_{t}))=O(\beta_{t}\gamma_{t})$, Lemma~\ref{lem:uniform-bernstein-inequality} yields that the excess risk of ERM $\hat{h}_{t-1}$\footnote{We emphasize that $\hat{h}_{t-1}$ and the previous $\hat{h}^{*}_{t-1}$ are ERMs trained on independent datasets and for different purposes in the algorithm. Therefore, we assign different notations to distinguish them.} trained on $S_{n}\cap(\Delta_{t-1}\setminus\Delta_{t})$ satisfies
\begin{equation*}
    \Pr_{\mathcal{D}}(\ERin{\hat{h}_{t-1}}\cap(\Delta_{t-1}\setminus\Delta_{t}))-\Pr_{\mathcal{D}}(\ERin{h^{*}_{\mathcal{D}}}\cap(\Delta_{t-1}\setminus\Delta_{t}))=O(\sqrt{(d\beta_{t}\gamma_{t}/n)\log(\Pr_{\mathcal{D}}(\Delta_{t-1}\setminus\Delta_{t})/(\beta_{t}\gamma_{t}))}) .
\end{equation*} 
Since $\beta_{t}=2^{t-1}(n/d)^{(1-\alpha)/(2-\alpha)}/\log(n/d)$ and $\Pr_{\mathcal{D}}(\Delta_{t-1}\setminus\Delta_{t})\leq\Pr_{\mathcal{D}}(\Delta_{t-1})=O(\beta_{t-1}^{-\alpha/(1-\alpha)})$, plugging into the above bound yields that 
\begin{align*}
    &\Pr_{\mathcal{D}}(\ERin{\hat{h}_{t-1}}\cap(\Delta_{t-1}\setminus\Delta_{t}))-\Pr_{\mathcal{D}}(\ERin{h^{*}_{\mathcal{D}}}\cap(\Delta_{t-1}\setminus\Delta_{t})) \\
    = &O(\sqrt{(d\beta_{t}\gamma_{t}/n)\log(\Pr_{\mathcal{D}}(\Delta_{t-1}\setminus\Delta_{t})/(\beta_{t}\gamma_{t}))}) \\
    = &O(\sqrt{(d/n)^{1/(2-\alpha)}2^{t-1}\log^{-1}(n/d)\gamma_{t}\log(\beta_{t-1}^{-\alpha/(1-\alpha)}/\beta_{t}\gamma_{t})}) .
\end{align*}
Now, choosing $\gamma_{t}=O((d/n)^{1/(2-\alpha)}(\log((n/d)^{\alpha/(2-\alpha)}\beta_{t-1}^{-\alpha/(1-\alpha)}))^{1/(2-\alpha)})$ would suffice to guarantee an optimal overall excess risk $O((d/n)^{1/(2-\alpha)})$ on $(\mathcal{X}\setminus\Delta_{T})$ by summing over $1\leq t\leq T$ following the same calculation as in Subsection~\ref{subsec:proof-of-concepts}. 

A natural question is whether we are allowed to produce such a pruning schedule $\{\gamma_{t},1\leq t\leq T\}$ algorithmically. In other words, we need to guarantee that such $\gamma_{t}$-pruned classes will not degenerate to empty throughout running the algorithm. Indeed, we are able to show that for any $1\leq t\leq T$, a pruned class $\hpc_{t-1}=\{h\in\hpc:\Pr_{\mathcal{D}}(\ERin{h}\cap\Delta_{t-1})-\Pr_{\mathcal{D}}(\ERin{h^{*}_{\mathcal{D}}}\cap\Delta_{t-1})\leq\gamma_{t}\}$ is attainable by simply running ERM on sample $S_{n}\cap\Delta_{t-1}$.\footnote{Note that we also need to approximate the unknown $h^{*}_{\mathcal{D}}$ on $\Delta_{t-1}$ by an ERM predictor $\hat{h}^{*}_{t-1}\in\mathrm{ERM}(\hpc;S_{n}\cap\Delta_{t-1})$ in the algorithm. Here, we ignore this estimation for the simplicity of presentation.} Note that it suffices to show that the magnitude of $\gamma_{t}$ is larger than the bound in Lemma~\ref{lem:excess-risk-under-tsybakov-noise}, which provides the optimal excess risk upper bound of ERM. This is clear since applying Lemma~\ref{lem:excess-risk-under-tsybakov-noise} yields a bound of $\mathcal{E}(\hpc;S_{n}\cap(\Delta_{t-1}\setminus\Delta_{t}))=O(((d/n)\log((n/d)^{\alpha/(2-\alpha)}\Pr_{\mathcal{D}}(\Delta_{t-1})))^{1/(2-\alpha)})=O(((d/n)\log((n/d)^{\alpha/(2-\alpha)}\beta_{t-1}^{-\alpha/(1-\alpha)}))^{1/(2-\alpha)})=O(\gamma_{t})$. We would like to emphasize that all concentration bounds used in the proof collaborate perfectly (we notice little to no freedom in tightening the analysis), and allow us to establish the proof recursively: in each round $1\leq t\leq T$, the uniform Bernstein inequality yields an upper bound on the excess risk of ERM trained on $S_{n}\cap(\Delta_{t-1}\setminus\Delta_{t})$ as
\begin{equation*}
    O\left(\sqrt{(d\Pr_{\mathcal{D}}(\hat{h}_{t-1}\neq h^{*}_{\mathcal{D}}\cap(\Delta_{t-1}\setminus\Delta_{t}))/n)\log(\Pr_{\mathcal{D}}(\Delta_{t-1}\setminus\Delta_{t})/\Pr_{\mathcal{D}}(\hat{h}_{t-1}\neq h^{*}_{\mathcal{D}}\cap(\Delta_{t-1}\setminus\Delta_{t})))}\right) ,
\end{equation*}
where $\Pr_{\mathcal{D}}(\hat{h}_{t-1}\neq h^{*}_{\mathcal{D}}\cap(\Delta_{t-1}\setminus\Delta_{t}))=O(\beta_{t}\gamma_{t})$ is realized via pruning and the bound $\Pr_{\mathcal{D}}(\Delta_{t-1}\setminus\Delta_{t})\leq\Pr_{\mathcal{D}}(\Delta_{t-1})=O(\beta_{t-1}^{-\alpha/(1-\alpha)})$ is guaranteed by the previous round noise resolution.

Finally, we describe how to allocate $n$ samples to meet all the requirements of the aforementioned estimations throughout the algorithm. The entire dataset $S_{n}$ is first partitioned into three disjoint subsets $S_{1},S_{2}$ and $S_{3}$ with $|S_1| = n/2$, and $|S_2|=|S_3|=n/4$. For each round $1\leq t\leq T$, we first allocate $n_{t}=2^{t-T}n/4$ samples $S_{1,t}\subset S_{1}$ (with $\bigcap_{1\leq t\leq T}S_{1,t}=\emptyset$) to estimate the pruned concept class $\hpc_{t-1}$ consisting of surviving concepts $h$ satisfying $\hat{\Pr}_{S_{1,t}}(\mathrm{ER}(h)\cap\Delta_{t-1}) \leq \hat{\Pr}_{S_{1,t}}(\mathrm{ER}(h^{*}_{\mathcal{D}})\cap\Delta_{t-1})+\gamma_{t}$.\footnote{Note that in the algorithm we approximate the unknown $h^{*}_{\mathcal{D}}$ on $\Delta_{t-1}$ by an ERM predictor $\hat{h}^{*}_{t-1}\in\mathrm{ERM}(\hpc;S_{1,t}\cap\Delta_{t-1})$. Here, we ignore this estimation for simplicity.} Note that this allocation satisfies $\sum_{t=1}^{T}n_{t}\leq\sum_{t=0}^{\infty}2^{-t}n/4\leq n/2$. Next, we need to isolate $\Delta_t$ from $\Delta_{t-1}$. Recall that this has been done via recursively extracting pairs of nearly-optimal functions $f,g\in\hpc_{t-1}$ having large pseudo-distance. For this purpose, we allocate samples $S_{2,t}\subset S_{2}$ to estimate the pseudo-distance between $f,g \in \hpc_{t-1}$ satisfying $\hat{\Pr}_{S_{2,t}}(\{f \neq g\}\cap(\Delta_{t-1}\setminus\Delta_{t}))\geq\beta_{t}\gamma_{t}$ and updating $\Delta_{t}$ which is initialized empty. Note that we will add the entire disagreement region $\{f\neq g\}$ to $\Delta_t$, implying that $\Pr_{\mathcal{D}}(\Delta_t)$ increases at least $\Omega(\beta_t\gamma_t)$ at each iteration. Hence, the recursive process will terminate in at most $s_{t}= O(\Pr_{\mathcal{D}}(\Delta_t)/ \beta_t\gamma_t) = O(\beta_{t}^{-1/(1-\alpha)}\gamma_{t}^{-1})$ steps. Moreover, to estimate the pseudo-distance within $O(\beta_t\gamma_t)$ error for any pair of functions, we need to use $m_{t}=O((d/\beta_{t}\gamma_{t})\log(1/\beta_{t}\gamma_{t}))$ fresh samples for each recursive step. Putting together, the total sample size is $\sum_{t=1}^{T}|S_{2,t}|=\sum_{t=1}^{T}s_{t}m_{t}=O(\sum_{t=1}^{T}d\beta_{t}^{-(2-\alpha)/(1-\alpha)}\gamma_{t}^{-2}\log(1/\beta_{t}\gamma_{t}))$. When setting $\beta_{t}=2^{t-T}(n_{t}/d)^{(1-\alpha)/(2-\alpha)}$ and $\gamma_{t}=O((d/n_{t})^{1/(2-\alpha)}(\log(2^{-t}\log(n_{t}/d)))^{1/(2-\alpha)})$, we have the following hold: (i) $\sum_{t=1}^{T}s_{t}m_{t}=O(\sum_{t=1}^{T}d\beta_{t}^{-(2-\alpha)/(1-\alpha)}\gamma_{t}^{-2}\log(1/\beta_{t}\gamma_{t}))=O(n)$. Indeed, when $n$ is sufficiently large, we can bound $\sum_{t=1}^{T}s_{t}m_{t}\leq n/4$; (ii) we are able to guarantee optimal excess risk of a decision-list aggregation of ERM predictors on regions $(\mathcal{X}\setminus\Delta_{1}),(\Delta_{1}\setminus\Delta_{2}),\ldots,(\Delta_{T-1}\setminus\Delta_{T})$ and $\Delta_{T}$ trained over fresh samples $S_{3}$. As discussed, it can be transformed into a proper learner.

\section{Optimal PAC Learner under Tsybakov Noise Condition}
  \label{sec:main-results}
In this section, we present the main results of this paper. The entire section is organized as follows. We first present our optimal learning algorithm as \Cref{alg:main-algo} below. Next, we state an upper bound on the excess risk of the predictor returned by \Cref{alg:main-algo} in \Cref{thm:optimal-rates-main}. It claims that for any general concept class $\mathcal{H}$ with $\vcdim=d$, any $\delta\in(0,1)$ and any distribution $\mathcal{D}$ satisfies a Tsybakov noise condition with respect to $\hpc$ with parameters $(a,\alpha)$, our algorithm reaches the optimal asymptotic guarantee on the excess risk. To describe and prove the bound, we need to define the following constants, including numerical constants and constants related to parameters $a, \alpha$. Let $c_0$ be a numerical constant from the uniform Bernstein inequality, $c_1 = 1/(2^{\frac{1-\alpha}{2-\alpha}}+ 2^{-\frac{1-\alpha}{2-\alpha}}-2)$, $a' = (1-\alpha)(2\alpha)^{\alpha/(1-\alpha)}a^{1/(1-\alpha)}$, $c = 16ac_0^2 +2c_0+2$, $c_2 = 2^{(1+2\alpha)/(1-\alpha)}a'c^{\alpha/(1-\alpha)}+1$, $C_1 = c_2/4 + 2c_0a^{1/(2-\alpha)}$ and $C_2 = 4c_0C_1/(1-\alpha)$. 
We include the proof of \Cref{thm:optimal-rates-main} in the rest of this section, based on the correctness of certain technical lemmas. We also provide high-level ideas for proving those technical lemmas, where their formal proofs are deferred to Appendix~\ref{sec:omitted-proofs}.

In the algorithm, several parameters need to be defined in advance. First, the number of rounds of the \textbf{for} loop, which is $T = \lceil\log\log(n/d)\rceil+1$. Then, on each round $1\leq t\leq T$, we need to define the confidence, $\delta_t$, and the number of samples required to estimate the excess risk, $n_t$, based on the input of confidence, $\delta$, and sample size, $4n$, $\delta_{t}=(\delta/5)^{T+1-t}$, $n_t = n 2^{t-T}$. Then we can define 
\begin{align*}
    a_t&=2^{t-T}\left(\frac{n_t}{d+\log\left(1/\delta\right)}\right)^{\frac{1-\alpha}{2-\alpha}}, \\
    \gamma_t&=C_1\left(\frac{d}{n_t}\log\left(\left(\frac{n_t}{d+\log\left(1/\delta\right)}\right)^{\frac{\alpha}{2-\alpha}}(a_{t-1})^{-\frac{\alpha}{1-\alpha}}\right)+\frac{1}{n_t}\log\left(\frac{2}{\delta_{t}}\right)\right)^{\frac{1}{2-\alpha}}, \\
    m_t &= \frac{8c_0^2}{a_t\gamma_t}\left(d\log\left(\frac{1}{a_t\gamma_t}\right)+\log\left(\frac{2}{\delta_t}\right)\right) ,\\
    e_t &= C_2 \left(\frac{1}{2}\right)^{\frac{1-\alpha}{2-\alpha}(T-t)}(T-t)\left(\frac{1}{n}\left(d+\log\left(\frac{1}{\delta}\right)\right)\right)^{\frac{1}{2-\alpha}}.
\end{align*}
After that, we can provide our learning algorithm as follows. 
\begin{algorithm}[H]
\caption{``\textbf{M}assart'' \textbf{E}rror \textbf{R}egions \textbf{I}solation under \textbf{T}sybakov noise (\textbf{MERIT})}
\label{alg:main-algo}
\begin{algorithmic}[1]
    \Require Sample $S_{4n}=\{(x_{i}, y_{i})\}_{i=1}^{4n}\sim\mathcal{D}^{4n}$, Tsybakov noise parameter $(a,\alpha)$, VC dimension $d$, confidence $\delta\in(0,1)$.
    \State $\Delta_{0} \gets \mathcal{X}$.
    \State $S_3 \gets n$ fresh samples from $S_{4n}$.
    \For{$t = 1,\ldots,T$}
        \State $\Delta_{t} \gets \emptyset$.
        \State $S_{1,t}\leftarrow n_t$ fresh samples from $S_{4n}$.
        \State $\hat{h}^{*}_{t-1}\gets$ any $h\in\mathrm{ERM}(\hpc;S_{1,t}\cap\Delta_{t-1})$.
        \State $\hpc_{t-1} \gets \{h\in\hpc: \hat{\Pr}_{S_{1,t}}(\mathrm{ER}(h)\cap\Delta_{t-1}) \leq \hat{\Pr}_{S_{1,t}}(\mathrm{ER}(\hat{h}^{*}_{t-1})\cap\Delta_{t-1}) + \gamma_{t}\}$.
        \State $S_{2,t}\gets m_t$ fresh data from $S_{4n}$.
        \While{$\exists f,g\in\hpc_{t-1}$, such that $\hat{\Pr}_{S_{2,t}}\left(\{f\neq g\}\cap\left(\Delta_{t-1}\setminus\Delta_{t}\right)\right) \geq a_{t}\gamma_{t}$}
            \State $\Delta_{t} \leftarrow \Delta_{t}\cup\{f\neq g\}$.
            \State $S_{2,t}\gets m_t$ fresh samples from $S_{4n}$.
        \EndWhile
        \State $\hat{h}_{t-1}\gets$ any $h\in\mathrm{ERM}(\hpc_{t-1};S_{3}\cap(\Delta_{t-1}\setminus\Delta_{t}))$.
        \State $\hat{\hpc}_{t-1} \gets \{h\in \hpc_{t-1}:\hat{\Pr}_{S_{3}}(\mathrm{ER}(h)\cap(\Delta_{t-1}\setminus\Delta_t)) \leq \hat{\Pr}_{S_{3}}(\mathrm{ER}(\hat{h}_{t-1})\cap(\Delta_{t-1}\setminus\Delta_t)) + e_{t-1}\}$.
    \EndFor
    \State $\hat{h}_{T}\gets$ any $h\in\mathrm{ERM}(\hpc;S_{3}\cap\Delta_{T})$.
    \State $\hat{\hpc}_{T} \gets \{h\in \hpc:\hat{\Pr}_{S_{3}}(\mathrm{ER}(h)\cap\Delta_{T}) \leq \hat{\Pr}_{S_{3}}(\mathrm{ER}(\hat{h}_{T})\cap\Delta_{T}) + C_2(d/n+\log(1/\delta)/n)^{1/(2-\alpha)}\}$.
    \Ensure $h_{\text{ALG}}\gets$ any $h$ in $\bigcap_{t=0}^{T}\hat{\hpc}_{t}$.
\end{algorithmic}
\end{algorithm}

The main idea of this algorithm is to decompose the instance space $\mathcal{X}$ into $T+1$ different regions, which are $\mathcal{X}\setminus\Delta_1, \ldots,\Delta_{T-1}\setminus\Delta_T$ and $\Delta_T$, such that on every region $\Delta_{t-1}\setminus\Delta_{t}$, the pseudo-distance between any two concepts is bounded. To reach this goal, we directly isolate the disagreement region between two concepts from $\Delta_{t-1}$ if the pseudo-distance between them is large. 

We also notice that all concepts in $\hpc_{t-1}$ have a low excess risk on the region $\Delta_{t-1}$. Therefore, if two concepts in $\hpc_{t-1}$ have a large pseudo-distance, the noise level at their disagreement region will be high. Then, because the distribution $\mathcal{D}$ satisfies the Tsybakov noise condition, the size of the region $\Delta_t$ is also bounded. This separation process is shown as the \textbf{while} loop in \Cref{alg:main-algo}. We also notice that because the size of $\Delta_t$ is upper-bounded, and the size of the region added to $\Delta_t$ in each round is lower-bounded. The number of pairs of concepts with a large pseudo-distance, $s_t$, is bounded. 

In the meantime, as we use the refined uniform Bernstein inequality (\Cref{lem:uniform-bernstein-inequality}) to get the accuracy of our estimation of the excess risk and pseudo-distance on different regions, the separation of the regions should be independent of the samples used for estimation. Therefore, at every round $t$, we need fresh samples to finish the following estimations: 1) use $n_t$ fresh samples to estimate the excess risk, and 2) after every update of $\Delta_t$, use $m_t$ fresh samples to estimate the pseudo-distance between two concepts. We also notice that $\Delta_t$ will only be updated at most $s_t$ times. At last, we require $n$ new samples to run ERM on each region and get the output. Combining all those facts above, we have an upper bound on the number of required samples, $N$, which is linear in $n$. This is shown as \Cref{lem:number-of-samples}.

After the region separation, we can apply the refined uniform Bernstein inequality to each region separately and show that, with probability at least $1-3\delta/5$, the following statements hold.
\begin{itemize}
    \item For every $t = 0,1,\ldots,T-1$, for every $h\in\hat{\hpc}_t$, $\Pr_{\mathcal{D}}(\ERin{h}\cap(\Delta_t\setminus\Delta_{t+1})) - \Pr_{\mathcal{D}}(\ERin{h^*_{\mathcal{D}}}\cap(\Delta_t\setminus\Delta_{t+1})) \leq e_t$.
    \item For every $h\in\hat{\hpc}_T$, $\Pr_{\mathcal{D}}(\ERin{h}\cap\Delta_T) - \Pr_{\mathcal{D}}(\ERin{h^*_{\mathcal{D}}}\cap\Delta_T) \leq 2C_2((d+\log(1/\delta))/n)^{\frac{1}{2-\alpha}}$.
    \item For every $t = 0,1,\ldots,T$, $h^*_{\mathcal{D}}\in\hat{\hpc}_t$.
\end{itemize}
From the last bullet, we know that $h^*_{\mathcal{D}}\in \bigcap_{t=0}^{T}\hat{\hpc}_{t}$, therefore, we can always generate such a concept $h$. From the first bullet and the definition of $e_t$ according to \Cref{alg:main-algo}, we know that $\sum_{t = 0}^{T-1} e_t \leq c_1C_2 ((d+\log(1/\delta))/n)^{1/(2-\alpha)}$. Combining with the second bullet, we know that the excess risk of the chosen $h$ is in $O(((d+\log(1/\delta))/n)^{1/(2-\alpha)})$, with probability at least $1-3\delta/5$, that is the optimal rates we are aiming for. 

Along the way to proving the main theorem, several high-probability bounds on the estimation of the population errors and the pseudo-distance between two functions are required, shown as \Cref{lem:prune}, \Cref{lem:uniform-chernoff-pseudo-distance-estimation}, and \Cref{lem:approximate-linear-BCC}. All of those lemmas are based on the (refined) uniform Bernstein inequality, proved as \Cref{lem:uniform-bernstein-inequality}.

After explaining the main ideas of our algorithm, we can now start the formal proofs. First, we formally provide the main theorem as follows.
\begin{theorem}
\label{thm:optimal-rates-main}
    Let $\hpc$ be a concept class with $\vcdim=d$. Let $\mathcal{D}$ be a distribution that satisfies a Tsybakov noise condition with parameter $(a,\alpha)$ with respect to $\hpc$. Let $\delta\in(0,1)$. For any $n =\Omega((d+\log(1/\delta))(4(2-\alpha)/(1-\alpha))^{(8(2-\alpha)/(1-\alpha))})$, let $S_n=\{(x_i,y_i)\}_{i=1}^{n}\sim\mathcal{D}^{n}$ be i.i.d.\ samples. With probability at least $1-\delta$, the predictor $h_{\mathrm{ALG}}$ returned by \Cref{alg:main-algo} satisfies
    \begin{equation}
        \Pr_{\mathcal{D}}\left(\ERin{h_{\mathrm{ALG}}}\right) - \Pr_{\mathcal{D}}\left(\ERin{h^{*}_\mathcal{D}{}}\right) \leq 2(c_1+1)C_2\left(\frac{1}{n}\left(d+\log\left(\frac{1}{\delta}\right)\right)\right)^{\frac{1}{2-\alpha}} , 
    \end{equation}
    where $c_1, C_2$ are constants defined at the beginning.
\end{theorem}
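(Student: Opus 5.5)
The plan is to prove \Cref{thm:optimal-rates-main} by induction over the rounds $t=1,\ldots,T$ of \Cref{alg:main-algo}. We maintain a good event $G_t$, of probability at least $1-O(\delta_t)$, on which four things hold.

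\emph{(i) Survival of the Bayes classifier.} We have $h^*_{\mathcal{D}}\in\hpc_{t-1}$, and every $h\in\hpc_{t-1}$ satisfies $\Pr_{\mathcal{D}}(\ERin{h}\cap\Delta_{t-1})-\Pr_{\mathcal{D}}(\ERin{h^*_{\mathcal{D}}}\cap\Delta_{t-1})\lesssim\gamma_t$.

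\emph{(ii) Size of the noisy region.} $\Pr_{\mathcal{D}}(\Delta_{t})\lesssim a_{t}^{-\alpha/(1-\alpha)}$.

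\emph{(iii) Linear Bernstein condition on the isolated region.} $\Pr_{\mathcal{D}}(\{h\neq h^*_{\mathcal{D}}\}\cap(\Delta_{t-1}\setminus\Delta_t))\lesssim a_t\gamma_t$ for all $h\in\hpc_{t-1}$.

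\emph{(iv) Termination of the while loop.} The loop stops after at most $s_t\lesssim a_t^{-1/(1-\alpha)}\gamma_t^{-1}$ iterations.

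\emph{Step (i).} Condition on $\Delta_{t-1}$, which is independent of $S_{1,t}$. Apply the ERM bound under Tsybakov noise restricted to $\Delta_{t-1}$ (\Cref{lem:prune}, a consequence of \Cref{lem:uniform-bernstein-inequality}). The localization uses $\Pr_{\mathcal{D}}(\Delta_{t-1})\lesssim a_{t-1}^{-\alpha/(1-\alpha)}$ from the previous round, so the log factor is exactly the one built into $\gamma_t$. This gives $h^*_{\mathcal{D}}\in\hpc_{t-1}$, since its empirical excess over $\hat h^*_{t-1}$ is below $\gamma_t$, and it gives the population bound $O(\gamma_t)$ for every member.

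\emph{Steps (ii) and (iv).} Each pair $f,g$ added in the while loop has empirical disagreement at least $a_t\gamma_t$ on fresh $m_t$ samples. By \Cref{lem:uniform-chernoff-pseudo-distance-estimation}, its population disagreement on the current $\Delta_{t-1}\setminus\Delta_t$ is at least $a_t\gamma_t/2$. At the same time, both $f$ and $g$ have excess risk $O(\gamma_t)$ on $\Delta_{t-1}$, and this implies
\[
\Pr\bigl(|\eta-1/2|\text{ small}\bigr)\gtrsim \Pr_{\mathcal{D}}(\{f\neq g\})\quad\text{on }\{f\neq g\}.
\]
The Tsybakov-margin argument (as in Lemma~\ref{lem:upper-bound-of-p-Delta}) then applies. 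Points of $\Delta_t$ where $|\eta-1/2|>1/a_t$ contribute excess risk at least $\Pr/a_t$. Points where $|\eta-1/2|\le 1/a_t$ have mass at most $a'a_t^{-\alpha/(1-\alpha)}$. Charging the first kind against the accumulated $O(\gamma_t)$ excess risk gives (ii).

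The loop count $s_t$ follows because every iteration adds mass at least $a_t\gamma_t/2$. A union bound over at most $s_t$ iterations, each with failure probability $\delta_t$, is absorbed by the choice $\delta_t=(\delta/5)^{T+1-t}$ and the $\log(2/\delta_t)$ term in $m_t$.

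\emph{Step (iii).} When the loop stops, no pair has empirical disagreement at least $a_t\gamma_t$. Applying \Cref{lem:approximate-linear-BCC} to the final, independent $S_{2,t}$ turns this into population disagreement $O(a_t\gamma_t)$. Applying it with $h^*_{\mathcal{D}}\in\hpc_{t-1}$ as one member of the pair gives (iii).

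\emph{Step (v): per-region excess risk.} Now use $S_3$, which is independent of all the $\Delta$'s. Apply \Cref{lem:uniform-bernstein-inequality} on $\Delta_{t-1}\setminus\Delta_t$ restricted to $\hpc_{t-1}$, with variance proxy $O(a_t\gamma_t)$ and region mass $O(a_{t-1}^{-\alpha/(1-\alpha)})$. Every $h\in\hat\hpc_{t-1}$ then has regional excess risk at most $e_{t-1}$, and $h^*_{\mathcal{D}}\in\hat\hpc_{t-1}$.

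The arithmetic showing $\sqrt{(d a_t\gamma_t/n)\log(\cdot)}\le e_{t-1}$ mirrors Section~\ref{subsec:algorithmic-design}. On $\Delta_T$, use $\Pr_{\mathcal{D}}(\Delta_T)\lesssim a_T^{-\alpha/(1-\alpha)}$ with the agnostic-type bound $\sqrt{(d/n)\Pr(\Delta_T)}$ to get excess risk $2C_2((d+\log(1/\delta))/n)^{1/(2-\alpha)}$.

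\emph{Conclusion.} Summing over regions, the geometric series $\sum_t e_t\le c_1C_2(\cdot)^{1/(2-\alpha)}$ (using the factor $(1/2)^{\frac{1-\alpha}{2-\alpha}(T-t)}(T-t)$ and the definition of $c_1$) gives the stated bound. Since $h^*_{\mathcal{D}}$ lies in every $\hat\hpc_t$, the output $h_{\mathrm{ALG}}$ exists.

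Finally, \Cref{lem:number-of-samples} checks that $\sum_t n_t+\sum_t s_tm_t+n\le 4n$; the lower bound on $n$ in the theorem exists exactly to make $\sum_t s_tm_t\le n$ hold. Union-bounding all failure events, $\sum_t O(\delta_t)\le 3\delta/5$, gives overall probability at least $1-\delta$.

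\textbf{Main obstacle.} The expected bottleneck is step (ii): converting ``two near-optimal concepts disagree a lot'' into a bound on $\Pr(\Delta_t)$ of order $a_t^{-\alpha/(1-\alpha)}$ through the Tsybakov condition. This has to hold while $\Delta_t$ is a union of several disagreement regions whose total excess risk must be controlled collectively, not pairwise. I would first try to bound everything by the total excess $\sum$ over added pairs relative to $h^*_{\mathcal{D}}$. Each added region has excess risk at least $\Pr(\text{region})/a_t$ minus its low-margin part, and all of these are nested inside the $O(\gamma_t)$ excess of $f$ or $g$ restricted to $\Delta_{t-1}$.
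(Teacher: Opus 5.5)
Your outline is the paper's proof. Your $G_t$ is the event $\mathcal{E}_t$, and step (i) is \Cref{lem:prune}. Steps (iii) and (iv) rest on \Cref{lem:uniform-chernoff-pseudo-distance-estimation}; that is the lemma you need in step (iii), whereas \Cref{lem:approximate-linear-BCC} is the paper's entire inductive step $\mathcal{E}_{t-1}\Rightarrow\mathcal{E}_t$. Step (v), the $\Delta_T$ bound, the geometric sum and \Cref{lem:number-of-samples} are the paper's final computation. The one step that does not close as written is (ii), the one you flag as the main obstacle. You cannot charge the high-margin part of $\Delta_t$ against ``the accumulated $O(\gamma_t)$ excess risk''. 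The pairs $(f_j,g_j)$ change from iteration to iteration, each has its own budget $2c\gamma_t$ on $\Delta_{t-1}$, and there may be $s_t$ of them. And with threshold $1/a_t$, even charging each pair separately only gives $\Pr_{\mathcal{D}}(P_j\cap\{|\eta_{\mathcal{D}}-1/2|>1/a_t\})\le 2c\,a_t\gamma_t$. This does not beat the guaranteed mass $\Pr_{\mathcal{D}}(P_j)\ge a_t\gamma_t/2$ of the piece $P_j$ that pair contributes, so no lower bound on the low-margin mass follows.

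What closes it, and what \Cref{lem:upper-bound-of-p-Delta} actually does, is to control each piece separately rather than collectively, with the threshold raised to a constant multiple of $1/a_t$. Let $P_j=\{f_j\neq g_j\}\cap(\Delta_{t-1}\setminus\Delta_t^{(j-1)})$ be the disjoint pieces added to $\Delta_t$; this reads the update, as the paper's proof implicitly does, as adding $\{f\neq g\}\cap\Delta_{t-1}$. On $P_j$ exactly one of $f_j,g_j$ disagrees with $h^*_{\mathcal{D}}$, and pointwise excess risk is nonnegative, so
\[
\int_{P_j}|2\eta_{\mathcal{D}}(x)-1|\,d\mathcal{D}_X(x)\le 4c\gamma_t\le\frac{8c}{a_t}\Pr_{\mathcal{D}}(P_j).
\]
By Markov's inequality at least half of $P_j$ lies in $\{|\eta_{\mathcal{D}}-1/2|\le 8c/a_t\}$. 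Summing over the disjoint pieces gives $\Pr_{\mathcal{D}}(\Delta_t)\le 2\Pr_{\mathcal{D}}(|\eta_{\mathcal{D}}-1/2|\le 8c/a_t)\le 2a'(8c)^{\alpha/(1-\alpha)}a_t^{-\alpha/(1-\alpha)}$. This is exactly the paper's bound with $\gamma=2c\gamma_t$, $B=a_t\gamma_t/2$ and threshold $2\gamma/B$. Applying it directly to the pieces the algorithm actually selects is, if anything, cleaner than the paper's route through the arg-max population procedure and the containment $\Delta_t\subseteq\Delta_t'$.

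Separately, the schedule $\delta_t=(\delta/5)^{T+1-t}$ does not by itself absorb a union bound over the up to order $2^{(T-t)/(1-\alpha)}$ fresh copies of $S_{2,t}$ used in round $t$. The resulting bound $\sum_t s_t\delta_t$ is not $O(\delta)$ once $2^{1/(1-\alpha)}\delta>5$. This needs an extra $\log s_t$ in the confidence of each test, which is cheap since $s_t$ is polylogarithmic in $n$; the paper's proof passes over this point as well.
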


\begin{remark}
We would like to emphasize that our algorithm does not serve as a base for designing potential optimal learners under the realizable setting. Specifically, taking $\alpha\rightarrow1$ to approximate the realizable case, our algorithm results in running a single ERM on the entire $\mathcal{X}$, which is, of course, sub-optimal. This is thus compatible with the fact that optimal rates cannot be admitted by any proper learner for the realizable case. Also, one should not hope to apply our algorithm, which is in its nature proper, to the agnostic setting to achieve the optimal first-order excess risk bound, since this has been proven to be only attainable by improper learners \cite{hanneke2024revisiting}. However, we conjecture that our techniques might be useful to obtain an optimal lower-order term in that first-order excess risk bound (see a further discussion in Section~\ref{sec:conclusion}).
\end{remark}

As we mentioned before, the algorithm consists of two parts. One is the process of space separation, and the other is generating a good concept as the output. To simplify the proof, we divide the proof into two parts as well. We first focus on the proof of the property of the good concept under the assumption that the space separation part works as desired, and then prove that the space separation actually works as desired. 

We first define the following good events, which describe the goal of the space separation process as follows. For each round $t\in[T]$ in the \textbf{for} loop of \Cref{alg:main-algo}, we define the event $\mathcal{E}_{t}$ as, on termination of the \textbf{while} loop in round $t$, the following holds:
\begin{itemize}
    \item $\Pr_{\mathcal{D}}(\mathrm{ER}(h)\cap\Delta_{t-1}) - \Pr_{\mathcal{D}}(\mathrm{ER}(h^{*}_{\mathcal{D}})\cap\Delta_{t-1})\leq 2c\gamma_t, \;\; \forall h\in\hpc_{t-1}$.
    \item $\Pr_{\mathcal{D}}(\{f \neq g\}\cap(\Delta_{t-1}\setminus\Delta_{t})) \leq 2a_t\gamma_t, \;\; \forall f,g\in\hpc_{t-1}$.
    \item $\Pr(\Delta_t) \leq c_2(a_t)^{-\frac{\alpha}{1-\alpha}}$.
    \item $s_t \leq 2c_2 a_t^{-\frac{1}{1-\alpha}} \gamma_t^{-1}$, where $s_t$ is the number of pairs $f,g$ found on the termination of \textbf{while} loop.
    \item $h^*_{\mathcal{D}} \in \hpc_{t-1}$.
\end{itemize}
Here, $c, c_2$ are constants defined as above.

For the simplicity of the proof, we first build our main results under the assumption that $\mathcal{E}_t$ happens for all $t$, then we prove that $\mathcal{E}_t$ happens for all $t$ with high probability. After that, the union bound yields the final result. 

Before diving into the proof of \Cref{thm:optimal-rates-main}, we first show that the total number of fresh samples required by \Cref{alg:main-algo} is $4n$ under the assumption that $\mathcal{E}_t$ happens for all $t$, which is stated as the following lemma.

\begin{lemma}
\label{lem:number-of-samples}
    For any $n = \Omega((d+\log(1/\delta))(4(2-\alpha)/(1-\alpha))^{(8(2-\alpha)/(1-\alpha))})$, the total number of fresh samples required by \Cref{alg:main-algo} is at most $4n$, if $\mathcal{E}_t$ happens for all $t$.
\end{lemma}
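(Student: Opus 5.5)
The total sample count splits into three parts: $|S_3|=n$, $\sum_{t=1}^T n_t$, and $\sum_{t=1}^T |S_{2,t}|$. I will show the second is at most $2n$ and the third at most $n$, giving $4n$ in total.

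\textbf{The $S_1$ samples.} Since $n_t=n2^{t-T}$,
\begin{equation*}
\sum_{t=1}^T n_t\le n\sum_{k\ge0}2^{-k}=2n .
\end{equation*}

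\textbf{The $S_2$ samples: counting.} In round $t$ the \textbf{while} loop draws one batch of $m_t$ fresh samples at initialization and one more after each of the $s_t$ updates. So round $t$ uses $(s_t+1)m_t$ samples. Under $\mathcal{E}_t$ we have $s_t\le 2c_2a_t^{-1/(1-\alpha)}\gamma_t^{-1}$, and since $a_t\gamma_t\le1$ for large $n$,
\begin{equation*}
(s_t+1)m_t\lesssim a_t^{-\frac{2-\alpha}{1-\alpha}}\gamma_t^{-2}\Big(d\log\tfrac{1}{a_t\gamma_t}+\log\tfrac{2}{\delta_t}\Big).
\end{equation*}
Write $D=d+\log(1/\delta)$. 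Plugging in $a_t=2^{t-T}(n_t/D)^{\frac{1-\alpha}{2-\alpha}}$ gives
\begin{equation*}
a_t^{-\frac{2-\alpha}{1-\alpha}}=2^{(T-t)\frac{2-\alpha}{1-\alpha}}\,\frac{D}{n_t}.
\end{equation*}
Using only $\gamma_t\ge C_1(\log(2/\delta_t)/n_t)^{1/(2-\alpha)}$ is too weak. Instead I use the $d$-term of $\gamma_t$ together with $\log(1/\delta)\le\log(2/\delta_t)$, which gives $\gamma_t\ge C_1(D/n_t)^{1/(2-\alpha)}L_t^{1/(2-\alpha)}$. Here $L_t\ge1$ is the logarithmic factor in $\gamma_t$, after dividing the $\log(2/\delta_t)=(T+1-t)\log(5/\delta)$ part by $D$. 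Hence
\begin{equation*}
\gamma_t^{-2}\le C_1^{-2}(n_t/D)^{2/(2-\alpha)} .
\end{equation*}

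\textbf{The $S_2$ samples: exponents.} Combining these bounds, the round-$t$ cost is at most
\begin{equation*}
2^{(T-t)\frac{2-\alpha}{1-\alpha}}\,(n_t/D)^{\frac{2}{2-\alpha}-1}\cdot\big(\text{log factors}\big)\cdot O(1).
\end{equation*}
Since $\frac{2}{2-\alpha}-1=\frac{\alpha}{2-\alpha}<1$, the cost is $n_t^{\alpha/(2-\alpha)}D^{(2-2\alpha)/(2-\alpha)}$ times $2^{(T-t)\frac{2-\alpha}{1-\alpha}}$ times logarithms. The log factors are $\log(1/(a_t\gamma_t))=O(\log(n/D))$ and $\log(2/\delta_t)/D\le T+1$.

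Now bound everything by $T$. We have $2^{T-t}\le 2^{T}\le 4\log(n/d)$, and $n_t\le n$. So each round costs at most
\begin{equation*}
O\Big(D\,(n/D)^{\alpha/(2-\alpha)}(\log(n/d))^{\frac{2-\alpha}{1-\alpha}+1}\,T\Big),
\end{equation*}
and summing over the $T$ rounds adds one more factor of $T$. This is $n\cdot (D/n)^{\frac{2-2\alpha}{2-\alpha}}\cdot\mathrm{polylog}(n/D)$, which is at most $n$ once $n/D$ is large enough.

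\textbf{Main obstacle.} The delicate step is this final inequality
\begin{equation*}
x^{-\frac{2-2\alpha}{2-\alpha}}\big(\log x\big)^{O(\frac{2-\alpha}{1-\alpha})}\le 1/K, \qquad x=n/D,
\end{equation*}
for an explicit constant $K$ depending on $c_0,c_2,C_1$. Let $p=\frac{2-\alpha}{1-\alpha}$. The inequality requires roughly $x^{2/p}\ge(\log x)^{2p}$. The standard fact that $x\ge(4p)^{4p}$-type thresholds make $(\log x)^{p}\le x^{1/2}$ hold gives exactly the stated condition $n=\Omega(D(4p)^{8p})$. The constants $K$ are absorbed into the $\Omega$.

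I will also handle $d$ and $\log(1/\delta)$ in $m_t$ separately. This makes the $\delta_t$ term contribute at most a factor $T+1$, rather than a larger power of $\log(1/\delta)$.
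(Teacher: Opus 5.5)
Your route is the paper's: $|S_3|=n$, $\sum_t n_t\le 2n$, and $|S_2|$ is bounded by the $\mathcal{E}_t$ bound on $s_t$ times $m_t$. This reduces to $n(D/n)^{2/p}$ times a polylogarithmic factor, with $p=(2-\alpha)/(1-\alpha)$, and is followed by a ``large $n$'' inequality.

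In two places your bookkeeping is more careful than the paper's. You count the initial $S_{2,t}$ batch, which the paper omits. You also keep the factor $2^{(T-t)p}$. The paper instead bounds the combined exponent $(4-5\alpha+2\alpha^2)/(2-3\alpha+\alpha^2)=p-\alpha/(2-\alpha)$ by $3$. That bound fails for $\alpha\ge 2-\sqrt{2}$, and the exponent tends to $\infty$ as $\alpha\to1$. The false bound is what gives the paper a polylog of fixed degree $8$, hence its threshold $(4p)^{8p}$ from $8\log\log x\le(2/p)\log x$.

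The gap is in the step you flag as the main obstacle. With your own count the polylog has degree $\Theta(p)$: roughly $(\log x)^{p+1}T^2$, with a $4^p$ hidden in your $O(\cdot)$. So with $x=n/D$ you need $x^{2/p}\ge K\cdot 4^p(\log x)^{p+1}T^2$, which forces $\log x\gtrsim p^2\log p+p\log K$.

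Your reduction does not give this. The inequality $x^{2/p}\ge(\log x)^{2p}$ is equivalent to $(\log x)^p\le x^{1/p}$, not to $(\log x)^p\le x^{1/2}$. It therefore needs $\log x\ge p^2\log\log x$. But $n\ge D(4p)^{8p}$ only guarantees $\log x\ge 8p\log(4p)$, which is too small once $p$ is large ($\alpha$ near $1$). So you do not recover the stated threshold.

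What your argument does prove is the bound for $n\ge C(a,\alpha)D$ with $\log C(a,\alpha)\asymp p^2\log p+p\log K$. This agrees with the lemma only if its $\Omega$ hides $(a,\alpha)$-dependent factors. It must anyway, since $K$ contains $c_2=c_2(a,\alpha)$, and the paper likewise needs $\log(n/D)\ge 16c_0^2c_2\alpha/(2-\alpha)$. State the threshold that way rather than claiming an exact match.

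There is a second gap, which the paper shares. Your polylogarithm comes from $2^T\le 4\log(n/d)$, so it is in $\log(n/d)$, yet you silently write $\mathrm{polylog}(n/D)$. The paper does the same via ``$T<2\log\log(n/D)$''. This is harmless if, e.g., $D^2\le nd$, since then $\log(n/d)\le 2\log(n/D)$. But when $\log(1/\delta)\gg d$, the factor $(\log(n/d))^{\Theta(p)}$ is not controlled by $n/D$. Then no threshold of the form $n\ge C(a,\alpha)D$ suffices along this route, so you need such an assumption or a threshold that also depends on $\log(D/d)$.

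Finally, counting the initial batch is right, but $a_t\gamma_t\le 1$ alone does not absorb it. Absorbing it would need $a_t^{-1/(1-\alpha)}\gamma_t^{-1}\gtrsim 1$, which can fail for $t$ near $T$. Simply bound $\sum_t m_t\lesssim n(D/n)^{2/p}\,\mathrm{polylog}$ separately.
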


To prove this lemma, we need to bound the number of fresh samples required by the algorithm. Note that the algorithm requires three main groups of fresh samples, denoted as $S_1$, $S_2$, and $S_3$. $S_1$ contains the samples used to estimate the excess risk during the space separation process, and the number of samples required is $\sum_{t= 1^T} n 2^{t-T} \leq 2n$. $S_2$ contains the samples used to estimate the pseudo-distance between two concepts during the space separation process. At round $t$, as $\mathcal{E}_t$ is defined, there are at most $s_t$ updates of $\Delta_t$ and every update will consume $m_t$ fresh samples. Therefore, the number of samples in $S_2$ is upper-bounded by $\sum_{t = 1}^T m_t s_t$. By careful computation, it is smaller than or equal to $n$, if $n = \Omega((d+\log(1/\delta))(4(2-\alpha)/(1-\alpha))^{(8(2-\alpha)/(1-\alpha))})$. At last, $S_3$ contains the fresh samples required for generating the good concept after the space separation process, the number of which is $n$. Combining all of those bounds, \Cref{lem:number-of-samples} holds. For brevity, the complete proof is postponed to \Cref{sec:omitted-proofs}.

We can now prove \Cref{thm:optimal-rates-main} under the assumption that for all $t$, $\mathcal{E}_t$ happens. The main idea of the proof is to show that the following statements hold by the uniform Bernstein inequality.
\begin{itemize}
    \item For every $t = 0,1,\ldots,T-1$, for every $h\in\hat{\hpc}_t$, $\Pr_{\mathcal{D}}(\ERin{h}\cap(\Delta_t\setminus\Delta_{t+1})) - \Pr_{\mathcal{D}}(\ERin{h^*_{\mathcal{D}}}\cap(\Delta_t\setminus\Delta_{t+1})) \leq e_t$.
    \item For every $h\in\hat{\hpc}_T$, $\Pr_{\mathcal{D}}(\ERin{h}\cap\Delta_T) - \Pr_{\mathcal{D}}(\ERin{h^*_{\mathcal{D}}}\cap\Delta_T) \leq C_2((d+\log(1/\delta))/n)^{\frac{1}{2-\alpha}}$.
    \item For every $t = 0,1,\ldots,T$, $h^*_{\mathcal{D}}\in\hat{\hpc}_t$.
\end{itemize}
\begin{proof}[Proof of \Cref{thm:optimal-rates-main} (Under the assumption that for all $t$, $\mathcal{E}_t$ happens)]
    By \Cref{lem:number-of-samples}, we know that if for all $t$, $\mathcal{E}_t$ happens and $n = \Omega((d+\log(1/\delta))(4(2-\alpha)/(1-\alpha))^{(8(2-\alpha)/(1-\alpha))})$, the algorithm gets enough fresh samples to assign. Then we can focus on the proof of the generalization bound.
    
    According to \Cref{alg:main-algo}, and the fact that $\Delta_t\setminus\Delta_{t+1}$ are disjoint from each other for all $t = 0,\ldots, T$, due to the total probability law, we have
    \begin{equation*}
        \Pr_{\mathcal{D}}\left(\ERin{h_{\mathrm{ALG}}}\right) - \Pr_{\mathcal{D}}\left(\ERin{h^*_{\mathcal{D}}}\right) = \sum_{t = 0}^T \Pr_{\mathcal{D}}\left(\ERin{h_{\mathrm{ALG}}}\cap(\Delta_t\setminus\Delta_{t+1})\right) - \Pr_{\mathcal{D}}\left(\ERin{h^*_{\mathcal{D}}}\cap(\Delta_t\setminus\Delta_{t+1})\right),
    \end{equation*}
    where $\Delta_0 = \cX$ and $\Delta_{T+1} = \emptyset$ and $\Delta_t$ is generated by \Cref{alg:main-algo} for $t = 1,\ldots,T$. Let $\delta_t$ be chosen as defined by \Cref{alg:main-algo} and $\delta_{T+1} = \delta_T$. Therefore, we need to upper bound the excess risk for each $t$ separately. 
    For every $t$, from \Cref{lem:uniform-bernstein-inequality}, with probability at least $1-\delta_{t+1}$, for every $h_{1,t},h_{2,t} \in \hpc_t$, we have 
    \begin{align}
    &\left|\Pr_{\mathcal{D}}\left(\ERin{h_{1,t}}\cap(\Delta_t\setminus\Delta_{t+1})\right) - \Pr_{\mathcal{D}}\left(\ERin{h_{2,t}}\cap(\Delta_t\setminus\Delta_{t+1})\right)\right. \nonumber\\
    &-\left(\left.\hat{\Pr}_{S_3}\left(\ERin{h_{1,t}}\cap(\Delta_t\setminus\Delta_{t+1})\right)
    -\hat{\Pr}_{S_3}\left(\ERin{h_{2,t}}\cap(\Delta_t\setminus\Delta_{t+1})\right)\right)\right|\; \nonumber\\
    \leq &2c_0\sqrt{\frac{\Pr_{\mathcal{D}}(\{h_{1,t}\neq h_{2,t}\}\cap(\Delta_t\setminus\Delta_{t+1}))}{n}\left(d\log\left(\frac{\Pr_{\mathcal{D}}(\Delta_t\setminus\Delta_{t+1})}{\Pr_{\mathcal{D}}(\{h_{1,t}\neq h_{2,t}\}\cap(\Delta_t\setminus\Delta_{t+1}))}\right)+\log\left(\frac{1}{\delta_{t+1}}\right)\right)}.\label{eq:final-bound-rhs}
    \end{align}
    The inequality comes from \Cref{lem:uniform-bernstein-inequality}, by omitting the lower order term and $(\Pr_{\mathcal{D}}(\Delta_t\setminus\Delta_{t+1})n)/d$ in the logarithmic function. 
    
    Then by the assumption, we know that $\mathcal{E}_t$ happens for all $t$, we have the following facts hold for all $t$:
    \begin{itemize}
        \item $\Pr_{\mathcal{D}}(\{f \neq g\}\cap(\Delta_{t}\setminus\Delta_{t+1})) \leq 2a_{t+1}\gamma_{t+1}, \;\; \forall f,g\in\hpc_{t}$.
        \item $\Pr(\Delta_t) \leq c_2(a_t)^{-\frac{\alpha}{1-\alpha}}$.
        \item $h^*_{\mathcal{D}} \in \hpc_{t}$.
    \end{itemize}
    Notice that $\Delta_t\setminus\Delta_{t+1} \subseteq \Delta_t$, thus, we have $\Pr_{\mathcal{D}}(\Delta_t\setminus\Delta_{t+1}) \leq \Pr_{\mathcal{D}}(\Delta_t) \leq c_2(a_t)^{-\alpha/(1-\alpha)}$. Combining with the first bullet and \Cref{eq:final-bound-rhs}, for every $h_{1,t},h_{2,t} \in \hpc_t$ we have, 
    \begin{align}
    \label{eq:final-bound-rhs-2}
        &\left|\Pr_{\mathcal{D}}\left(\ERin{h_{1,t}}\cap(\Delta_t\setminus\Delta_{t+1})\right) - \Pr_{\mathcal{D}}\left(\ERin{h_{2,t}}\cap(\Delta_t\setminus\Delta_{t+1})\right)\right. \nonumber\\
    &-\left(\left.\hat{\Pr}_{S_3}\left(\ERin{h_{1,t}}\cap(\Delta_t\setminus\Delta_{t+1})\right)
    -\hat{\Pr}_{S_3}\left(\ERin{h_{2,t}}\cap(\Delta_t\setminus\Delta_{t+1})\right)\right)\right|\; \nonumber\\ 
    \leq &4c_0\sqrt{\frac{a_{t+1}\gamma_{t+1}}{n}\left(d\log\left(\frac{(a_t)^{-\frac{\alpha}{1-\alpha}}}{a_{t+1}\gamma_{t+1}}\right)+\log\left(\frac{1}{\delta_{t+1}}\right)\right)}.
    \end{align}

    By definition, we have $\gamma_t > ((d+\log(1/\delta))/n_{t})^{1/(2-\alpha)}$ for every $t = 1,2,\ldots,T$, thus, we have $a_{t+1}\gamma_{t+1} > 2^{t+1-T}((d+\log(1/\delta))/n_{t+1})^{\alpha/(2-\alpha)}$. By the definition of $n_t=2^{t-T}n$, we have
    \begin{align*}
        \log\left(\frac{(a_t)^{-\alpha/(1-\alpha)}}{a_{t+1}\gamma_{t+1}}\right) \leq &\log\left(\frac{(a_t)^{-\alpha/(1-\alpha)}}{2^{t+1-T}\left((d+\log(1/\delta))/n_{t+1}\right)^{\frac{\alpha}{2-\alpha}}}\right) \\
        = &T-t-1 +\frac{\alpha}{2-\alpha}+ \log\left(\left(\frac{n_{t}}{d+\log(1/\delta)}\right)^{\frac{\alpha}{2-\alpha}}(a_t)^{-\frac{\alpha}{1-\alpha}}\right).
    \end{align*}
    By definition, we have $a_{t}=2^{t-T}(n_t/(d+\log(1/\delta)))^{(1-\alpha)/(2-\alpha)}$. Hence, we can get $\log((n_{t}/(d+\log(1/\delta)))^{\alpha/(2-\alpha)}(a_t)^{-\alpha/(1-\alpha)})=\alpha(T-t)/(1-\alpha)$. 
    Therefore, $\log((a_t)^{-\alpha/(1-\alpha)}/{a_{t+1}\gamma_{t+1}})<(T-t)/(1-\alpha)$ and $\log((n_t/(d+\log(1/\delta)))^{\alpha/(2-\alpha)}(a_t)^{-\alpha/(1-\alpha)}) < \alpha(T-t)/(1-\alpha)$. Then, $\delta_t = (\delta/5)^{T+1-t}$ yields that $\log(1/\delta_{t+1})$ and $\log(2/\delta_{t+1})$ are both $O((T-t)\log(1/\delta))$, here we only omit numerical constants. Combining these facts and the definition of $\gamma_t$, we have $\gamma_{t+1} \leq (C_1/(1-\alpha)) ((T-t)(d+\log(1/\delta))/n_{t+1})^{1/(2-\alpha)}$. Putting all into \Cref{eq:final-bound-rhs-2}, we have
    \begin{align*}
        &\left|\Pr_{\mathcal{D}}\left(\ERin{h_{1,t}}\cap(\Delta_t\setminus\Delta_{t+1})\right) - \Pr_{\mathcal{D}}\left(\ERin{h_{2,t}}\cap(\Delta_t\setminus\Delta_{t+1})\right)\right. \nonumber\\
        &-\left(\left.\hat{\Pr}_{S_3}\left(\ERin{h_{1,t}}\cap(\Delta_t\setminus\Delta_{t+1})\right)
        -\hat{\Pr}_{S_3}\left(\ERin{h_{2,t}}\cap(\Delta_t\setminus\Delta_{t+1})\right)\right)\right|\; \nonumber\\
        \leq &\frac{4c_0C_1}{1-\alpha}\sqrt{2^{t-T}\left(\frac{n_{t+1}}{d+\log(1/\delta)}\right)^{\frac{1-\alpha}{2-\alpha}}\left(\frac{(T-t)}{n_{t+1}}\left(d+\log\left(\frac{1}{\delta}\right)\right)\right)^{\frac{1}{2-\alpha}}\left(\frac{(T-t)}{n}\left(d+\log\left(\frac{1}{\delta}\right)\right)\right)} .
    \end{align*}

    Then notice that $t = 0,1,\ldots, T-1$ in the above inequalities, so $T-t \geq 1$. We also know that $\alpha \in (0,1)$, so $(T-t)^{1+1/(2-\alpha)} \leq (T-t)^2$, which yields that 
    \begin{align}
        &\left|\Pr_{\mathcal{D}}\left(\ERin{h_{1,t}}\cap(\Delta_t\setminus\Delta_{t+1})\right) - \Pr_{\mathcal{D}}\left(\ERin{h_{2,t}}\cap(\Delta_t\setminus\Delta_{t+1})\right)\right. \nonumber\\
        &-\left(\left.\hat{\Pr}_{S_3}\left(\ERin{h_{1,t}}\cap(\Delta_t\setminus\Delta_{t+1})\right)
        -\hat{\Pr}_{S_3}\left(\ERin{h_{2,t}}\cap(\Delta_t\setminus\Delta_{t+1})\right)\right)\right|\; \nonumber\\
        \leq &\frac{4c_0C_1}{1-\alpha}\sqrt{2^{t-T}(T-t)^2\left(\frac{n_{t+1}}{d+\log\left(1/\delta\right)}\right)^{\frac{-\alpha}{2-\alpha}}\left(\frac{1}{n}\left(d+\log\left(\frac{1}{\delta}\right)\right)\right)}\nonumber.
    \end{align}
    Then, by the definition of $n_t = n 2^{t-T}$, we have
    \begin{align*}
        &\left|\Pr_{\mathcal{D}}\left(\ERin{h_{1,t}}\cap(\Delta_t\setminus\Delta_{t+1})\right) - \Pr_{\mathcal{D}}\left(\ERin{h_{2,t}}\cap(\Delta_t\setminus\Delta_{t+1})\right)\right. \\
        &-(\hat{\Pr}_{S_3}\left(\ERin{h_{1,t}}\cap(\Delta_t\setminus\Delta_{t+1})\right)
        -\hat{\Pr}_{S_3}\left(\ERin{h_{2,t}}\cap(\Delta_t\setminus\Delta_{t+1})\right))|\;
    \end{align*}
    \begin{align}
        \leq &\frac{4c_0C_1}{1-\alpha} \left(\frac{1}{2^{(1-\alpha)/(2-\alpha)}}\right)^{T-t}(T-t)\left(\frac{1}{n}\left(d+\log\left(\frac{1}{\delta}\right)\right)\right)^{\frac{1}{2}(1+\frac{1}{2-\alpha}-\frac{1-\alpha}{2-\alpha})}\nonumber\\
        = &\frac{4c_0C_1}{1-\alpha} \left(\frac{1}{2^{(1-\alpha)/(2-\alpha)}}\right)^{T-t}(T-t)\left(\frac{1}{n}\left(d+\log\left(\frac{1}{\delta}\right)\right)\right)^{\frac{1}{2-\alpha}}. \label{eq:uniform-upper-bound-rhs}
    \end{align}
    According to \Cref{alg:main-algo}, by the definition of $e_t$, we have
    \begin{align*}
        &\left|\Pr_{\mathcal{D}}\left(\ERin{h_{1,t}}\cap(\Delta_t\setminus\Delta_{t+1})\right) - \Pr_{\mathcal{D}}\left(\ERin{h_{2,t}}\cap(\Delta_t\setminus\Delta_{t+1})\right)\right. \\
        &-(\hat{\Pr}_{S_3}\left(\ERin{h_{1,t}}\cap(\Delta_t\setminus\Delta_{t+1})\right)
        -\hat{\Pr}_{S_3}\left(\ERin{h_{2,t}}\cap(\Delta_t\setminus\Delta_{t+1})\right))|\; \leq e_t. 
    \end{align*}
    Then because $\hat{h}_t$ and $h^*_{\mathcal{D}}$ are both in $\hpc_t$, we have
    \begin{equation*}
        \left|\Pr_{\mathcal{D}}(\ERin{\hat{h}_{t}}\cap(\Delta_t\setminus\Delta_{t+1})) - \Pr_{\mathcal{D}}\left(\ERin{h^*_{\mathcal{D}}}\cap(\Delta_t\setminus\Delta_{t+1})\right)\right| \leq e_t, 
    \end{equation*}
    and
    \begin{equation*}
        \left|\hat{\Pr}_{S_3}\left(\ERin{h^*_{\mathcal{D}}}\cap(\Delta_t\setminus\Delta_{t+1})\right)
        -\hat{\Pr}_{S_3}(\ERin{\hat{h}_{t}}\cap(\Delta_t\setminus\Delta_{t+1}))\right| \leq e_t.
    \end{equation*}
    Therefore, according to \Cref{alg:main-algo}, $h^*_{\mathcal{D}} \in \hat{\hpc}_t$, with probability at least $1-\delta_t$, for every $t = 0,\ldots,T-1$. On the other hand, for every $h \in \hat{\hpc}_t$, we have
    \begin{align*}
        &\left|\Pr_{\mathcal{D}}\left(\ERin{h}\cap(\Delta_t\setminus\Delta_{t+1})\right) - \Pr_{\mathcal{D}}\left(\ERin{h^*_{\mathcal{D}}}\cap(\Delta_t\setminus\Delta_{t+1})\right)\right. \\
        &-(\hat{\Pr}_{S_3}\left(\ERin{h}\cap(\Delta_t\setminus\Delta_{t+1})\right)
        -\hat{\Pr}_{S_3}\left(\ERin{h^*_{\mathcal{D}}}\cap(\Delta_t\setminus\Delta_{t+1})\right))|\; \leq e_t.
    \end{align*}
    Therefore, by using the triangle inequality and the fact that $\hat{h}_t$ is the ERM on $S_3\cap (\Delta_t\setminus\Delta_{t+1})$, we have 
    \begin{align*}
        &\left|\Pr_{\mathcal{D}}\left(\ERin{h}\cap(\Delta_t\setminus\Delta_{t+1})\right) - \Pr_{\mathcal{D}}\left(\ERin{h^*_{\mathcal{D}}}\cap(\Delta_t\setminus\Delta_{t+1})\right)\right| \nonumber\\
        \leq &|\hat{\Pr}_{S_3}(\ERin{h}\cap(\Delta_t\setminus\Delta_{t+1}))
        -\hat{\Pr}_{S_3}(\ERin{h^*_{\mathcal{D}}}\cap(\Delta_t\setminus\Delta_{t+1}))| + e_t \nonumber\\
        = &|\hat{\Pr}_{S_3}\left(\ERin{h}\cap(\Delta_t\setminus\Delta_{t+1})\right)
        - \hat{\Pr}_{S_3}(\ERin{\hat{h}_t}\cap(\Delta_t\setminus\Delta_{t+1}))\nonumber\\
        &+\hat{\Pr}_{S_3}(\ERin{\hat{h}_t}\cap(\Delta_t\setminus\Delta_{t+1})) -\hat{\Pr}_{S_3}\left(\ERin{h^*_{\mathcal{D}}}\cap(\Delta_t\setminus\Delta_{t+1})\right)| + e_t \nonumber\\
        \leq &|\hat{\Pr}_{S_3}\left(\ERin{h}\cap(\Delta_t\setminus\Delta_{t+1})\right)
        - \hat{\Pr}_{S_3}(\ERin{\hat{h}_t}\cap(\Delta_t\setminus\Delta_{t+1}))|+ e_t \nonumber.
    \end{align*}
    According to \Cref{alg:main-algo}, we have $|\hat{\Pr}_{S_3}(\ERin{h}\cap(\Delta_t\setminus\Delta_{t+1}))
        - \hat{\Pr}_{S_3}(\ERin{\hat{h}_t}\cap(\Delta_t\setminus\Delta_{t+1}))| \leq e_t$. Thus, 
    for every $h\in \hat{\hpc}_t$, with probability at least $1-\delta_t$, we have
    \begin{equation}
    \label{eq:upper-bound-rhs-final-part-1}
        \left|\Pr_{\mathcal{D}}\left(\ERin{h}\cap(\Delta_t\setminus\Delta_{t+1})\right) - \Pr_{\mathcal{D}}\left(\ERin{h^*_{\mathcal{D}}}\cap(\Delta_t\setminus\Delta_{t+1})\right)\right| \leq 2e_t.
    \end{equation}
    
    Then, for the final part $\Delta_T$, by the inequality \ref{eq:final-bound-rhs} on $\Delta_T$, we have that, with probability at least $1-\delta_T$, for every $h_{1,T}, h_{2,T} \in \hpc$, 
    \begin{align*}
        &\left|\Pr_{\mathcal{D}}\left(\ERin{h_{1,T}}\cap\Delta_T\right) - \Pr_{\mathcal{D}}\left(\ERin{h_{2,T}}\cap\Delta_T\right) -(\hat{\Pr}_{S_3}\left(\ERin{h_{1,T}}\cap\Delta_T\right)
        -\hat{\Pr}_{S_3}\left(\ERin{h_{2,T}}\cap\Delta_T\right))\right|\; \nonumber\\
        \leq &2c_0\sqrt{\frac{\Pr_{\mathcal{D}}(\Delta_T)}{n}\left(d+\log\left(\frac{1}{\delta_T}\right)\right)}.
    \end{align*}

    Notice that $\Pr_{\mathcal{D}}(\Delta_T) \leq c_2 (a_T)^{-\alpha/(1-\alpha)} = c_2((d+\log(1/\delta))/n)^{\alpha/(2-\alpha)}$, because $\mathcal{E}_T$ happens. We also have $\delta_T = \delta/5$ according to \Cref{alg:main-algo}, thus, we have 
    \begin{align}
    \label{eq:final-bound-part-2}
    &|\Pr_{\mathcal{D}}\left(\ERin{h_{1,T}}\cap\Delta_T\right) - \Pr_{\mathcal{D}}\left(\ERin{h_{2,T}}\cap\Delta_T\right) -(\hat{\Pr}_{S_3}\left(\ERin{h_{1,T}}\cap\Delta_T\right)
        -\hat{\Pr}_{S_3}\left(\ERin{h_{2,T}}\cap\Delta_T\right))|\; \nonumber\\
    \leq &2c_0c_2\sqrt{\left(\frac{d+\log(1/\delta)}{n}\right)^{\frac{\alpha}{2-\alpha}}\frac{1}{n}\left(d+\log\left(\frac{1}{\delta}\right)\right)} = 2c_0c_2 \left(\frac{d+\log(1/\delta)}{n}\right)^{\frac{1}{2-\alpha}},
    \end{align}
    because $c_2\geq 1$, which implies $c_2\geq c_2^{1/2}$. Therefore, we have 
    \begin{equation*}
        \left|\Pr_{\mathcal{D}}\left(\ERin{\hat{h}_{T}}\cap\Delta_T\right) - \Pr_{\mathcal{D}}\left(\ERin{h^*_{\mathcal{D}}}\cap\Delta_T\right)\right| \leq 2c_0c_2 \left(\frac{d+\log(1/\delta)}{n}\right)^{\frac{1}{2-\alpha}}, 
    \end{equation*}
    and
    \begin{equation*}
        \left|\hat{\Pr}_{S_3}\left(\ERin{h^*_{\mathcal{D}}}\cap\Delta_T\right)
        -\hat{\Pr}_{S_3}\left(\ERin{\hat{h}_{T}}\cap\Delta_T\right)\right| \leq 2c_0c_2 \left(\frac{d+\log(1/\delta)}{n}\right)^{\frac{1}{2-\alpha}}.
    \end{equation*}
    Thus, $\h^*_{\mathcal{D}} \in \hat{\hpc}_T$, with probability at least $1-\delta_T$ as well. On the other hand, according to \Cref{alg:main-algo} and the definition of $C_1$, for the same reason, with probability at least $1-\delta_T$, for every $h \in \hat{\hpc}_T$, we have
    \begin{equation}
    \label{eq:upper-bound-rhs-final-part-2}
        \left|\Pr_{\mathcal{D}}\left(\ERin{h}\cap\Delta_T\right) - \Pr_{\mathcal{D}}\left(\ERin{h^*_{\mathcal{D}}}\cap\Delta_T\right)\right| \leq 2C_2 \left(\frac{d+\log(1/\delta)}{n}\right)^{\frac{1}{2-\alpha}}.
    \end{equation}
    
    Combining all the inequalities for $t = 0,1,\ldots,T-1$, i.e., inequality \ref{eq:upper-bound-rhs-final-part-1}, by union bound, we have 
    \begin{align*}
        &\sum_{t = 0}^{T-1} \Pr_{\mathcal{D}}\left(\ERin{h_{\textrm{ALG}}}\cap(\Delta_t\setminus\Delta_{t+1})\right) - \Pr_{\mathcal{D}}\left(\vphantom{\hat{h}}\ERin{h^*_{\mathcal{D}}}\cap(\Delta_t\setminus\Delta_{t+1})\right) \\ 
        \leq &2\sum_{t = 0}^{T-1} e_t \\
        \leq &2C_2\sum_{t = 0}^{T-1} \left(\frac{1}{2^{\frac{1-\alpha}{2-\alpha}}}\right)^{T-t}(T-t)\left(\frac{1}{n}\left(d+\log\left(\frac{1}{\delta}\right)\right)\right)^{\frac{1}{2-\alpha}} \\
        = &2C_2\left(\sum_{t = 0}^{T-1} \left(\frac{1}{2^{\frac{1-\alpha}{2-\alpha}}}\right)^{T-t}(T-t)\right)\left(\frac{1}{n}\left(d+\log\left(\frac{1}{\delta}\right)\right)\right)^{\frac{1}{2-\alpha}},
    \end{align*}
    holds with probability at least $1-\sum_{t=1}^{T}\delta_t \geq 1-2\delta/5$. 
    Then notice that $\sum_{t = 0}^{T-1} (1/2^{(1-\alpha)/(2-\alpha)})^{T-t}(T-t)$ is smaller than $\lim_{n\rightarrow\infty}\sum_{i = 1}^{n}i/(2^{\frac{1-\alpha}{2-\alpha}})^i = 1/(2^{\frac{1-\alpha}{2-\alpha}}+ 2^{-\frac{1-\alpha}{2-\alpha}}-2) = c_1$. Therefore, we have 
    \begin{equation}
    \label{eq:final-bound-rhs-part-1}
        \sum_{t = 0}^{T-1} \Pr_{\mathcal{D}}\left(\ERin{h_{\textrm{ALG}}}\cap(\Delta_t\setminus\Delta_{t+1})\right) - \Pr_{\mathcal{D}}\left(\vphantom{\hat{h}}\ERin{h^*_{\mathcal{D}}}\cap(\Delta_t\setminus\Delta_{t+1})\right) \leq 2c_1C_2 \left(\frac{1}{n}\left(d+\log\left(\frac{1}{\delta}\right)\right)\right)^{\frac{1}{2-\alpha}},
    \end{equation}
    with probability at least $1-2\delta/5$. 
    
    
    Finally, combining the inequality \ref{eq:final-bound-rhs-part-1} and the inequality \ref{eq:upper-bound-rhs-final-part-2}, and the union bound, we have that, with probability at least $1-2\delta/5-\delta/5 = 1-3\delta/5$, 
    \begin{align}
        \Pr_{\mathcal{D}}\left(\ERin{h_{\mathrm{ALG}}}\right) - \Pr_{\mathcal{D}}\left(\ERin{h^*_{\mathcal{D}}}\right) = &\sum_{t = 0}^T \Pr_{\mathcal{D}}\left(\ERin{h_{\mathrm{ALG}}}\cap(\Delta_t\setminus\Delta_{t+1})\right) - \Pr_{\mathcal{D}}\left(\vphantom{\hat{h}}\ERin{h^*_{\mathcal{D}}}\cap(\Delta_t\setminus\Delta_{t+1})\right)\nonumber\\
        \leq &2c_1C_2 \left(\frac{1}{n}\left(d+\log\left(\frac{1}{\delta}\right)\right)\right)^{\frac{1}{2-\alpha}} + 2C_2 \left(\frac{1}{n}\left(d+\log\left(\frac{1}{\delta}\right)\right)\right)^{\frac{1}{2-\alpha}}\nonumber \\
        = &2(c_1+1)C_2 \left(\frac{1}{n}\left(d+\log\left(\frac{1}{\delta}\right)\right)\right)^{\frac{1}{2-\alpha}},
    \end{align}
    as desired. 
\end{proof}
After that, we only need to show that, with probability at least $1-2\delta/5$, for all $t$, $\mathcal{E}_t$ happens. After that, the union bound yields \Cref{thm:optimal-rates-main}. First, we need to prove multiplicative bounds on the error of the estimation of the excess risk and the pseudo-distance as \Cref{lem:prune} and \Cref{lem:uniform-chernoff-pseudo-distance-estimation}, respectively. 

\begin{lemma}
  \label{lem:prune}
Let $\hpc$ be a concept class with $\vcdim=d$ and $h^*_{\mathcal{D}}\in \hpc$. Let $\mathcal{D}$ be a distribution that satisfies a Tsybakov noise condition with parameter $(a,\alpha)$ w.r.t. $\hpc$. Let $m\in\naturalnumber$, $\delta_0,\delta\in(0,1)$ and $\delta < \delta_0$. Let $\Delta\subseteq\mathcal{X}$ and $\gamma = ((d\log((m/(d+\log(1/\delta_0)))^{\alpha/(2-\alpha)}\Pr_{\mathcal{D}}(\Delta))+\log(1/\delta))/m)^{1/(2-\alpha)}$. Let $\mathrm{ERM}(\hpc;S_{m})=\{h\in\hpc: h=\argmin_{h\in\hpc}\hat{\Pr}_{S_{m}}(\mathrm{ER}(h))\}$ denote the class of empirical risk minimizers with i.i.d.\ samples $S_{m}=\{(x_{i}, y_{i})\}_{i=1}^{m}\sim\mathcal{D}^{m}$. Let $\hat{h}\in\mathrm{ERM}(\hpc;S_{m}\cap\Delta)$. Then, for any $h\in\hpc$, with probability at least $1-\delta$, 
\begin{align*}
    &\hat{\Pr}_{S_m}(\ER{h}\cap\Delta) - \hat{\Pr}_{S_m}(\ERin{\hat{h}}\cap\Delta) \leq \gamma \;\;\Longrightarrow\;\;\Pr_{\mathcal{D}}\left(\ER{h}\cap\Delta\right) - \Pr_{\mathcal{D}}\left(\ER{h^*_{\mathcal{D}}}\cap\Delta\right) \leq c\gamma\; ,\\
    &\Pr_{\mathcal{D}}\left(\ER{h}\cap\Delta\right) - \Pr_{\mathcal{D}}\left(\ER{h^*_{\mathcal{D}}}\cap\Delta\right) \leq \gamma\;\;\Longrightarrow\;\;\left|\hat{\Pr}_{S_m}(\ER{h}\cap\Delta) - \hat{\Pr}_{S_m}(\ERin{h^*_{\mathcal{D}}}\cap\Delta)\right| \leq c\gamma,
\end{align*}
for a constant $c=16ac_{0}^2+2c_{0}+2$ as defined at the beginning. 
\end{lemma}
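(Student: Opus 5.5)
The plan is to reduce both implications to a single high-probability event from the uniform Bernstein inequality (Lemma~\ref{lem:uniform-bernstein-inequality}). That inequality is applied to the pairwise excess losses of $h$ and $h^*_{\mathcal{D}}$ restricted to $\Delta$, and then combined with the Tsybakov-induced Bernstein relation between disagreement mass and excess risk on $\Delta$.

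\textbf{Step 1: the good event.} Write $E_\Delta(h)=\Pr_{\mathcal{D}}(\ER{h}\cap\Delta)-\Pr_{\mathcal{D}}(\ER{h^*_{\mathcal{D}}}\cap\Delta)$, and let $\hat E_\Delta(h)$ be its empirical analogue on $S_m$. Also write $\rho(h)=\Pr_{\mathcal{D}}(\{h\neq h^*_{\mathcal{D}}\}\cap\Delta)$ and $p=\Pr_{\mathcal{D}}(\Delta)$. By Lemma~\ref{lem:uniform-bernstein-inequality}, with probability at least $1-\delta$, simultaneously for all $h\in\hpc$,
\[
|E_\Delta(h)-\hat E_\Delta(h)|\le c_0\sqrt{\tfrac{\rho(h)}{m}\Big(d\log\tfrac{p}{\rho(h)}+\log\tfrac1\delta\Big)}+c_0\tfrac{d\log(p/\rho(h))+\log(1/\delta)}{m}.
\]
Everything else is deterministic on this event.

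\textbf{Step 2: the Tsybakov relation on $\Delta$.} For any $h$, set $t_0=(2\alpha)^{-1}\ldots$. More precisely, split $\{h\neq h^*_{\mathcal{D}}\}\cap\Delta$ according to whether $|\eta-1/2|\le t$ or $>t$. The first part has mass at most $\min(p,a't^{\alpha/(1-\alpha)})$. The second part has mass at most $E_\Delta(h)/(2t)$, because the excess risk on the disagreement region equals $\int|2\eta-1|$. Optimizing over $t$ gives $\rho(h)\le a\,E_\Delta(h)^{\alpha}$; this is the usual constant, and the definition of $a'$ is chosen exactly for this. It also gives the trivial bound $\rho(h)\le p$.

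\textbf{Step 3: the two implications.} For the second implication, assume $E_\Delta(h)\le\gamma$. Then $\rho(h)\le a\gamma^\alpha$. Substituting into Step 1, the function $x\mapsto x\log(p/x)$ is increasing for $x\le p/e$, so we may replace $\rho(h)$ by $a\gamma^\alpha$. It then suffices to check
\[
\sqrt{\tfrac{a\gamma^\alpha}{m}\big(d\log\tfrac{p}{a\gamma^\alpha}+\log\tfrac1\delta\big)}\lesssim\gamma,
\quad\text{i.e.}\quad
\tfrac{1}{m}\big(d\log\tfrac{p}{\gamma^\alpha}+\log\tfrac1\delta\big)\lesssim\gamma^{2-\alpha}.
\]
This holds by the definition of $\gamma$, since $\gamma^{-\alpha}\le (m/(d+\log(1/\delta_0)))^{\alpha/(2-\alpha)}$. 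That inequality holds because $\gamma\ge((d+\log(1/\delta_0))/m)^{1/(2-\alpha)}$, up to the $\log$ convention $\log x\ge1$ and using $\delta<\delta_0$. Adding $|E_\Delta|\le\gamma$ yields $|\hat E_\Delta(h)|\le c\gamma$.

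For the first implication, $\hat E_\Delta(h)\le \hat\Pr(\ER{h}\cap\Delta)-\hat\Pr(\ER{\hat h}\cap\Delta)\le\gamma$, since $\hat h$ is an ERM on $S_m\cap\Delta$ and $h^*_{\mathcal{D}}\in\hpc$. Let $x=E_\Delta(h)$ and suppose $x>c\gamma$. Then Step 1 and Step 2 give
\[
x\le\gamma+c_0\sqrt{\tfrac{a x^\alpha}{m}L}+c_0\tfrac{L}{m},
\qquad L\approx d\log\tfrac{p}{a x^\alpha}+\log\tfrac1\delta\le d\log\tfrac{p}{\gamma^{\alpha}}+\log\tfrac1\delta\lesssim m\gamma^{2-\alpha}.
\]
So $x\le\gamma+c_0\sqrt{a}\,x^{\alpha/2}\gamma^{1-\alpha/2}+c_0\gamma^{2-\alpha}$. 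Using $x^{\alpha/2}\gamma^{1-\alpha/2}\le x^{1/2}\gamma^{1/2}$ when $x\ge\gamma$, and AM--GM ($c_0\sqrt{a x\gamma}\le x/2+c_0^2a\gamma/2$), this forces $x\le(2+2c_0+c_0^2a)\gamma\le c\gamma$, a contradiction. Here I take $\gamma\le1$; otherwise the claim is trivial.

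\textbf{Main obstacle.} The delicate step is controlling the logarithmic factor $\log(p/\rho(h))$ uniformly. Without it we would lose the $\log m$, so it must be bounded by $\log(p\,\gamma^{-\alpha})$ and then by the $\log$ in the definition of $\gamma$. When $\rho(h)$ is very small the log can blow up, but then the whole term $\rho\log(p/\rho)$ is small. I would handle this by monotonicity: replace $\rho$ by $\max(\rho,a\gamma^\alpha)$, which is legitimate because Lemma~\ref{lem:uniform-bernstein-inequality} is stated in the peeled (localized) form. I would then track constants loosely to land on $c=16ac_0^2+2c_0+2$.
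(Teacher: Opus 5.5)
Your proposal is correct and follows essentially the same route as the paper. Both use a single uniform Bernstein event for the pairs $(h,h^*_{\mathcal{D}})$ on $\Delta$, the ERM comparison giving $\hat E_\Delta(h)\le\gamma$, and the Tsybakov-to-Bernstein bound $\Pr_{\mathcal{D}}(\{h\neq h^*_{\mathcal{D}}\}\cap\Delta)\le a\,E_\Delta(h)^{\alpha}$. Both control the logarithm via $\gamma\ge((d+\log(1/\delta_0))/m)^{1/(2-\alpha)}$, which comes from $\delta<\delta_0$, and both get the second implication by plugging in $a\gamma^{\alpha}$ and applying a triangle inequality.

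The differences are cosmetic. You close the self-bounding inequality by AM--GM, whereas the paper splits on whether the deviation term exceeds $\gamma$ and then argues by contradiction on $E_\Delta(h)^{2-\alpha}$. You also keep the lower-order term through the localized form of the uniform Bernstein inequality, where the paper simply drops it.
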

The proof of this lemma is based on the uniform Bernstein inequality (\Cref{lem:uniform-bernstein-inequality}) and the well-known fact that the Tsybakov noise condition with parameter $(a,\alpha)$ implies the Bernstein class condition with parameter $(a,\alpha)$.\footnote{This is a well-known fact in statistical theory, c.f. \cite{massart2006risk}.} 

To prove the first statement, by the triangle inequality, we have 
\begin{align}
\label{eq:excess-risk-on-delta-t}
    &\left|\Pr_{\mathcal{D}}\left(\ER{h}\cap \Delta\right)-\Pr_{\mathcal{D}}\left(\ER{h^*_{\mathcal{D}}}\cap\Delta\right)\right| \nonumber\\
    \leq &\left|\hat{\Pr}_{S_m}\left(\ER{h}\cap \Delta\right) - \hat{\Pr}_{S_m}\left(\ER{h^*_{\mathcal{D}}}\cap\Delta\right)\right| \nonumber\\
    & +\left|(\hat{\Pr}_{S_m}\left(\ER{h}\cap\Delta\right)-\hat{\Pr}_{S_m}\left(\ER{h^*_{\mathcal{D}}}\cap\Delta\right)) - \left(\Pr_{\mathcal{D}}\left(\ER{h}\cap \Delta\right) - \Pr_{\mathcal{D}}\left(\ER{h^*_{\mathcal{D}}}\cap\Delta\right)\right)\right|.
\end{align}
Then we want to claim that the first term in the right-hand side of inequality \ref{eq:excess-risk-on-delta-t} can be bounded by $\gamma$. Notice that we have
\begin{align*}
    &\left|\hat{\Pr}_{S_m}\left(\ER{h}\cap \Delta\right) - \hat{\Pr}_{S_m}\left(\ER{h^*_{\mathcal{D}}}\cap\Delta\right)\right| \\
    = &\left|\hat{\Pr}_{S_m}\left(\ER{h}\cap \Delta\right) - \hat{\Pr}_{S_m}(\ERin{\hat{h}}\cap \Delta)+ \hat{\Pr}_{S_m}(\ERin{\hat{h}}\cap \Delta)-\hat{\Pr}_{S_m}\left(\ER{h^*_{\mathcal{D}}}\cap\Delta\right)\right|.
\end{align*}
Notice that $\hat{h}$ is the ERM predictor on $S_m\cap \Delta$, $\hat{\Pr}_{S_m}\left(\ERin{h}\cap \Delta\right) - \hat{\Pr}_{S_m}(\ERin{\hat{h}}\cap \Delta) \geq 0$ for any $h\in \hpc$. Therefore, we have
\begin{equation*}
    \left|\hat{\Pr}_{S_m}\left(\ER{h}\cap \Delta\right) - \hat{\Pr}_{S_m}\left(\ER{h^*_{\mathcal{D}}}\cap\Delta\right)\right| \leq \left|\hat{\Pr}_{S_m}\left(\ER{h}\cap \Delta\right) - \hat{\Pr}_{S_m}(\ERin{\hat{h}}\cap \Delta)\right| \leq \gamma.
\end{equation*}

On the other hand, the second term of the right-hand side of inequality \ref{eq:excess-risk-on-delta-t} is bounded by the uniform Bernstein inequality. Then, we can divide it into two cases. One is that $\gamma$ is greater than the second term, then we already get the results we want. The other is that $\gamma$ is smaller than or equal to the second term, then the right-hand side of the inequality \ref{eq:excess-risk-on-delta-t} is upper bounded by two times the upper bound from the uniform Bernstein inequality. Combining this with the Bernstein class condition, for the same reason as the upper bound of the excess risk of ERM learners under Tsybakov noise, we can get the desired upper bound. 

For the second statement, we first use the Bernstein class condition to get the upper bound of the pseudo-distance between $h$ and $h^*$. Then, by the uniform Bernstein inequality and the definition of $\gamma$, we have
\begin{equation*}
    \left|\left(\hat{\Pr}_{S_m}(\ER{h}\cap\Delta) - \hat{\Pr}_{S_m}(\ERin{h^*_{\mathcal{D}}}\cap\Delta)\right) - \left(\hat{\Pr}_{\mathcal{D}}(\ER{h}\cap\Delta) - \hat{\Pr}_{\mathcal{D}}(\ERin{h^*_{\mathcal{D}}}\cap\Delta)\right)\right| \leq 2c_0\gamma.
\end{equation*}
Then, by the triangle inequality, we have 
\begin{equation*}
    \left|\hat{\Pr}_{S_m}(\ER{h}\cap\Delta) - \hat{\Pr}_{S_m}(\ERin{h^*_{\mathcal{D}}}\cap\Delta)\right| \leq (2c_0+1)\gamma,
\end{equation*}
as desired. The complete proof is postponed to \Cref{sec:omitted-proofs} to save space. 

\begin{remark}
    Combining the second statement of this lemma with \Cref{lem:excess-risk-under-tsybakov-noise}, specifically, \Cref{rm:tsybakov-erm-bound}, and the fact that $\delta_0 > \delta$, we know that $h^*_{\mathcal{D}} \in \hpc_t$, for every $t = 0,1,2,\ldots,T-1$, which verifies the last bullet of the event $\mathcal{E}_t$. Thus, we omit the last bullet in the rest of the proof.
\end{remark}
The following lemma shows that $m_t$ samples are sufficient to build a multiplicative bound on the estimation of the pseudo-distance between any pair of concepts. The proof is also based on the uniform Bernstein inequality, and for brevity, we put the complete proof of this lemma in \Cref{sec:omitted-proofs}.
\begin{lemma}  
  \label{lem:uniform-chernoff-pseudo-distance-estimation}
Let $\hpc$ be a concept class with $\vcdim=d$. Let $\mathcal{D}$ be any distribution over $\mathcal{X}\times\{0,1\}$. Let $\delta_0 \in (0,1)$. Let $\Delta\subseteq\mathcal{X}$. Let $m = (8c_0^2/B)(d\log(1/B)+\log(1/\delta))\in\naturalnumber$. Let $S_{m}=\{(x_{i}, y_{i})\}_{i=1}^{m}\sim\mathcal{D}^{m}$ be i.i.d.\ samples. 
When $B\leq \min(\delta,1/(32c_0^2\log c_0))$,\footnote{Notice that this is a constant, and when using this lemma, the value of $B$ is a function of $n$. By taking $n$ large enough, the value of $B$ can be arbitrarily small.} for any pair $f,g\in\hpc$, we have with probability at least $1-\delta$,
\begin{align*}
    &\hat{\Pr}_{S_{m}}(\{f\neq g\}\cap\Delta) \geq B \;\;\Longrightarrow\;\; \Pr_{\mathcal{D}}(\{f\neq g\}\cap\Delta) \geq B/2 , \\
    &\hat{\Pr}_{S_{m}}(\{f\neq g\}\cap\Delta) \leq B \;\;\Longrightarrow\;\; \Pr_{\mathcal{D}}(\{f\neq g\}\cap\Delta) \leq 2B .
\end{align*}
\end{lemma}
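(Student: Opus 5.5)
The plan is to apply the uniform Bernstein inequality (\Cref{lem:uniform-bernstein-inequality}) to the class of disagreement indicators $\{x\mapsto \mathbbm{1}(f(x)\neq g(x))\mathbbm{1}(x\in\Delta): f,g\in\hpc\}$. This class has VC dimension $O(d)$, because each member is the symmetric difference of two concepts intersected with a fixed set. Write $p=\Pr_{\mathcal{D}}(\{f\neq g\}\cap\Delta)$ and $\hat p=\hat{\Pr}_{S_m}(\{f\neq g\}\cap\Delta)$. With probability at least $1-\delta$, simultaneously for all pairs $f,g$,
\begin{equation*}
|\hat p-p|\le c_0\sqrt{\frac{p}{m}\left(d\log\left(\frac{1}{p}\right)+\log\left(\frac{1}{\delta}\right)\right)}+\frac{c_0}{m}\left(d\log\left(\frac{1}{p}\right)+\log\left(\frac{1}{\delta}\right)\right).
\end{equation*}
Here the ratio inside the logarithm is bounded by using $\Pr_{\mathcal{D}}(\Delta)\le 1$. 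Both implications will be derived from this single event by contradiction, which also makes the statement uniform over pairs.

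\textbf{First implication.} Suppose $\hat p\ge B$ but $p<B/2$. The deviation bound is increasing in $p$ in the relevant range, because $p\log(1/p)$ is increasing for $p\le 1/e$ and $B$ is small. So I would evaluate it at $p=B/2$ and plug in $m=(8c_0^2/B)(d\log(1/B)+\log(1/\delta))$. The square-root term becomes
\begin{equation*}
c_0\sqrt{\frac{B}{2}\cdot\frac{B}{8c_0^2}\cdot\frac{d\log(2/B)+\log(1/\delta)}{d\log(1/B)+\log(1/\delta)}}\approx \frac{B}{4}.
\end{equation*}
The linear term is about $B/(8c_0)$. Together they should total strictly less than $B/2$, which contradicts $\hat p-p>B/2$.

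\textbf{Second implication.} Suppose $\hat p\le B$ but $p>2B$. Here I would use the multiplicative form: for $p\ge 2B$ the deviation is at most about $c_0\sqrt{p\cdot B/(8c_0^2)}+B/(8c_0)\le p/4+p/16<p/2$. This uses $\sqrt{pB}\le p/\sqrt2$ when $p\ge 2B$, together with the log-ratio control $\log(1/p)\le\log(1/B)$. It gives $\hat p>p/2>B$, a contradiction.

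The main obstacle is the bookkeeping of the logarithmic factor, since the inequality involves $\log(1/p)$ while $m$ is calibrated with $\log(1/B)$. Large $p$ is harmless because $\log(1/p)\le\log(1/B)$. For $p$ near $B/2$, $\log(2/B)=1+\log(1/B)$, and I need $\log(1/B)\gtrsim 1$ relative to the constant $c_0$. This is exactly where I expect the hypothesis $B\le 1/(32c_0^2\log c_0)$ to be used: it makes $\log(1/B)$ large enough that the extra additive constants and the lower-order $1/m$ term are absorbed into the slack between $B/4+B/(8c_0)$ and $B/2$. I also expect $B\le\delta$ to be what lets the $\log(1/\delta)$ term be dominated alongside $d\log(1/B)$.
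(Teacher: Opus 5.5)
Your overall structure matches the paper's: one uniform relative-deviation event for the disagreement indicators, $m$ calibrated so the deviation is of order $\sqrt{pB}$, and a contradiction in each direction. Your second implication is fine as written. For $p>2B$ you have $\log(1/p)\le\log(1/B)$, so the bound is at most $\sqrt{pB/8}+B/(8c_0)\le 5p/16$, taking $c_0\ge 1$ (enlarging the universal constant $c_0$ is harmless).

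The first implication, however, has a genuine gap. In passing from \Cref{lem:uniform-bernstein-inequality} to your display, you kept only the branch $\log(\Pr_{\mathcal{D}}(\Delta)/p)\le\log(1/p)$ of the minimum and discarded the truncation $\wedge\, m\Pr_{\mathcal{D}}(\Delta)/d$. Once that truncation is gone, your claim that the deviation bound is increasing in $p$ is false. Only the square-root term increases. The additive term
\[
\frac{c_0}{m}\bigl(d\log(1/p)+\log(1/\delta)\bigr)=\frac{B}{8c_0}\cdot\frac{d\log(1/p)+\log(1/\delta)}{d\log(1/B)+\log(1/\delta)}
\]
decreases in $p$ and diverges as $p\to 0$; it is infinite at $p=0$. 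So evaluating at $p=B/2$ says nothing about $p<B/2$. Your inequality therefore does not exclude a pair with tiny or zero disagreement mass having $\hat p=B$, which is exactly what the first implication must rule out uniformly over pairs.

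The missing ingredient is the truncation you dropped. For every $p$ we have $\log\bigl(\frac{\Pr_{\mathcal{D}}(\Delta)}{p}\wedge\frac{m\Pr_{\mathcal{D}}(\Delta)}{d}\bigr)\le\log(m/d)$. Hence the deviation is at most $c_0\sqrt{\frac{p}{m}(d\log(m/d)+\log(1/\delta))}+\frac{c_0}{m}(d\log(m/d)+\log(1/\delta))$, which is genuinely nondecreasing in $p$. What remains is to compare $\log(m/d)$ with $\log(1/B)$, and this is where the hypotheses actually enter:
\begin{itemize}
\item Using $B\le\delta$ and $d\ge 1$, one gets $m/d\le 16c_0^2\log(1/B)/B$.
\item This is at most $B^{-2}$ once $B$ is small enough in terms of $c_0$. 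That is what the condition $B\le 1/(32c_0^2\log c_0)$ is meant to guarantee, via the paper's step $m/d<1/B^2$.
\item Then $d\log(m/d)+\log(1/\delta)\le 2\bigl(d\log(1/B)+\log(1/\delta)\bigr)$.
\item For $p<B/2$ and $c_0\ge 2$, the deviation is at most $\frac12\sqrt{pB}+\frac{B}{4c_0}<\frac{B}{2\sqrt2}+\frac{B}{4c_0}\le\frac{B}{2}$. This contradicts $\hat p-p>B/2$.
\end{itemize}

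The paper reaches the same place by invoking \Cref{lem:pseudo-distance-estimation}, which is stated with $\log(m/d)$ in place of $\log(1/p)$ and no additive term. Keeping the additive term, as you do, is actually the more careful choice, provided its logarithm is capped.
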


Then, we need to upper-bound $\Delta_t$ for every $t$. Due to the previous two lemmas, the empirical excess risk and the empirical pseudo-distance are both good estimations of the population excess risk and pseudo-distance, respectively. Therefore, we can build the upper bound of $\Delta_t$ based on the population excess risk and pseudo-distance, stated as the following lemma. 
\begin{lemma}
  \label{lem:upper-bound-of-p-Delta}
Let $\gamma, B>0$. Let $\mathcal{X}$ be (some subset of) the instance space. Let $\Delta\subseteq\mathcal{X}$ be a region obtained via the following procedure:
\begin{algorithm}[H]
\caption{The Population Version of the Subroutine in each for-loop of \Cref{alg:main-algo}}
\label{alg:population-subroutine}
\begin{algorithmic}[1]  
    \State Let $V=\left\{h\in\hpc: \Pr_{\mathcal{D}}(\mathrm{ER}(h)\cap\mathcal{X}) \leq \Pr_{\mathcal{D}}(\mathrm{ER}(h^{*}_{\mathcal{D}})\cap\mathcal{X})+\gamma\right\}$ and $\Delta=\emptyset$
    \While{$\exists h_{1}, h_{2}\in V$ such that $\Pr_{\mathcal{D}}(\{h_{1}\neq h_{2}\}\cap\mathcal{X}\setminus\Delta) > B$}
        \State Let $h_{1},h_{2}=\argmax_{(h_{1}^{'}, h_{2}^{'})}\Pr_{\mathcal{D}}(\{h_{1}^{'}\neq h_{2}^{'}\}\cap\mathcal{X}\setminus\Delta)$
        \State Update $\Delta \leftarrow \Delta\cup\{h_{1}\neq h_{2}\}$
    \EndWhile
\end{algorithmic}
\end{algorithm}

Assume the Tsybakov noise model with parameter $(a,\alpha)$. We have
\begin{equation*}
    \Pr_{\mathcal{D}}(\Delta) \leq 2a'\left(\frac{2\gamma}{B}\right)^{\frac{\alpha}{1-\alpha}} ,
\end{equation*}
where $a' = (1-\alpha)(2\alpha)^{\alpha/(1-\alpha)}a^{1/(1-\alpha)}$ as defined at the beginning.
\end{lemma}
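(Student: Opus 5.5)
The plan is to turn the two defining properties of the procedure into a pointwise-averaged bound on the margin $|2\eta_{\mathcal{D}}-1|$ over $\Delta$, and then to apply the Tsybakov condition once, at a well-chosen threshold. The bound on the margin comes from two facts. Every hypothesis in $V$ has excess risk at most $\gamma$. Every piece added to $\Delta$ has mass larger than $B$.

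\textbf{Step 1 (pieces and their masses).} Suppose the while loop runs $k$ times and selects pairs $(h_1^{(i)},h_2^{(i)})$, $i=1,\ldots,k$. Let $\Delta^{(i-1)}$ denote the region before iteration $i$. Define the new piece
\begin{equation*}
D_i=\{h_1^{(i)}\neq h_2^{(i)}\}\setminus\Delta^{(i-1)} .
\end{equation*}
These pieces are disjoint and satisfy $\Delta=\bigcup_i D_i$. By the loop condition, $\Pr_{\mathcal{D}}(D_i)>B$. Hence $\Pr_{\mathcal{D}}(\Delta)>kB$, and in particular $k<1/B$, so the loop terminates.

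\textbf{Step 2 (margin mass on each piece).} For any $h\in\hpc$ we use the standard identity
\begin{equation*}
\Pr_{\mathcal{D}}(\mathrm{ER}(h))-\Pr_{\mathcal{D}}(\mathrm{ER}(h^{*}_{\mathcal{D}}))=\E\big[|2\eta_{\mathcal{D}}(x)-1|\,\mathbbm{1}(h(x)\neq h^{*}_{\mathcal{D}}(x))\big].
\end{equation*}
On $\{h_1\neq h_2\}$, at least one of $h_1,h_2$ disagrees with $h^{*}_{\mathcal{D}}$. Therefore
\begin{equation*}
\E\big[|2\eta_{\mathcal{D}}-1|\mathbbm{1}_{D_i}\big]\le \E\big[|2\eta_{\mathcal{D}}-1|\mathbbm{1}\{h_1^{(i)}\neq h^{*}_{\mathcal{D}}\}\big]+\E\big[|2\eta_{\mathcal{D}}-1|\mathbbm{1}\{h_2^{(i)}\neq h^{*}_{\mathcal{D}}\}\big]\le 2\gamma ,
\end{equation*}
using $h_1^{(i)},h_2^{(i)}\in V$. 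Summing over $i$ and using $k<\Pr_{\mathcal{D}}(\Delta)/B$ from Step 1 gives
\begin{equation*}
\E\big[|2\eta_{\mathcal{D}}-1|\mathbbm{1}_{\Delta}\big]\le 2k\gamma\le (2\gamma/B)\Pr_{\mathcal{D}}(\Delta).
\end{equation*}
This averaging over pieces is the crux: the total margin budget is $2k\gamma$, while the total mass is at least $kB$. Each piece must be bounded separately, so that disagreements counted on overlapping regions are not double counted; working with the disjoint $D_i$ handles this.

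\textbf{Step 3 (apply Tsybakov at threshold $s$).} Fix $s>0$ and split $\Delta$ according to whether $|\eta_{\mathcal{D}}-1/2|\le s$. The first part is controlled by Definition~\ref{def:Tsybakov-noise-condition}. The second part is controlled by Markov's inequality, since $|2\eta_{\mathcal{D}}-1|>2s$ there:
\begin{equation*}
\Pr_{\mathcal{D}}(\Delta)\le a's^{\frac{\alpha}{1-\alpha}}+\frac{1}{2s}\E\big[|2\eta_{\mathcal{D}}-1|\mathbbm{1}_{\Delta}\big]\le a's^{\frac{\alpha}{1-\alpha}}+\frac{\gamma}{Bs}\Pr_{\mathcal{D}}(\Delta).
\end{equation*}
Choosing $s=2\gamma/B$ makes the last coefficient equal to $1/2$. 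Rearranging yields
\begin{equation*}
\Pr_{\mathcal{D}}(\Delta)\le 2a'\left(\frac{2\gamma}{B}\right)^{\frac{\alpha}{1-\alpha}} ,
\end{equation*}
as claimed.

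The main obstacle is Step 2, specifically obtaining a bound proportional to $\Pr_{\mathcal{D}}(\Delta)$ rather than to the number of pairs. Once that is in place, Step 3 is routine. The $\argmax$ choice in the subroutine is not needed; only the threshold $B$ on each new piece is used.
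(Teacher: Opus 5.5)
Your proposal is correct and is essentially the paper's argument. The paper also splits $\Delta$ into the disjoint new pieces $\{h_1^{(j)}\neq h_2^{(j)}\}\setminus\Delta^{(j-1)}$ and uses the $\gamma$-bounded excess risk of both hypotheses to get $\Pr_{\mathcal{D}}(h^{*}_{\mathcal{D}}(x)\neq y\mid \Delta)\geq 1/2-\gamma/B$, which is your bound $\E[|2\eta_{\mathcal{D}}-1|\mathbbm{1}_{\Delta}]\le(2\gamma/B)\Pr_{\mathcal{D}}(\Delta)$ in conditional form; it then applies the Tsybakov condition at the same threshold $2\gamma/B$, and your margin/Markov bookkeeping is an equivalent rewriting of that computation that also yields the constant $a'$ directly, where the paper's write-up switches between $a$ and $a'$.
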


With the three lemmas above in hand, we can prove the following lemma and use it as the inductive step.
\begin{lemma}
  \label{lem:approximate-linear-BCC}
Suppose $\mathcal{E}_{t-1}$ happens, $\mathcal{E}_{t}$ will happen with probability at least $1-\delta_{t}$, where $\delta_t = (\delta/5)^{T+1-t}$.
\end{lemma}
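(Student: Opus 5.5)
We condition on $\mathcal{E}_{t-1}$, in particular $\Pr_{\mathcal{D}}(\Delta_{t-1})\le c_2 a_{t-1}^{-\alpha/(1-\alpha)}$ (for $t=1$ we use $\Delta_0=\mathcal{X}$ and the trivial bound). We then verify the five bullets of $\mathcal{E}_t$ in order. The failure probability $\delta_t$ is split among three sources: the excess-risk estimation on $S_{1,t}$, the pseudo-distance estimations on the successive fresh batches $S_{2,t}$, and a union over the at most $s_t$ batches.

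\emph{Step 1 (pruning).} Apply \Cref{lem:prune} with $\Delta=\Delta_{t-1}$, $m=n_t$, $S_m=S_{1,t}$, and confidence $\delta_t/2$. Substituting $\Pr_{\mathcal{D}}(\Delta_{t-1})\le c_2 a_{t-1}^{-\alpha/(1-\alpha)}$ into the $\gamma$ of that lemma shows it is at most $\gamma_t$; the constant $C_1\ge c_2/4+\dots$ absorbs $c_2$ inside the logarithm and the exponent $1/(2-\alpha)$. Since $\Delta_{t-1}$ depends only on earlier fresh samples, the lemma applies conditionally. Its first implication gives the first bullet, namely excess risk on $\Delta_{t-1}$ at most $c\gamma_t\le 2c\gamma_t$ for all $h\in\hpc_{t-1}$. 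Its second implication, together with the ERM bound (\Cref{rm:tsybakov-erm-bound}), gives $h^*_{\mathcal{D}}\in\hpc_{t-1}$, the fifth bullet.

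\emph{Step 2 (while loop).} Each iteration draws fresh $S_{2,t}$ of size $m_t$ independent of the current $\Delta_t$. We apply \Cref{lem:uniform-chernoff-pseudo-distance-estimation} with $B=a_t\gamma_t$ (small for $n$ large) and region $\Delta_{t-1}\setminus\Delta_t$, using confidence $\delta_t/(4s_{\max})$. This changes only the logarithmic term of $m_t$, which is absorbed since $s_{\max}=2c_2a_t^{-1/(1-\alpha)}\gamma_t^{-1}$ is polynomial in $1/(a_t\gamma_t)$; alternatively, take $\delta_t$ smaller by a constant factor. On this event, each pair added has population disagreement on $\Delta_{t-1}\setminus\Delta_t$ at least $a_t\gamma_t/2$. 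At termination, all $f,g\in\hpc_{t-1}$ have empirical distance below $a_t\gamma_t$, hence population distance at most $2a_t\gamma_t$, which is the second bullet.

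\emph{Step 3 (size of $\Delta_t$ and $s_t$).} Compare with the population procedure of \Cref{lem:upper-bound-of-p-Delta}, run on the ambient set $\Delta_{t-1}$ with $V\supseteq\hpc_{t-1}$ at excess level $2c\gamma_t$ and threshold $B=a_t\gamma_t/2$. The key point is that the proof of that lemma uses only two properties of the added pairs: each lies in $V$, and each contributes new mass at least $B$. It does not use the argmax choice. Indeed, the Tsybakov condition forces $|\eta-1/2|\lesssim \gamma/B$ on a large portion of each added region, because both hypotheses have excess risk at most $\gamma$ while disagreeing on mass $>B$, and $\Pr(|\eta-1/2|\le s)\le a's^{\alpha/(1-\alpha)}$. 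We would re-derive this for the empirical loop. This yields $\Pr_{\mathcal{D}}(\Delta_t)\le 2a'(8c/a_t)^{\alpha/(1-\alpha)}\le c_2 a_t^{-\alpha/(1-\alpha)}$, which is the third bullet; the constants in $c_2$ are chosen for exactly this. Since each step adds mass at least $a_t\gamma_t/2$, we get $s_t\le 2\Pr(\Delta_t)/(a_t\gamma_t)\le 2c_2a_t^{-1/(1-\alpha)}\gamma_t^{-1}$, the fourth bullet. This bound also justifies, after the fact, the union over $s_{\max}$ batches in Step 2; we formalize it by stopping the argument at $s_{\max}$ iterations.

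The main obstacle is Step 3. Extending \Cref{lem:upper-bound-of-p-Delta} to non-argmax, empirically selected pairs, while keeping the constants matched to $c_2$, requires care. So does the circularity of bounding $s_t$ while union-bounding over iterations, which we handle by truncation. A union bound over the Step 1 and Step 2 events then gives total failure probability at most $\delta_t$.
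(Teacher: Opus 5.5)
Your proposal is correct. On the first, second and fifth bullets it coincides with the paper: \Cref{lem:prune} on $S_{1,t}$ at confidence $\delta_t/2$ together with \Cref{rm:tsybakov-erm-bound}, then \Cref{lem:uniform-chernoff-pseudo-distance-estimation} with $B_t=a_t\gamma_t$. Your counting argument for $s_t$ is also the paper's.

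The genuine difference is in the third bullet and in the probability bookkeeping. The paper runs the population, argmax-based \Cref{alg:population-subroutine} with $\gamma=2c\gamma_t$ and $B=B_t/2$ to get a region $\Delta'_t$. It bounds $\Pr_{\mathcal{D}}(\Delta'_t)$ by \Cref{lem:upper-bound-of-p-Delta} and then simply asserts $\Delta_t\subseteq\Delta'_t$. It also applies the pseudo-distance lemma once, at confidence $\delta_t/2$, for the entire while loop.

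You instead observe that the proof of \Cref{lem:upper-bound-of-p-Delta} never uses the argmax. It only uses that each added pair lies in $V$ and adds fresh mass at least $B$, so the averaging-plus-Tsybakov argument applies directly to the empirically selected pairs. You also union-bound over the at most $s_{\max}+1$ adaptively drawn batches, breaking the circularity by truncation.

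Your route is slightly longer but more rigorous. The containment $\Delta_t\subseteq\Delta'_t$ is unjustified and need not hold: the population procedure only ensures that surviving pairs disagree on mass at most $B$ outside $\Delta'_t$, not zero. Likewise, each fresh batch is a separate random event that must be paid for.

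One detail to tighten is the price of that union bound. In \Cref{alg:main-algo}, $m_t$ is fixed with the term $\log(2/\delta_t)$, while $\log s_{\max}$ grows like $(T-t)/(1-\alpha)$, i.e.\ with $n$ through $T$. So shrinking $\delta_t$ by a constant factor does not absorb the extra $\log(2s_{\max})$. Instead, enlarge the leading constant of $m_t$ by an $\alpha$-dependent factor. Your observation that $s_{\max}$ is polynomial in $1/(a_t\gamma_t)$ makes this suffice, and it only changes constants in the sample-size requirement of \Cref{lem:number-of-samples}.
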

The proof of this lemma requires the $\Delta_{t-1}$ upper bound from the assumption that $\mathcal{E}_{t-1}$ happens. From \Cref{lem:prune}, we know the first bullet is correct. The first statement of \Cref{lem:uniform-chernoff-pseudo-distance-estimation} yields the second bullet. The first bullet of event $\mathcal{E}_t$ and the second statement of \Cref{lem:uniform-chernoff-pseudo-distance-estimation}, combining with the fact that every pair found by the algorithm has a large empirical pseudo-distance, prove the required upper bound of $\Delta_t$ and $s_t$. The complete proof is deferred to \Cref{sec:omitted-proofs} for brevity. 

Finally, we prove the following lemma that the assumption we made in the proof of \Cref{thm:optimal-rates-main} holds, with probability at least $1-2\delta/5$ by induction.
\begin{lemma}
\label{lem:good-events-hold}
    With probability at least $1-2\delta/5$, for all $t$, $\mathcal{E}_t$ happens.
\end{lemma}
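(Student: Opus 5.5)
We prove \Cref{lem:good-events-hold} by induction on $t$, using \Cref{lem:approximate-linear-BCC} as the inductive step and a union bound over the $T$ rounds.

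\textbf{Base case.} Take the convention that $\mathcal{E}_0$ holds trivially. Here $\Delta_0=\mathcal{X}$, so $\Pr_{\mathcal{D}}(\Delta_0)=1$. With $a_0$ read from the defining formula, the bound $\Pr(\Delta_0)\le c_2 a_0^{-\alpha/(1-\alpha)}$ holds because $c_2\ge 1$ and $a_0\le 1$ for $n$ in the stated range. The pruning bullet for $t=0$ is vacuous. Hence the hypothesis of \Cref{lem:approximate-linear-BCC} holds at $t=1$. In particular, the choice of $\gamma_1$ (through $a_0$) matches the $\Pr_{\mathcal{D}}(\Delta_0)$ needed in \Cref{lem:prune}. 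This requires only $\log\bigl((n_1/(d+\log(1/\delta)))^{\alpha/(2-\alpha)}a_0^{-\alpha/(1-\alpha)}\bigr)\ge \log\bigl((n_1/(d+\log(1/\delta)))^{\alpha/(2-\alpha)}\Pr(\Delta_0)\bigr)$, which holds since $\log$ is monotone.

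\textbf{Inductive step.} Let $F_t$ denote the event that $\mathcal{E}_{t-1}$ holds but $\mathcal{E}_t$ fails. By \Cref{lem:approximate-linear-BCC}, conditioned on $\mathcal{E}_{t-1}$ (and on everything the algorithm has done up to the start of round $t$), $\Pr(F_t)\le\delta_t$. This conditioning is legitimate because round $t$ draws fresh samples $S_{1,t}$ and $S_{2,t}$ that are independent of $\Delta_{t-1}$ and $\hpc_{t-2}$. The event that some $\mathcal{E}_t$ fails is contained in $\bigcup_{t=1}^{T}F_t$: if some $\mathcal{E}_t$ fails, let $t$ be the first such index; then $\mathcal{E}_{t-1}$ holds, by the base-case convention when $t=1$. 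A union bound gives
\[
\Pr\Bigl(\exists t:\ \mathcal{E}_t\text{ fails}\Bigr)\le\sum_{t=1}^{T}\delta_t=\sum_{t=1}^{T}(\delta/5)^{T+1-t}=\sum_{k=1}^{T}(\delta/5)^k\le \frac{\delta/5}{1-\delta/5}\le \frac{\delta}{4}\le \frac{2\delta}{5}.
\]

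\textbf{Main obstacle.} The delicate point is sample availability. The \textbf{while} loop draws a fresh batch $S_{2,t}$ on every update, so we must know that fresh samples never run out before round $t$ finishes; otherwise \Cref{lem:approximate-linear-BCC} would be applied to an algorithm that cannot be executed. I would handle this as follows. On $\mathcal{E}_1,\dots,\mathcal{E}_{t-1}$ together with the event of round $t$, the $s_t$ bound caps the number of draws. Following the proof of \Cref{lem:number-of-samples}, the cumulative consumption up to round $t$ is at most $4n$. Equivalently, one can imagine an infinite i.i.d.\ stream and note that, on the good events, at most $4n$ samples are ever read. This reduces the lemma to the probabilistic bookkeeping above.
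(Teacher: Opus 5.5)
Your overall structure is the paper's: \Cref{lem:approximate-linear-BCC} is the inductive step, and the failure probabilities are summed to $\sum_t\delta_t\le 2\delta/5$. Your first-failure union bound is equivalent to the paper's chain-rule product, and your infinite-stream device for sample availability is reasonable bookkeeping that the paper leaves implicit.

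The gap is in the base case. Read from the defining formula, $a_0=2^{-T}\bigl(n2^{-T}/(d+\log(1/\delta))\bigr)^{\frac{1-\alpha}{2-\alpha}}$. Since $2^{T}\asymp\log(n/d)$, this grows like $n^{\frac{1-\alpha}{2-\alpha}}$ up to polylogarithmic factors. The range of $n$ in \Cref{thm:optimal-rates-main} is a \emph{lower} bound on $n$, so it forces $a_0\gg 1$, not $a_0\le 1$. Two consequences follow:
\begin{itemize}
\item For large $n$, $c_2a_0^{-\alpha/(1-\alpha)}<1=\Pr_{\mathcal{D}}(\Delta_0)$, so your fictitious $\mathcal{E}_0$ is false.
\item Your monotonicity step for $\gamma_1$ runs the wrong way. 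With the formula value, the argument of the logarithm in $\gamma_1$ is $(n_1/(d+\log(1/\delta)))^{\frac{\alpha}{2-\alpha}}a_0^{-\frac{\alpha}{1-\alpha}}=2^{\frac{\alpha}{2-\alpha}+\frac{\alpha T}{1-\alpha}}$. The logarithmic factor is therefore only of order $T\asymp\log\log(n/d)$. In contrast, \Cref{lem:prune} with $\Delta=\mathcal{X}$ needs $\frac{\alpha}{2-\alpha}\log\bigl(n_1/(d+\log(1/\delta))\bigr)$.
\end{itemize}
With such a small $\gamma_1$, nothing available (\Cref{lem:prune} or \Cref{lem:excess-risk-under-tsybakov-noise}) guarantees $h^{*}_{\mathcal{D}}\in\hpc_0$ or the first bullet of $\mathcal{E}_1$. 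The ERM bounds on all of $\mathcal{X}$ carry the full $\log(n/d)$ factor.

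The repair is the convention implicit in the paper's base case, where $\gamma_1$ is written with no $a_0$ factor. In effect, set $a_0:=1$, i.e.\ $a_0^{-\alpha/(1-\alpha)}=\Pr_{\mathcal{D}}(\Delta_0)=1$, so that $\gamma_1=C_1\bigl(\frac{d}{n_1}\log\bigl((n_1/(d+\log(1/\delta)))^{\frac{\alpha}{2-\alpha}}\bigr)+\frac{1}{n_1}\log(2/\delta_1)\bigr)^{\frac{1}{2-\alpha}}$. Then $\Pr_{\mathcal{D}}(\Delta_0)=1\le c_2=c_2a_0^{-\alpha/(1-\alpha)}$ holds because $c_2\ge 1$. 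The proof of \Cref{lem:approximate-linear-BCC} uses only the $\Pr_{\mathcal{D}}(\Delta_{t-1})$ bullet of $\mathcal{E}_{t-1}$, so your reduction of $t=1$ to that lemma then goes through. The paper instead checks $\mathcal{E}_1$ directly, using \Cref{lem:prune} with $\Delta=\mathcal{X}$ and $\delta_0=\delta$, \Cref{lem:uniform-chernoff-pseudo-distance-estimation} with $B_1=a_1\gamma_1$, and \Cref{lem:upper-bound-of-p-Delta}. Once $a_0:=1$, the two treatments are the same argument.
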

From \Cref{lem:approximate-linear-BCC}, it suffices to verify the base case, which can be done by following the same reasoning as in the proof of \Cref{lem:approximate-linear-BCC}, and using \Cref{lem:prune} and \Cref{lem:uniform-chernoff-pseudo-distance-estimation}. Finally, a union bound arguing that all $\mathcal{E}_t$ happen simultaneously with probability at least $1- 2\delta/5$ will complete the proof.

\section{Conclusion and Future Works}
  \label{sec:conclusion}
In this work, we proposed a new proper learner that guarantees the optimal excess risks of PAC learning under Tsybakov noise, resolving an open problem lasting for decades. Our new algorithm leverages the idea of adaptively partitioning the instance space into regions with different noise levels, tackling each region separately by running ERMs, and finally returning a hypothesis in the concept class that satisfies a specific error constraint for each region. 

We believe our techniques might find further applications in learning theory. For instance, the recent work of \cite{hanneke2024revisiting} provided the first learning algorithm that achieves the optimal first-order excess risks under the agnostic setting. While being optimal for the higher-order term, their upper bound has a sub-optimal lower-order term. We believe that the idea of noise isolation with multiple regions can be useful to refine the lower-order term in the first-order excess risk bound for agnostic PAC learning (though in this case, an improper learner is necessary). Indeed, our technique of partitioning the instance space into regions shares connections with several recent advances in non-realizable learning theory \cite{hanneke2024revisiting,hanneke2025agnostic}. It is worthwhile exploring this approach under other non-realizable learning scenarios, and even more interesting if a unified theory towards understanding this idea can be developed.

Our work also leaves several interesting research problems for future work. Note that our learning algorithm requires knowing the Tsybakov noise condition parameters $(a,\alpha)$ in advance. A natural follow-up question is whether it is possible to design an optimal learning algorithm under Tsybakov noise that automatically adapts to these parameters. Secondly, our learning algorithm is $\delta$-dependent (requiring setting the confidence in advance and taking $\delta$ as a hyperparameter). This is in contrast to the known optimal PAC learners under the realizable setting, which are all $\delta$-independent. Hence, an interesting question is, can we develop certain optimal learning algorithms under Tsybakov noise that can automatically adapt to any $\delta$. We would like to mention that, recent advances in designing optimal learners under various learning settings all admit $\delta$-dependent algorithms \cite{hanneke2024revisiting,hanneke2025agnostic}. Finally, it remains open what the optimal excess risk bound in expectation is and what algorithms achieve it. Note that our algorithm is $\delta$-dependent and thus does not automatically lead to an optimal in-expectation guarantee, whereas a positive solution to the above question of $\delta$-independent optimal learners would immediately imply an optimal in-expectation excess risk bound. 
Another natural question concerns computational efficiency.  The work of \cite{diakonikolas2021efficiently} provides polynomial-time learnability of Halfspaces under Tsybakov noise.  It is natural to consider whether it is possible to achieve the optimal error guarantee, established in the present work, with a polynomial-time method, for special classes such as Halfspaces.  Beyond such specialized results, it is also natural to ask whether, for every concept class, there is a variant of our technique expressible as an efficient \emph{reduction} to an ERM oracle.  

\newpage
\appendix
\section{Omitted proofs in Section~\ref{sec:main-results}}
  \label{sec:omitted-proofs}

\begin{lemma}[\textbf{Restate of \Cref{lem:number-of-samples}}]
\label{lem:number-of-samples-re}
    For any $n = \Omega((d+\log(1/\delta))(4(2-\alpha)/(1-\alpha))^{(8(2-\alpha)/(1-\alpha))})$, the total number of fresh samples required by \Cref{alg:main-algo} is at most $4n$, if $\mathcal{E}_t$ happens for all $t$.
\end{lemma}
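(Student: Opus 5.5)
We count the fresh samples drawn by \Cref{alg:main-algo} in three groups and show their total is at most $4n$. The first group is $S_3$, which uses exactly $n$ samples. The second group is $\bigcup_t S_{1,t}$, which uses $\sum_{t=1}^{T} n_t = \sum_{t=1}^{T} n2^{t-T} \leq 2n$ samples. The third group is the set of while-loop draws $S_{2,t}$. In round $t$, one batch of $m_t$ samples is drawn before the loop and one more after each update of $\Delta_t$. On $\mathcal{E}_t$ there are at most $s_t$ updates, so round $t$ uses at most $(s_t+1)m_t$ samples. It therefore suffices to prove
\[
\sum_{t=1}^{T}(s_t+1)m_t \leq n .
\]

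Using the fourth bullet of $\mathcal{E}_t$, we have $s_t+1 \leq 3c_2 a_t^{-1/(1-\alpha)}\gamma_t^{-1}$, because $a_t\gamma_t\leq 1$. By the definition of $m_t$, each term is then at most
\[
24c_0^2c_2\, a_t^{-\frac{2-\alpha}{1-\alpha}}\gamma_t^{-2}\Bigl(d\log\tfrac{1}{a_t\gamma_t}+\log\tfrac{2}{\delta_t}\Bigr).
\]
We substitute the explicit schedules. Write $N=d+\log(1/\delta)$. Then $a_t^{-(2-\alpha)/(1-\alpha)} = 2^{(T-t)(2-\alpha)/(1-\alpha)}(N/n_t)$. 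Since $\gamma_t\geq C_1(N/n_t)^{1/(2-\alpha)}$ (this follows from $\log\geq 1$ under the paper's convention and $\log(2/\delta_t)\geq\log(1/\delta)$), we get $\gamma_t^{-2}\leq C_1^{-2}(n_t/N)^{2/(2-\alpha)}$. For the logarithm, the computation already done in the proof of \Cref{thm:optimal-rates-main} gives $\log(1/(a_t\gamma_t)) \lesssim (T-t)/(1-\alpha) + \frac{\alpha}{2-\alpha}\log(n_t/N)$. We also have $\log(2/\delta_t)\lesssim (T+1-t)\log(5/\delta)$. Multiplying out, the round-$t$ term is at most
\[
C\,2^{(T-t)\frac{2-\alpha}{1-\alpha}}\,(T-t+1)\Bigl(\frac{n_t}{N}\Bigr)^{\frac{\alpha}{2-\alpha}}N\,\log\frac{n}{N},
\]
where $C$ depends only on $(a,\alpha)$.

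The main obstacle is showing that this sum is $\leq n$. The troublesome factor $2^{(T-t)(2-\alpha)/(1-\alpha)}$ grows as $t$ decreases. Two things work against it. First, $n_t^{\alpha/(2-\alpha)} = n^{\alpha/(2-\alpha)}2^{-(T-t)\alpha/(2-\alpha)}$, which removes part of the growth. Second, and more importantly, $T-t\leq T\approx\log\log(n/d)$, so the factor is bounded by $(\log(n/N))^{(2-\alpha)/(1-\alpha)}$ up to constants. Summing over the $T$ rounds therefore gives a total of order
\[
N^{\frac{2-2\alpha}{2-\alpha}}\, n^{\frac{\alpha}{2-\alpha}}\,\mathrm{polylog}(n/N),
\]
with polylog exponent about $\frac{2-\alpha}{1-\alpha}+2$. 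This equals $n\,(N/n)^{\frac{2(1-\alpha)}{2-\alpha}}\mathrm{polylog}(n/N)$. So it remains to check that $(n/N)^{2(1-\alpha)/(2-\alpha)}$ dominates $C\log^{k}(n/N)$ with $k\leq 4(2-\alpha)/(1-\alpha)$. This is where the threshold $n\gtrsim N\,(4(2-\alpha)/(1-\alpha))^{8(2-\alpha)/(1-\alpha)}$ comes in. Setting $x=n/N$, the standard fact is that $x^{\epsilon}\geq \log^{k}x$ whenever $x\geq (2k/\epsilon)^{2k/\epsilon}$. Here $\epsilon = 2(1-\alpha)/(2-\alpha)$, and the constant $C$ is absorbed into the $\Omega(\cdot)$.

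The same largeness of $n$ also ensures that $B=a_t\gamma_t$ satisfies the smallness hypothesis of \Cref{lem:uniform-chernoff-pseudo-distance-estimation}. This is not needed for the count itself. Adding the three groups gives $n+2n+n=4n$, which proves the lemma.
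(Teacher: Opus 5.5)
Your decomposition is the paper's: $|S_3|=n$, $|S_1|\le 2n$, $|S_2|$ bounded via the fourth bullet of $\mathcal{E}_t$ and the explicit $a_t,\gamma_t,m_t$, and finally $(n/N)^{2(1-\alpha)/(2-\alpha)}$ compared against a power of $\log(n/N)$. You are even more careful than the paper in charging the batch drawn before each while loop.

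\textbf{The gap is in the last step.} Write $\kappa=\frac{2-\alpha}{1-\alpha}$, $\epsilon=2/\kappa$ and $\kappa'=\kappa-\frac{\alpha}{2-\alpha}$. You rightly keep the geometric factor. After the $n_t^{\alpha/(2-\alpha)}$ cancellation it is $2^{(T-t)\kappa'}$, which at $t=1$ is about $(\log(n/d))^{\kappa'}$, so your polylog exponent $k$ grows linearly in $\kappa$. But then your own fact requires $n/N\ge(2k/\epsilon)^{2k/\epsilon}=(k\kappa)^{k\kappa}$. That is $(4\kappa^2)^{4\kappa^2}$ for your $k=4\kappa$, and even with $k\approx\kappa+2$ it forces $\log(n/N)\gtrsim\kappa^2\log\kappa$.

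The stated threshold gives only $\log(n/N)\approx 8\kappa\log(4\kappa)$. It is exactly $(k/\epsilon)^{2k/\epsilon}$ with $k=8$. The paper reaches $k=8$ by asserting $\kappa'=\frac{4-5\alpha+2\alpha^2}{2-3\alpha+\alpha^2}<3$, hence $\sum_t2^{(T-t)\kappa'}\le 8^T\le(\log(n/N))^6$. That inequality holds only for $\alpha<2-\sqrt2$, and $\kappa'\to\infty$ as $\alpha\to1$. So your treatment of the geometric factor is the sound one, but it cannot be squeezed into the stated threshold. At $n/N=(4\kappa)^{8\kappa}$, the $t=1$ term of the bound you sum carries $2^{(T-1)\kappa'}\ge(8\kappa\log(4\kappa))^{\kappa'}$ against only $(n/N)^{\epsilon}=(4\kappa)^{16}$, and this loses as $\alpha\to1$ for fixed $a$.

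So ``this is where the threshold comes in'' is incorrect. What you have shown is the lemma for $n\ge C(a,\alpha)\,(d+\log(1/\delta))$, with $\log C(a,\alpha)$ of order $\kappa^2\log\kappa$ plus $a$-dependent terms. That matches the statement only if $\Omega(\cdot)$ is read as hiding arbitrary $(a,\alpha)$-dependent factors. The paper tacitly does this when it demands $\log(n/N)\ge 16c_0^2c_2\alpha/(2-\alpha)$, but then the explicit factor $(4\kappa)^{8\kappa}$ is immaterial. You should either say this or state the threshold you actually obtain.

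\textbf{Two smaller slips.}

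First, the inequality $s_t+1\le 3c_2a_t^{-1/(1-\alpha)}\gamma_t^{-1}$ does not follow from $a_t\gamma_t\le 1$. You need $a_t^{1/(1-\alpha)}\gamma_t=a_t^{\alpha/(1-\alpha)}\,a_t\gamma_t\le c_2$, and $a_t$ can be large. At $t=T$ one has $a_T^{1/(1-\alpha)}\gamma_T\ge C_1>c_2/4$, so the right-hand side is $O(1)$ and need not cover the extra batch. Simply bound $\sum_t m_t$ on its own; it is of lower order.

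Second, $T$ is defined through $\log\log(n/d)$, so $2^{T}$ is governed by $\log(n/d)$. When $\log(1/\delta)\gg d$, this is not within a constant factor of $\log(n/N)$. The paper makes the same slip (``$T<2\log\log(n/(d+\log(1/\delta)))$''). Your bound by $(\log(n/N))^{\kappa}$ needs $T$ defined via $n/(d+\log(1/\delta))$, or an extra assumption relating $d$ and $\log(1/\delta)$.
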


\begin{proof}[Proof of \Cref{lem:number-of-samples-re} (Under the assumption that for all $t$, $\mathcal{E}_t$ happens)]

    According to \Cref{alg:main-algo}, there are three groups of fresh samples that are used in the algorithm, which are $S_1$, $S_2$, and $S_3$. Notice that we have $|S_3| = n$ and $|S_1| = \sum_{t=1}^T |S_{1,t}| = \sum_{t=1}^T n_t = \sum_{t=1}^T n2^{t-T} \leq 2n$.

    After that, we only need to show that $|S_2| = \sum_{t=1}^T s_t|S_{2,t}|$. Then by the definition, we have $|S_2| = \sum_{t=1}^T s_t m_t$ and $m_t = 8c_0^2(d\log(1/a_t\gamma_t)+\log(2/\delta_t))/(a_t\gamma_t)$. Then because $\mathcal{E}_t$ happens for all $t$, we know that $s_t \leq  2c_2 a_t^{-1/(1-\alpha)} \gamma_t^{-1}$. Therefore, we have
    \begin{align*}
        |S_2| = \sum_{t=1}^T s_t m_t
        \leq \sum_{t=1}^T \frac{16c_0^2c_2}{a_t^{\frac{2-\alpha}{1-\alpha}}\gamma_t^2}\left(d\log\left(\frac{1}{a_t\gamma_t}\right)+\log\left(\frac{2}{\delta_t}\right)\right).
    \end{align*}
    By definition, $\log(2/\delta_t) \leq T\log(1/\delta)$. Notice that $a_{t}\gamma_{t} > 2^{t-T}((d+\log(1/\delta))/n_{t})^{\alpha/(2-\alpha)}$, thus, by taking $C_3 = 16c_0^2c_2(\alpha/(2-\alpha))\log(n/(d+\log(1/\delta)))$. 
    \begin{align*}
        |S_2| \leq \sum_{t=1}^T \frac{C_3T}{a_t^{\frac{2-\alpha}{1-\alpha}}\gamma_t^2}\left(d+\log\left(\frac{1}{\delta}\right)\right)
        = C_3T\left(d+\log\left(\frac{1}{\delta}\right)\right) \sum_{t=1}^T \frac{1}{a_t^{\frac{2-\alpha}{1-\alpha}}\gamma_t^2}.
    \end{align*}
    Notice that $\delta_t < \delta$, we know that $\gamma_t > ((d+\log(1/\delta))/n_t)^{1/(2-\alpha)}$. Therefore, we have
    \begin{align*}
        |S_2| \leq &C_3T\left(d+\log\left(\frac{1}{\delta}\right)\right) \sum_{t=1}^T \frac{1}{\left(2^{t-T}(n_t/(d+\log(1/\delta)))^{\frac{1-\alpha}{2-\alpha}}\right)^{\frac{2-\alpha}{1-\alpha}}\left((d+\log(1/\delta))/n_t\right)^{\frac{2}{2-\alpha}}}\\
        = &C_3T\left(d+\log\left(\frac{1}{\delta}\right)\right)^{\frac{2-2\alpha}{2-\alpha}}\sum_{t= 1}^T 2^{(T-t)(\frac{2-\alpha}{1-\alpha})}n_t^{\frac{\alpha}{2-\alpha}}\\
        = &C_3T\left(d+\log\left(\frac{1}{\delta}\right)\right)^{\frac{2-2\alpha}{2-\alpha}}n^{\frac{\alpha}{2-\alpha}}\sum_{t= 1}^T 2^{(T-t)(\frac{4-5\alpha+2\alpha^2}{2-3\alpha+\alpha^2})}.
    \end{align*}
    The last equation comes from the definition of $n_t = n2^{t-T}$. Notice that $(4-5\alpha+2\alpha^2)/(2-3\alpha+\alpha^2) < 3$. Therefore, we have $\sum_{t= 1}^T 2^{(T-t)((4-5\alpha+2\alpha^2)/(2-3\alpha+\alpha^2))} < \sum_{t= 1}^T 2^{3(T-t)}$. Then, notice that by definition, we have $T < 2\log\log(n/(d+\log(1/\delta)))$, therefore, we have $\sum_{t= 1}^T 2^{3(T-t)} \leq 8^T = (\log(n/(d+\log(1/\delta))))^6$ and $T< \log(n/(d+\log(1/\delta)))$. By putting this term back, we get 
    \begin{equation*}
        |S_2| \leq 16c_0^2c_2\frac{\alpha}{2-\alpha}\left(d+\log\left(\frac{1}{\delta}\right)\right)^{\frac{2-2\alpha}{2-\alpha}}n^{\frac{\alpha}{2-\alpha}}\left(\log\left(\frac{n}{d+\log(1/\delta)}\right)\right)^8.
    \end{equation*}
    Divided by $n$ on both sides, we have
    \begin{equation*}
        \frac{|S_2|}{n} \leq 16c_0^2c_2\frac{\alpha}{2-\alpha} \left(\frac{d+\log(1/\delta)}{n}\right)^{\frac{2-2\alpha}{2-\alpha}}\left(\log\left(\frac{n}{d+\log(1/\delta)}\right)\right)^8.
    \end{equation*}
    Then, because $\alpha \in (0,1)$, $(2-2\alpha)/(2-\alpha)>0$. Therefore, we want to show that when $n = \Omega((d+\log(1/\delta))(4(2-\alpha)/(1-\alpha))^{(8(2-\alpha)/(1-\alpha))})$, we have $|S_2|/n \leq 1$. Notice that $n = \Omega((d+\log(1/\delta))(4(2-\alpha)/(1-\alpha))^{(8(2-\alpha)/(1-\alpha))})$, we have $\log(n/(d+\log(1/\delta)))\geq 16c_0^2c_2\alpha/(2-\alpha)$ by choosing the constant properly, this is equivalent to showing that 
    \begin{equation*}
        \left(\log\left(\frac{n}{d+\log\left(1/\delta\right)}\right)\right)^8 \leq \left(\frac{n}{d+\log\left(1/\delta\right)}\right)^{\frac{2-2\alpha}{2-\alpha}}.
    \end{equation*}
    Taking the logarithm on both sides, we have that it is equivalent to showing that 
    \begin{equation}
    \label{eq:S-2-bound}
        8\log\log\left(\frac{n}{d+\log\left(1/\delta\right)}\right) \leq {\frac{2-2\alpha}{2-\alpha}}\log\left(\frac{n}{d+\log\left(1/\delta\right)}\right).
    \end{equation}

    By the well-known fact that $x \geq 2a\log a \implies x\geq a\log x$.\footnote{This result is well-known and can be found as Lemma A.1 in Appendix A of the famous textbook \cite{shalev2014understanding}.} We know that $\log(n/(d+\log(1/\delta))))\geq 8(2-\alpha)/(1-\alpha)\log(4(2-\alpha)/(1-\alpha))$ is sufficient for inequality \ref{eq:S-2-bound}. Notice that we have $n = \Omega((d+\log(1/\delta))(4(2-\alpha)/(1-\alpha))^{(8(2-\alpha)/(1-\alpha))})$, therefore, $\log(n/(d+\log(1/\delta))))\geq 8(2-\alpha)/(1-\alpha)\log(4(2-\alpha)/(1-\alpha))$ as desired. Combining all these three bounds, we get that when $n = \Omega((d+\log(1/\delta))(4(2-\alpha)/(1-\alpha))^{(8(2-\alpha)/(1-\alpha))})$ and $\mathcal{E}_t$ happens for all $t$, the total number of fresh samples required by \Cref{alg:main-algo} is $4n$. This completes the proof.
\end{proof} 

\begin{lemma}[\textbf{Restate of \Cref{lem:prune}}]
  \label{lem:prune-re}
Let $\hpc$ be a concept class with $\vcdim=d$ and $h^*_{\mathcal{D}}\in \hpc$. Let $\mathcal{D}$ be a distribution that satisfies a Tsybakov noise condition with parameter $(a,\alpha)$ w.r.t. $\hpc$. Let $m\in\naturalnumber$, $\delta_0,\delta\in(0,1)$ and $\delta < \delta_0$. Let $\Delta\subseteq\mathcal{X}$ and $\gamma = ((d\log((m/(d+\log(1/\delta_0)))^{\alpha/(2-\alpha)}\Pr_{\mathcal{D}}(\Delta))+\log(1/\delta))/m)^{1/(2-\alpha)}$. Let $\mathrm{ERM}(\hpc;S_{m})=\{h\in\hpc: h=\argmin_{h\in\hpc}\hat{\Pr}_{S_{m}}(\mathrm{ER}(h))\}$ denote the class of empirical risk minimizers with i.i.d.\ samples $S_{m}=\{(x_{i}, y_{i})\}_{i=1}^{m}\sim\mathcal{D}^{m}$. Let $\hat{h}\in\mathrm{ERM}(\hpc;S_{m}\cap\Delta)$. Then, for any $h\in\hpc$, with probability at least $1-\delta$, 
\begin{align*}
    \hat{\Pr}_{S_m}(\ER{h}\cap\Delta) - \hat{\Pr}_{S_m}(\ERin{\hat{h}}\cap\Delta) \leq \gamma \;\;&\Longrightarrow\;\;\Pr_{\mathcal{D}}\left(\ER{h}\cap\Delta\right) - \Pr_{\mathcal{D}}\left(\ER{h^*_{\mathcal{D}}}\cap\Delta\right) \leq c\gamma\\
    \Pr_{\mathcal{D}}\left(\ER{h}\cap\Delta\right) - \Pr_{\mathcal{D}}\left(\ER{h^*_{\mathcal{D}}}\cap\Delta\right) \leq \gamma\;\;&\Longrightarrow\;\;\left|\hat{\Pr}_{S_m}(\ER{h}\cap\Delta) - \hat{\Pr}_{S_m}(\ERin{h^*_{\mathcal{D}}}\cap\Delta)\right| \leq c\gamma,
\end{align*}
for constant $c$ defined in the \Cref{alg:main-algo} as $16ac_{0}^2+2c_{0}+2$, where $c_0$ is a numerical constant.
\end{lemma}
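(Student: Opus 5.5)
The plan is to work on one high-probability event supplied by the uniform Bernstein inequality (\Cref{lem:uniform-bernstein-inequality}), applied to the class of differences $\mathbbm{1}_{\mathrm{ER}(h)\cap\Delta}-\mathbbm{1}_{\mathrm{ER}(h^*_{\mathcal{D}})\cap\Delta}$, $h\in\hpc$, at confidence $\delta$. On this event, for every $h\in\hpc$, writing $p_h=\Pr_{\mathcal{D}}(\{h\neq h^*_{\mathcal{D}}\}\cap\Delta)$, $E_h=\Pr_{\mathcal{D}}(\ER{h}\cap\Delta)-\Pr_{\mathcal{D}}(\ER{h^*_{\mathcal{D}}}\cap\Delta)$ and $\hat E_h$ for its empirical analogue on $S_m$, we have
\begin{equation*}
|\hat E_h-E_h|\le c_0\sqrt{\frac{p_h}{m}\Big(d\log\frac{\Pr_{\mathcal{D}}(\Delta)}{p_h}+\log\frac1\delta\Big)}+c_0\frac{d\log(\cdot)+\log(1/\delta)}{m}.
\end{equation*}
We then combine this with the Bernstein class condition implied by Tsybakov noise. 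Because $\ER{h}$ and $\ER{h^*_{\mathcal{D}}}$ differ only on $\{h\neq h^*_{\mathcal{D}}\}$, and $h^*_{\mathcal{D}}$ is Bayes pointwise, this condition holds in its localized form $p_h\le a E_h^{\alpha}$. The localized form follows from the same pointwise argument: bound $p_h$ by the mass where $|\eta-1/2|\le t$, plus $E_h/(2t)$, and then optimize over $t$.

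\textbf{First implication.} We have $\hat E_h=(\hat{\Pr}(\ER h\cap\Delta)-\hat{\Pr}(\ER{\hat h}\cap\Delta))-(\hat{\Pr}(\ER{h^*_{\mathcal{D}}}\cap\Delta)-\hat{\Pr}(\ER{\hat h}\cap\Delta))\le\gamma$, since the second bracket is nonnegative by the ERM property of $\hat h$. Hence $E_h\le\gamma+|\hat E_h-E_h|$. We split into two cases. If $E_h\le 2\gamma$, we are done. Otherwise $E_h\le 2|\hat E_h-E_h|$, and we substitute $p_h\le aE_h^\alpha$ into the Bernstein bound. This gives a self-bounding inequality of the form
\begin{equation*}
E_h\lesssim c_0\sqrt{aE_h^{\alpha}L/m}+c_0L/m,
\end{equation*}
where $L$ is the logarithmic factor. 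Solving it yields $E_h\lesssim (a c_0^2 L/m)^{1/(2-\alpha)}+c_0L/m$, which is the source of the constant $16ac_0^2+2c_0+2$.

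\textbf{Second implication.} Here $E_h\le\gamma$ gives $p_h\le a\gamma^\alpha$ directly, and the deviation bound then gives $|\hat E_h|\le\gamma+|\hat E_h-E_h|\le(1+2c_0)\gamma$ once the deviation is shown to be at most $2c_0\gamma$.

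The main obstacle in both implications is the logarithmic factor $d\log(\Pr_{\mathcal{D}}(\Delta)/p_h)$. It has to be bounded by $d\log((m/(d+\log(1/\delta_0)))^{\alpha/(2-\alpha)}\Pr_{\mathcal{D}}(\Delta))$, which appears inside $\gamma$. The function $p\mapsto p\log(P/p)$ is increasing for $p\le P/e$, so we may replace $p_h$ by its upper bound $a E_h^\alpha$; the case $p_h$ comparable to $P$ is handled by the $\log(x\vee2)$ convention. We then need a lower bound on $p_h$ in the log's denominator. In the relevant regime $E_h\gtrsim\gamma\ge((d+\log(1/\delta_0))/m)^{1/(2-\alpha)}$, where we use $\delta<\delta_0$, so that $E_h^{\alpha}\gtrsim((d+\log 1/\delta_0)/m)^{\alpha/(2-\alpha)}$. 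Where $p_h$ itself is tiny, we instead use monotonicity of $p\log(P/p)$ to evaluate the bound at the threshold value. With the log factor under control, the remaining algebra is routine and matches the definition of $\gamma$ up to the stated constants.
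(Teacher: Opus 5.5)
Your proposal is correct and follows essentially the same route as the paper. Both use the uniform Bernstein inequality for the pair $(h,h^*_{\mathcal{D}})$, the ERM property of $\hat h$, and a two-case split. Both then substitute the localized Bernstein condition $p_h\le aE_h^\alpha$ into a self-bounding inequality and control the logarithm via $\gamma\ge((d+\log(1/\delta_0))/m)^{1/(2-\alpha)}$ (using $\delta<\delta_0$). The second implication is handled identically.

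Two small remarks. Your one-sided use of the ERM property ($\hat E_h\le\gamma$) is cleaner than the paper's absolute-value version. If you keep the lower-order Bernstein term, which the paper simply drops, bound its logarithm with the lower bound $p_h\ge E_h$, not with the upper bound $p_h\le aE_h^\alpha$.
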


\begin{proof}[Proof of \Cref{lem:prune-re}]
Let us prove the first statement first. To get a high level idea, the proof consists of bounding the distance between $\Pr_{\mathcal{D}}(\ERin{h}\cap\Delta)-\Pr_{\mathcal{D}}(\ERin{h^*_{\mathcal{D}}}\cap\Delta)$ and $\hat{\Pr}_{S_{m}}(\ERin{h}\cap\Delta)-\hat{\Pr}_{S_{m}}(\ERin{h^*_{\mathcal{D}}}\cap\Delta)$ using the uniform Bernstein inequality and approximating the Bayes-optimal classifier $h^{*}_{\mathcal{D}}$ by the ERM predictor $\hat{h}$. Concretely, we have with probability at least $1-\delta$,
\begin{align*}
    &\left|\Pr_{\mathcal{D}}\left(\ER{h}\cap \Delta\right)-\Pr_{\mathcal{D}}\left(\ER{h^*_{\mathcal{D}}}\cap\Delta\right)\right| \\
    \stackrel{\text{\eqmakebox[lemma-prune][c]{\text{\tiny Triangle ineq.}}}}{\leq} &\left|\hat{\Pr}_{S_m}\left(\ER{h}\cap \Delta\right) - \hat{\Pr}_{S_m}\left(\ER{h^*_{\mathcal{D}}}\cap\Delta\right)\right| \\
    & +\left|(\hat{\Pr}_{S_m}\left(\ER{h}\cap\Delta\right)-\hat{\Pr}_{S_m}\left(\ER{h^*_{\mathcal{D}}}\cap\Delta\right)) - \left(\Pr_{\mathcal{D}}\left(\ER{h}\cap \Delta\right) - \Pr_{\mathcal{D}}\left(\ER{h^*_{\mathcal{D}}}\cap\Delta\right)\right)\right| \\
    \stackrel{\text{\eqmakebox[lemma-prune][c]{\text{\tiny Lemma~\ref{lem:uniform-bernstein-inequality}}}}}{\leq} &\left|\hat{\Pr}_{S_m}\left(\ER{h}\cap \Delta\right) - \hat{\Pr}_{S_m}\left(\ER{h^*_{\mathcal{D}}}\cap\Delta\right)\right| \\
    &+\underbrace{2 c_{0}\sqrt{\frac{\Pr_{\mathcal{D}}(\{h\neq h^*_{\mathcal{D}}\}\cap\Delta)}{m}\left(d\log\left(\frac{\Pr_{\mathcal{D}}(\Delta)}{\Pr_{\mathcal{D}}(\{h\neq h^*_{\mathcal{D}}\}\cap\Delta)} \land \frac{m\Pr_{\mathcal{D}}(\Delta)}{d}\right)+\log\left(\frac{1}{\delta}\right)\right)}}_{T}\\
    \stackrel{\text{\eqmakebox[lemma-prune][c]{}}}{=} &\left|\hat{\Pr}_{S_m}\left(\ER{h}\cap\Delta\right) - \hat{\Pr}_{S_m}(\ERin{\hat{h}}\cap\Delta) + \hat{\Pr}_{S_m}(\ERin{\hat{h}}\cap\Delta) - \hat{\Pr}_{S_m}\left(\ER{h^*_{\mathcal{D}}}\cap\Delta\right)\right| + T \\
    \stackrel{\text{\eqmakebox[lemma-prune][c]{}}}{\leq} &\left|\hat{\Pr}_{S_m}\left(\ER{h}\cap\Delta\right) - \hat{\Pr}_{S_m}(\ERin{\hat{h}}\cap\Delta)\right| + T \\
    \stackrel{\text{\eqmakebox[lemma-prune][c]{}}}{\leq} &\gamma + T ,
\end{align*}
where the third inequality follows from the fact that $\hat{h}\in\mathrm{ERM}(\hpc;S_{m}\cap\Delta)$. Then, it suffices to discuss the following two cases: (i) If $T\leq\gamma$, we have $|\Pr_{\mathcal{D}}(\ERin{h}\cap \Delta)-\Pr_{\mathcal{D}}(\ERin{h^*_{\mathcal{D}}}\cap\Delta)| \leq 2\gamma$ as desired. (ii) Otherwise if $T>\gamma$, we have 
\begin{align}
    &\left|\Pr_{\mathcal{D}}\left(\ER{h}\cap \Delta\right)-\Pr_{\mathcal{D}}\left(\ER{h^*_{\mathcal{D}}}\cap\Delta\right)\right| \leq 2T \nonumber \\
    = &4c_0 \sqrt{\frac{\Pr_{\mathcal{D}}(\{h\neq h^*_{\mathcal{D}}\}\cap\Delta)}{m}\left(d\log\left(\frac{\Pr_{\mathcal{D}}(\Delta)}{\Pr_{\mathcal{D}}(\{h\neq h^*_{\mathcal{D}}\}\cap\Delta)} \land \frac{m\Pr_{\mathcal{D}}(\Delta)}{d}\right)+\log\left(\frac{1}{\delta}\right)\right)} \nonumber \\
    \leq &4c_0 \sqrt{\frac{\Pr_{\mathcal{D}}(\{h\neq h^*_{\mathcal{D}}\}\cap\Delta)}{m}\left(d\log\left(\frac{\Pr_{\mathcal{D}}(\Delta)}{\Pr_{\mathcal{D}}(\{h\neq h^*_{\mathcal{D}}\}\cap\Delta)}\right)+\log\left(\frac{1}{\delta}\right)\right)} , \label{eq:excess-r}
\end{align}
with probability at least $1-\delta$.

Following the same argument in the proof of Lemma~\ref{lem:excess-risk-under-tsybakov-noise}, we can show that $\mathcal{D}$ satisfies a Tsybakov noise condition with the same parameter $(a,\alpha)$ w.r.t. $\hpc_{\Delta}=\{h_{\Delta}:x \mapsto h(x)\mathbbm{1}(x\in\Delta), \forall h\in\hpc\}$. Based on the well-known fact that a Tsybakov noise condition implies a Bernstein class condition, we have for any $h\in\hpc$,
\begin{equation*}
    \Pr_{\mathcal{D}}(\{h \neq h^*_{\mathcal{D}}\}\cap \Delta) \leq a(\Pr_{\mathcal{D}}\left(\ER{h}\cap \Delta\right) - \Pr_{\mathcal{D}}\left(\ER{h^*_{\mathcal{D}}}\cap\Delta\right))^{\alpha} .
\end{equation*}
For notational simplicity, we denote $\mathcal{E}_{\mathcal{D}}^{\Delta}(h)=\Pr_{\mathcal{D}}(\ER{h}\cap\Delta) - \Pr_{\mathcal{D}}(\ER{h^*_{\mathcal{D}}}\cap\Delta)$ for any $h\in\hpc$. Substituting the Bernstein class condition into \Cref{eq:excess-r}, we have
\begin{equation*}
    \mathcal{E}_{\mathcal{D}}^{\Delta}(h) \leq C' \sqrt{\frac{(\mathcal{E}_{\mathcal{D}}^{\Delta}(h))^{\alpha}}{m}\left(d\log\left(\frac{\Pr_{\mathcal{D}}(\Delta)}{(\mathcal{E}_{\mathcal{D}}^{\Delta}(h))^{\alpha}}\right)+\log\left(\frac{1}{\delta}\right)\right)} ,
\end{equation*}
for constant $C' =4c_{0}\sqrt{C}$, where $C=a^{1-\alpha}/(\alpha^{\alpha}(1-\alpha)^{1-\alpha})$, with probability at least $1-\delta$. A simple calculation yields that
\begin{equation}
  \label{eq:excess-r-upper-bound}
    (\mathcal{E}_{\mathcal{D}}^{\Delta}(h))^{2-\alpha} \leq \frac{(C')^{2}}{m}\left(d\log\left(\frac{\Pr_{\mathcal{D}}(\Delta)}{(\mathcal{E}_{\mathcal{D}}^{\Delta}(h))^{\alpha}}\right)+\log\left(\frac{1}{\delta}\right)\right) .
\end{equation}
For the sake of contradiction, suppose that $\mathcal{E}_{\mathcal{D}}^{\Delta}(h)>c\gamma$, where $c\leq\max\{(C')^{2},2\}$. Then, we have 
\begin{equation}
  \label{eq:lhs}
    (\mathcal{E}_{\mathcal{D}}^{\Delta}(h))^{2-\alpha} > (c\gamma)^{2-\alpha} \geq \frac{(C')^{2}}{m}\left(d\log\left(\left(\frac{m}{d+\log\left(1/\delta_0\right)}\right)^{\frac{\alpha}{2-\alpha}}\Pr(\Delta)\right)+\log\left(\frac{1}{\delta}\right)\right) .
\end{equation}
On the other hand, because $\delta< \delta_0$, we have $\gamma\geq((d+\log(1/\delta_0))/m)^{1/(2-\alpha)}$, and thus $\mathcal{E}_{\mathcal{D}}^{\Delta}(h)>c\gamma >((d+\log(1/\delta_0)/m)^{1/(2-\alpha)}$. It follows that 
\begin{equation}
  \label{eq:rhs}
    \frac{(C')^2}{m}\left(d\log\left(\frac{\Pr_{\mathcal{D}}(\Delta)}{(\mathcal{E}_{\mathcal{D}}^{\Delta}(h))^{\alpha}}\right)+\log\left(\frac{1}{\delta}\right)\right) < \frac{(C')^2}{m}\left(d\log\left(\left(\frac{m}{d+\log\left(1/\delta_0\right)}\right)^{\frac{\alpha}{2-\alpha}}\Pr_{\mathcal{D}}(\Delta)\right)+\log\left(\frac{1}{\delta}\right)\right).
\end{equation}
Combine \Cref{eq:lhs} and \Cref{eq:rhs}, we have 
\begin{equation*}
    (\mathcal{E}_{\mathcal{D}}^{\Delta}(h))^{2-\alpha} > \frac{(C')^2}{m}\left(d\log\left(\frac{\Pr_{\mathcal{D}}(\Delta)}{(\mathcal{E}_{\mathcal{D}}^{\Delta}(h))^{\alpha}}\right)+\log\left(\frac{1}{\delta}\right)\right),
\end{equation*}
which contradicts \Cref{eq:excess-r-upper-bound}. Therefore, $\mathcal{E}_{\mathcal{D}}^{\Delta}(h)\leq c\gamma$ holds for both cases.

Then we prove the second statement. From the statement, we know the excess risk of $h$ is upper bounded by $\gamma$. In the meantime, as we know $\mathcal{D}$ satisfies the Tsybakov noise condition with parameter $(a,\alpha)$ w.r.t. $\hpc$ implies that $\mathcal{D}$ satisfies Bernstein class condition with parameter $(a,\alpha)$ w.r.t. $\hpc$, therefore, we have for any $h\in\hpc$,
\begin{equation*}
    \Pr_{\mathcal{D}}(\{h \neq h^*_{\mathcal{D}}\}\cap \Delta) \leq a(\Pr_{\mathcal{D}}\left(\ER{h}\cap \Delta\right) - \Pr_{\mathcal{D}}\left(\ER{h^*_{\mathcal{D}}}\cap\Delta\right))^{\alpha} .
\end{equation*}
Thus, for those $h\in\hpc$ whose excess risk is upper bounded by $\gamma$, we have $\Pr_{\mathcal{D}}(\{h \neq h^*_{\mathcal{D}}\}\cap \Delta) \leq a\gamma^\alpha$. 

By the uniform Bernstein inequality (\Cref{lem:uniform-bernstein-inequality}), for all $h \in \hpc$, with probability at least $1-\delta$, for some constant $c_0$,
\begin{align*}
    &\left|(\hat{\Pr}_{S_m}\left(\ER{h}\cap\Delta\right)-\hat{\Pr}_{S_m}\left(\ER{h^*_{\mathcal{D}}}\cap\Delta\right)) - \left(\Pr_{\mathcal{D}}\left(\ER{h}\cap \Delta\right) - \Pr_{\mathcal{D}}\left(\ER{h^*_{\mathcal{D}}}\cap\Delta\right)\right)\right|\\
    \leq &2 c_{0}\sqrt{\frac{\Pr_{\mathcal{D}}(\{h\neq h^*_{\mathcal{D}}\}\cap\Delta)}{m}\left(d\log\left(\frac{\Pr_{\mathcal{D}}(\Delta)}{\Pr_{\mathcal{D}}(\{h\neq h^*_{\mathcal{D}}\}\cap\Delta)} \land \frac{m\Pr_{\mathcal{D}}(\Delta)}{d}\right)+\log\left(\frac{1}{\delta}\right)\right)}.
\end{align*}
Taking $\Pr_{\mathcal{D}}(\{h \neq h^*_{\mathcal{D}}\}\cap \Delta) \leq a\gamma^\alpha$ and $\gamma = ((d\log((m/(d+\log(1/\delta_0)))^{\alpha/(2-\alpha)}\Pr_{\mathcal{D}}(\Delta))+\log(1/\delta))/m)^{1/(2-\alpha)}$ into the right-hand side of the inequality above, we have that 
\begin{equation*}
    \left|(\hat{\Pr}_{S_m}\left(\ER{h}\cap\Delta\right)-\hat{\Pr}_{S_m}\left(\ER{h^*_{\mathcal{D}}}\cap\Delta\right)) - \left(\Pr_{\mathcal{D}}\left(\ER{h}\cap \Delta\right) - \Pr_{\mathcal{D}}\left(\ER{h^*_{\mathcal{D}}}\cap\Delta\right)\right)\right| \leq 2c_0 \gamma.
\end{equation*}
From the statement, we know the excess risk of $h$ is upper bounded by $\gamma$, therefore, $|\Pr_{\mathcal{D}}(\ERin{h}\cap \Delta) - \Pr_{\mathcal{D}}(\ERin{h^*_{\mathcal{D}}}\cap\Delta)|\leq \gamma$. Then by the triangle inequality, $|(\hat{\Pr}_{S_m}(\ERin{h}\cap\Delta)-\hat{\Pr}_{S_m}(\ERin{h^*_{\mathcal{D}}}\cap\Delta))| \leq (2c_0+1) \gamma$.
Therefore, for any $h\in \hpc$ satisfies $\Pr_{\mathcal{D}}(\ERin{h}\cap\Delta) - \Pr_{\mathcal{D}}(\ERin{h^*_{\mathcal{D}}}\cap\Delta) \leq \gamma$, with probability at least $1-\delta$, $|\hat{\Pr}_{S_m}(\ER{h}\cap\Delta) - \hat{\Pr}_{S_m}(\ERin{h^*_{\mathcal{D}}}\cap\Delta)| \leq c\gamma$, because $c > 2c_0+1$. That finishes the proof.
\end{proof}

\begin{lemma}[\textbf{Restate of \Cref{lem:uniform-chernoff-pseudo-distance-estimation}}]
  \label{lem:uniform-chernoff-pseudo-distance-estimation-re}
Let $\hpc$ be a concept class with $\vcdim=d$. Let $\mathcal{D}$ be any distribution over $\mathcal{X}\times\{0,1\}$. Let $\delta_0 \in (0,1)$. Let $\Delta\subseteq\mathcal{X}$. Let $m = (8c_0^2/B)(d\log(1/B)+\log(1/\delta))\in\naturalnumber$. Let $S_{m}=\{(x_{i}, y_{i})\}_{i=1}^{m}\sim\mathcal{D}^{m}$ be i.i.d.\ samples. When $B\leq \min(\delta,1/(32c_0^2\log c_0))$, 
for any pair $f,g\in\hpc$, we have with probability at least $1-\delta$,
\begin{align*}
    &\hat{\Pr}_{S_{m}}(\{f\neq g\}\cap\Delta) \geq B \;\;\Longrightarrow\;\; \Pr_{\mathcal{D}}(\{f\neq g\}\cap\Delta) \geq B/2 , \\
    &\hat{\Pr}_{S_{m}}(\{f\neq g\}\cap\Delta) \leq B \;\;\Longrightarrow\;\; \Pr_{\mathcal{D}}(\{f\neq g\}\cap\Delta) \leq 2B .
\end{align*}
\end{lemma}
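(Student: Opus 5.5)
We apply the uniform Bernstein inequality (\Cref{lem:uniform-bernstein-inequality}) to the class of indicator functions $\{x\mapsto\mathbbm{1}(f(x)\neq g(x))\mathbbm{1}(x\in\Delta): f,g\in\hpc\}$. This class has VC dimension $O(d)$, and the constant is absorbed into $c_0$. Writing $p=\Pr_{\mathcal{D}}(\{f\neq g\}\cap\Delta)$ and $\hat p=\hat{\Pr}_{S_m}(\{f\neq g\}\cap\Delta)$, the inequality gives, with probability at least $1-\delta$ and simultaneously for all pairs $f,g\in\hpc$,
\begin{equation*}
    |\hat p - p| \leq c_0\sqrt{\frac{p}{m}\left(d\log\left(\frac{1}{p}\right)+\log\left(\frac{1}{\delta}\right)\right)} + c_0\frac{d\log(1/p)+\log(1/\delta)}{m}.
\end{equation*}
Here we have replaced $\Pr_{\mathcal{D}}(\Delta)$ in the logarithm by $1$, which only loosens the bound. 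The whole proof is then a deterministic case analysis on this single event, with $m=(8c_0^2/B)(d\log(1/B)+\log(1/\delta))$ substituted in.

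For the second implication, we argue by contrapositive. Suppose $p>2B$. Since $x\mapsto x\log(1/x)$ is increasing on the relevant range (this is where $B$ small is used), we have $\log(1/p)\le\log(1/B)$. The variance term is therefore at most $c_0\sqrt{p\,B/(8c_0^2)}=\sqrt{pB/8}\le p/4$, using $B<p/2$. The lower-order term is at most $c_0 B/(8c_0^2)\le p/(16c_0)$. Hence $\hat p\ge p-p/4-p/16>p/2>B$, which contradicts $\hat p\le B$.

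For the first implication, suppose $\hat p\ge B$ but $p<B/2$. The deviation bound grows with $p$, but at small $p$ the factor $\log(1/p)$ can exceed $\log(1/B)$. We therefore split the function $p\mapsto p\log(1/p)$ and bound it using $p<B/2$, together with $\log(1/p)$ in the lower-order term. We show the total deviation is below $B/2$, which gives $\hat p<p+B/2<B$, a contradiction. Concretely, $\sqrt{p\,d\log(1/p)/m}\le\sqrt{(B/2)\log(2/B)\cdot B/(8c_0^2\log(1/B))}\lesssim B/(4c_0)$. The delicate piece is the lower-order term $c_0 d\log(1/p)/m$ when $p$ is tiny. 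Here we first invoke the cap in \Cref{lem:uniform-bernstein-inequality}, namely the $\land\, m\Pr_{\mathcal{D}}(\Delta)/d$ inside the logarithm, which keeps the term at $O(\log(m)/m)$. We then use $B\le 1/(32c_0^2\log c_0)$ so that $\log(m)\lesssim\log(1/B)+\log\log(1/\delta)$ with a small enough constant. The hypothesis $B\le\delta$ lets us compare $\log(1/\delta)$ with $\log(1/B)$.

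The main obstacle is this constant bookkeeping in the small-$p$ regime. The $\log(1/p)$ factor could blow up there, and it must be controlled through the $m\Pr_{\mathcal{D}}(\Delta)/d$ cap and the assumed smallness of $B$ relative to $c_0$. The remaining steps are routine substitutions of $m$.
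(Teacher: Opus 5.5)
Your argument is correct in outline. It has the same skeleton as the paper's proof: one high-probability deviation event, substitute $m$, then a contradiction in each of the two cases. The difference is the concentration bound you feed in.

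The paper uses \Cref{lem:pseudo-distance-estimation}, namely $|p-\hat p|\le c_0\sqrt{p\,(d\log(m/d)+\log(1/\delta))/m}$, which has a $p$-independent logarithm and no additive term. It spends the hypotheses $B\le\delta$ and $B\le 1/(32c_0^2\log c_0)$ exactly once, to get $m/d<1/B^2$ and hence $\sqrt{(d\log(m/d)+\log(1/\delta))/m}\le\sqrt{B}/(2c_0)$. Both cases are then two-line computations.

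You instead use \Cref{lem:uniform-bernstein-inequality}, with its $p$-adaptive logarithm and its additive term. Formally it is applied to $\{f\oplus g\}\cup\{0\}$ with all labels set to $0$, as the paper does in the proof of \Cref{lem:pseudo-distance-estimation}.

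\begin{itemize}
\item For $p>2B$, your route is simpler than your write-up suggests. The inequality $\log(1/p)\le\log(1/B)$ is just monotonicity of $\log$; neither $x\log(1/x)$ nor smallness of $B$ is needed there.
\item For $p<B/2$, your route is actually the more careful one. An additive term of order $(d\log(m/d)+\log(1/\delta))/m$ is genuinely needed for upward deviations $\hat p-p$ at tiny $p$: a disagreement region of negligible mass can still contain a sample point, forcing $\hat p\ge 1/m$. That term appears in the proof of \Cref{lem:pseudo-distance-estimation} but not in its statement, so the paper's first case silently drops it. Your use of the cap $m\Pr_{\mathcal{D}}(\Delta)/d$, together with $B\le\delta$ and the smallness of $B$, is exactly what controls it.
\end{itemize}

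Two bookkeeping points for the case $p<B/2$ need care.

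First, after substituting $m$, the variance term is at most $\frac{B}{4}\sqrt{1+1/\log(1/B)}$, and this bound does not improve as $c_0$ grows. So keep $d\log(1/p)+\log(1/\delta)$ under one square root: use $p\log(1/p)\le\frac{B}{2}\log\frac{2}{B}$ and $p\log(1/\delta)\le\frac{B}{2}\log(1/\delta)$, then divide by the full $m$. Your displayed estimate splits the root and bounds the $d\log$ piece using only its own summand of $m$. Doing the same for the $\log(1/\delta)$ piece gives about $B/4$ each, leaving nothing for the additive term, and enlarging $c_0$ does not help. With the root kept intact, the additive term is $O(B/c_0)$. Taking $c_0$ moderately large then puts the total deviation below $B/2$; this is legitimate, since a valid Bernstein constant can always be increased.

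Second, the cap yields $d\log(m/d)/m$, not $\log(m)/m$. Using $B\le\delta$ and $d\ge 1$, one gets $\log(m/d)\le\log(8c_0^2/B)+\log(2\log(1/B))$, which is $O(\log(1/B))$ under $B\le 1/(32c_0^2\log c_0)$. By contrast, $\log m$ itself is not comparable to $\log(1/B)$ when $d$ is large.
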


\begin{proof}[Proof of \Cref{lem:uniform-chernoff-pseudo-distance-estimation-re}]
For any pair $f,g\in\hpc$ and $\Delta\subseteq\mathcal{X}$, we have from Lemma~\ref{lem:pseudo-distance-estimation} that 
\begin{equation}
\label{eq:upper-bound-disagree}
    \left|\Pr_{\mathcal{D}}(\{f\neq g\}\cap\Delta) - \hat{\Pr}_{S_{m}}(\{f\neq g\}\cap\Delta)\right| \leq c_{0}\sqrt{\frac{\Pr_{\mathcal{D}}(\{f\neq g\}\cap\Delta)(d\log(m/d)+\log(1/\delta))}{m}} ,
\end{equation}
with probability at least $1-\delta$. Notice that $m = (8c_0^2/B)(d\log(1/B)+\log(1/\delta))$, we have 
\begin{equation*}
    \sqrt{\frac{(d\log(m/d)+\log(1/\delta))}{m}} = \sqrt{\frac{(d\log(m/d)+\log(1/\delta))}{(8c_0^2/B)(d\log(1/B)+\log(1/\delta))}}.
\end{equation*}
Notice that when $B\leq \min(\delta,1/(32c_0^2\log c_0))$, we have $m/d < 1/B^2$, therefore, we have
\begin{align*}
    \sqrt{\frac{(d\log(m/d)+\log(1/\delta))}{m}} \leq &\sqrt{\frac{(2d\log(1/B)+\log(1/\delta))}{(8/B)(d\log(1/B)+\log(1/\delta))}}\\
    \leq &\sqrt{\frac{(2d\log(1/B)+\log(1/\delta))}{(4/B)(2d\log(1/B)+\log(1/\delta))}} = \frac{\sqrt{B}}{2c_0}.
\end{align*}

Then we can prove the statement by contradiction. For the sake of contradiction, we assume that 
\begin{align}
    &\hat{\Pr}_{S_{m}}(\{f\neq g\}\cap\Delta) \geq B \;\;\text{ and }\;\; \Pr_{\mathcal{D}}(\{f\neq g\}\cap\Delta) \leq B/2 , \label{eq:statement-1}\\
    &\hat{\Pr}_{S_{m}}(\{f\neq g\}\cap\Delta) \leq B \;\;\text{ and }\;\; \Pr_{\mathcal{D}}(\{f\neq g\}\cap\Delta) \geq 2B .\label{eq:statement-2}
\end{align}
For the statement \ref{eq:statement-1}, we have that the left-hand side of the inequality \ref{eq:upper-bound-disagree} is greater than or equal to $B/2$. However, the right-hand side of the inequality \ref{eq:upper-bound-disagree} is smaller than or equal to $B/4$. Here is a contradiction.

For the statement \ref{eq:statement-2}, under this case we have the inequality \ref{eq:upper-bound-disagree} is equivalent to
\begin{equation*}
    \Pr_{\mathcal{D}}(\{f\neq g\}\cap\Delta) - \hat{\Pr}_{S_{m}}(\{f\neq g\}\cap\Delta) \leq \frac{\sqrt{B}}{2}\sqrt{\Pr_{\mathcal{D}}(\{f\neq g\}\cap\Delta)}.
\end{equation*}
Because $\Pr_{\mathcal{D}}(\{f\neq g\}\cap\Delta) > 0$, we have
\begin{equation}
\label{eq:upper-bound-disagree-2}
    \sqrt{\Pr_{\mathcal{D}}(\{f\neq g\}\cap\Delta)} - \frac{\hat{\Pr}_{S_{m}}(\{f\neq g\}\cap\Delta)}{\sqrt{\Pr_{\mathcal{D}}(\{f\neq g\}\cap\Delta)}} \leq \frac{\sqrt{B}}{2}.
\end{equation}
Then the left-hand side of the inequality \ref{eq:upper-bound-disagree-2} is greater than or equal to $\frac{\sqrt{2B}}{2}$, which is a contradiction. The proof is done.
\end{proof}
\begin{lemma}[\textbf{Restate of \Cref{lem:upper-bound-of-p-Delta}}]
  \label{lem:upper-bound-of-p-Delta-re}
Let $\gamma, B>0$. Let $\mathcal{X}$ be (some subset of) the instance space. Let $\Delta\subseteq\mathcal{X}$ be a region obtained via the procedure \ref{alg:population-subroutine}.
Assume $\mathcal{D}$ satisfies the Tsybakov noise condition with parameter $(a,\alpha)$ w.r.t. $\hpc$. We have
\begin{equation*}
    \Pr_{\mathcal{D}}(\Delta) \leq 2a'\left(\frac{2\gamma}{B}\right)^{\frac{\alpha}{1-\alpha}} .
\end{equation*}
\end{lemma}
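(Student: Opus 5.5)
Each pair $(h_1,h_2)$ added by Algorithm~\ref{alg:population-subroutine} is a pair of $\gamma$-good concepts that disagree on a lot of mass. Two concepts that are both nearly as good as $h^*_{\mathcal{D}}$ can only disagree on a large set if that set is dominated by points where $|\eta_{\mathcal{D}}(x)-1/2|$ is small. The Tsybakov condition makes such points rare, and this bounds $\Pr_{\mathcal{D}}(\Delta)$.

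\textbf{Step 1 (each region is mostly noisy).} Write the final region as $\Delta=\bigcup_{i=1}^{s} D_i$ with $D_i=\{h_1^{(i)}\neq h_2^{(i)}\}$. Each new piece $D_i\setminus(D_1\cup\dots\cup D_{i-1})$ has mass $>B$, and the pieces are disjoint. For any $h\in V$ the excess risk identity gives
\[
\Pr_{\mathcal{D}}(\mathrm{ER}(h))-\Pr_{\mathcal{D}}(\mathrm{ER}(h^*_{\mathcal{D}}))=\E\big[|2\eta_{\mathcal{D}}(x)-1|\,\mathbbm{1}(h(x)\neq h^*_{\mathcal{D}}(x))\big]\le\gamma .
\]
On $D_i$ at least one of $h_1^{(i)},h_2^{(i)}$ disagrees with $h^*_{\mathcal{D}}$. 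Hence
\[
\E\big[|2\eta_{\mathcal{D}}-1|\,\mathbbm{1}_{D_i}\big]\le 2\gamma .
\]
Fix a threshold $t$ to be chosen. By Markov, the part of $D_i$ where $|\eta_{\mathcal{D}}-1/2|> t$ has mass at most $\gamma/t$. If $t$ is chosen so that $\gamma/t\le B/2$, i.e.\ $t=2\gamma/B$, then the piece's mass on $\{|\eta_{\mathcal{D}}-1/2|>t\}$ is at most $B/2$. Since the new piece has mass $>B$, more than half of it lies in $\{|\eta_{\mathcal{D}}-1/2|\le t\}$.

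\textbf{Step 2 (summing the pieces).} Summing over the disjoint new pieces gives
\[
\Pr_{\mathcal{D}}(\Delta)\le 2\,\Pr_{\mathcal{D}}\big(|\eta_{\mathcal{D}}-1/2|\le 2\gamma/B\big).
\]
Applying Definition~\ref{def:Tsybakov-noise-condition} then yields
\[
\Pr_{\mathcal{D}}(\Delta)\le 2a'(2\gamma/B)^{\alpha/(1-\alpha)},
\]
which is exactly the claimed bound, including the factor $2$ and the argument $2\gamma/B$. This also explains the constants in the statement.

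\textbf{Main obstacle.} Step 1 as written is too loose. Markov applied to a single $D_i$ bounds the high-margin mass of the whole disagreement set $D_i$ by $\gamma/t$, and this is fine per piece. However, summing the per-piece high-margin masses over $s$ pieces gives $s\gamma/t$, which is not controlled.

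\textbf{Intended fix.} I would instead argue per piece: each new piece $P_i$ has mass $>B$ and high-margin mass at most $\gamma/t=B/2$. So the low-margin mass of $P_i$ is at least $\Pr(P_i)-B/2\ge\Pr(P_i)/2$, using $\Pr(P_i)>B$. The low-margin parts of the disjoint pieces are themselves disjoint and all lie in $\{|\eta_{\mathcal{D}}-1/2|\le t\}$. Summing therefore gives
\[
\Pr_{\mathcal{D}}(\Delta)=\sum_i\Pr(P_i)\le 2\sum_i\Pr\big(P_i\cap\{|\eta_{\mathcal{D}}-1/2|\le t\}\big)\le 2a't^{\alpha/(1-\alpha)} .
\]
This resolves the issue without any dependence on $s$.

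The remaining care point is verifying the factor in the excess-risk identity, namely $|2\eta_{\mathcal{D}}-1|$ versus $|\eta_{\mathcal{D}}-1/2|$. If this conversion is off by a factor of $2$, I would absorb it through the choice of $t$, keeping the form $2a'(2\gamma/B)^{\alpha/(1-\alpha)}$. If the attainment of the $\argmax$ is an issue, I would note that it is not needed: any pair exceeding $B$ suffices for the argument.
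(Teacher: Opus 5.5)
Your proof is correct and is essentially the paper's argument. Both reduce to $\E[|\eta_{\mathcal{D}}(x)-1/2|\,\mathbbm{1}_{P_j}]\le\gamma$ on each disjoint new piece $P_j$ of mass $>B$, and then threshold at $t=2\gamma/B$ against the Tsybakov condition; you apply Markov piece by piece and sum, whereas the paper first averages over the pieces to get $\Pr_{\mathcal{D}}(h^{*}_{\mathcal{D}}(x)\neq y\mid\Delta)\ge 1/2-\gamma/B$ and then thresholds once on $\Delta$. Your factor-of-$2$ worry is unfounded, since $|2\eta_{\mathcal{D}}-1|=2|\eta_{\mathcal{D}}-1/2|$ exactly cancels the $2\gamma$ from the two hypotheses, and the ``main obstacle'' is moot because your Step~1 already argues per piece.
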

\begin{proof}[Proof of Lemma~\ref{lem:upper-bound-of-p-Delta-re}]
Let $h^{*}_{\mathcal{D}}\in\hpc$ be the Bayes-optimal classifier w.r.t.\ the probability measure $\Pr_{\mathcal{D}}$. For any $h_{1},h_{2}$ and $\Delta\subseteq\mathcal{X}$, we have
\begin{align*}
    &\frac{1}{2}\left(\Pr_{\mathcal{D}}(\mathrm{ER}(h_{1})\cap\mathcal{X}\cap\Delta)+\Pr_{\mathcal{D}}(\mathrm{ER}(h_{2})\cap\mathcal{X}\cap\Delta)\right) - \Pr_{\mathcal{D}}(\mathrm{ER}(h^{*}_{\mathcal{D}})\cap\mathcal{X}\cap\Delta) \\
    = &\frac{1}{2}\Pr_{\mathcal{D}}\left(\{h_{1}\neq h_{2}\}\cap\mathcal{X}\cap\Delta\right) + \Pr_{\mathcal{D}}\left(\mathrm{ER}(h_{1})\cap\{h_{1}=h_{2}\}\cap\mathcal{X}\cap\Delta\right) - \Pr_{\mathcal{D}}(\mathrm{ER}(h^{*}_{\mathcal{D}})\cap\mathcal{X}\cap\Delta)\\
    = &\frac{1}{2}\Pr_{\mathcal{D}}\left(\{h_{1}\neq h_{2}\}\cap\mathcal{X}\cap\Delta\right) + \Pr_{\mathcal{D}}\left(\mathrm{ER}(h_{1})\cap\{h_{1}=h_{2}\}\cap\mathcal{X}\cap\Delta\right) \\
    &- \Pr_{\mathcal{D}}\left(\mathrm{ER}(h^{*}_{\mathcal{D}})\cap\{h_{1}=h_{2}\}\cap\mathcal{X}\cap\Delta\right) \\
    &- \Pr_{\mathcal{D}}\left(\mathrm{ER}(h^{*}_{\mathcal{D}})\cap\{h_{1}\neq h_{2}\}\cap\mathcal{X}\cap\Delta\right) \\
    \geq &\frac{1}{2}\Pr_{\mathcal{D}}\left(\{h_{1}\neq h_{2}\}\cap\mathcal{X}\cap\Delta\right) - \Pr_{\mathcal{D}}\left(\mathrm{ER}(h^{*}_{\mathcal{D}})\cap\{h_{1}\neq h_{2}\}\cap\mathcal{X}\cap\Delta\right) .
\end{align*}
Now, assume that $h_{1},h_{2}\in V$, it follows that
\begin{equation}
  \label{eq:bound-p-delta-eq1}
    \Pr_{\mathcal{D}}\left(h^{*}_{\mathcal{D}}(x)\neq y \;\big|\; \{h_{1}\neq h_{2}\}\cap\mathcal{X}\cap\Delta\right) \geq \frac{1}{2}-\frac{\gamma}{\Pr_{\mathcal{D}}\left(\{h_{1}\neq h_{2}\}\cap\mathcal{X}\cap\Delta\right)} .
\end{equation}
Assume within the while loop, we find (in order) $k$ pairs of concepts $\{(h_{1}^{(j)},h_{2}^{(j)}), 1\leq j\leq k\}$ for some positive integer $k$. Denote the status of the region $\Delta$ after merging $\{h_{1}^{(j)} \neq h_{2}^{(j)}\}$ by $\Delta^{(j)}$ for $1\leq j\leq k$ and denote $\Delta^{(0)}=\emptyset$. In other words, for each iteration of the while loop, we can write
\begin{equation*}
    \Delta^{(0)}=\emptyset ,\;\; \Delta^{(1)}=\Delta^{(0)}\cup\{h_{1}^{(1)} \neq h_{2}^{(1)}\} ,\;\; \ldots ,\;\; \Delta^{(k)}=\Delta^{(k-1)}\cup\{h_{1}^{(k)} \neq h_{2}^{(k)}\} .
\end{equation*}
Note that for each $1\leq j\leq k$, $h_{1}^{(j)},h_{2}^{(j)}\in V$ satisfy $\Pr_{\mathcal{D}}(\{h_{1}^{(j)}\neq h_{2}^{(j)}\}\cap\mathcal{X}\setminus\Delta^{(j-1)})>B$. Plugging $(h_{1},h_{2})=(h_{1}^{(j)},h_{2}^{(j)})$ and $\Delta=\{h_{1}^{(j)}\neq h_{2}^{(j)}\}\setminus\Delta^{(j-1)}$ into \eqref{eq:bound-p-delta-eq1}, we get
\begin{equation}
  \label{eq:bound-p-delta-eq2}
    \Pr_{\mathcal{D}}\left(h^{*}_{\mathcal{D}}(x)\neq y \;\big|\; \{h_{1}^{(j)}\neq h_{2}^{(j)}\}\cap\mathcal{X}\setminus\Delta^{(j-1)}\right) \geq \frac{1}{2}-\frac{\gamma}{\Pr_{\mathcal{D}}\left(\{h_{1}^{(j)}\neq h_{2}^{(j)}\}\cap\mathcal{X}\setminus\Delta^{(j-1)}\right)} > \frac{1}{2}-\frac{\gamma}{B} .
\end{equation}
Moreover, we know that eventually,
\begin{align}
  \label{eq:bound-p-delta-eq3}
    \Delta = &\Delta^{(k)} \nonumber \\
    = &\Delta^{(k-1)}\cup\{h_{1}^{(k)} \neq h_{2}^{(k)}\} \nonumber \\
    = &\left(\{h_{1}^{(k)}\neq h_{2}^{(k)}\}\setminus\Delta^{(k-1)}\right)\cup\Delta^{(k-1)} \nonumber \\
    = &\cdots \nonumber \\
    = &\bigcup_{j=1}^{k} \left(\{h_{1}^{(j)}\neq h_{2}^{(j)}\}\setminus\Delta^{(j-1)}\right)\cup\Delta^{(0)} \nonumber \\
    = &\bigcup_{j=1}^{k} \left(\{h_{1}^{(j)}\neq h_{2}^{(j)}\}\setminus\Delta^{(j-1)}\right) .
\end{align}
By the law of total probability, we have 
\begin{align*}
    &\Pr_{\mathcal{D}}\left(h^{*}_{\mathcal{D}}(x)\neq y \;\big|\; \mathcal{X}\cap\Delta\right) \\
    \stackrel{\text{\eqmakebox[lemma-upper-bound-of-p-Delta-re][c]{\eqref{eq:bound-p-delta-eq3}}}}{=} &\sum_{j=1}^{k}\Pr_{\mathcal{D}}\left(h^{*}_{\mathcal{D}}(x)\neq y \;\big|\; \{h_{1}^{(j)}\neq h_{2}^{(j)}\}\cap\mathcal{X}\setminus\Delta^{(j-1)}\right)\Pr_{\mathcal{D}}\left(\{h_{1}^{(j)}\neq h_{2}^{(j)}\}\cap\mathcal{X}\setminus\Delta^{(j-1)} \;\big|\; \mathcal{X}\cap\Delta\right)\\
    \stackrel{\text{\eqmakebox[lemma-upper-bound-of-p-Delta-re][c]{\eqref{eq:bound-p-delta-eq2}}}}{\geq} &\left(\frac{1}{2}-\frac{\gamma}{B}\right)\sum_{j=1}^{k}\Pr_{\mathcal{D}}\left(\{h_{1}^{(j)}\neq h_{2}^{(j)}\}\cap\mathcal{X}\setminus\Delta^{(j-1)} \;\big|\; \mathcal{X}\cap\Delta\right) \\
    \stackrel{\text{\eqmakebox[lemma-upper-bound-of-p-Delta-re][c]{}}}{=} &\frac{1}{2}-\frac{\gamma}{B} .
\end{align*}
On the other hand, we have for any $t\in(\gamma/B, 1/2]$,
\begin{align*}
    &\Pr_{\mathcal{D}}\left(h^{*}_{\mathcal{D}}(x)\neq y \;\big|\; \mathcal{X}\cap\Delta\right) \\
    = &\E\left[\left(\frac{1}{2}-\Big|\eta(x)-\frac{1}{2}\Big|\right) \;\bigg|\; \mathcal{X}\cap\Delta\right] \\
    \leq &\left(\frac{1}{2}-t\right)\Pr_{\mathcal{D}}\left(\Big|\eta(x)-\frac{1}{2}\Big|>t \;\bigg|\; \mathcal{X}\cap\Delta\right) + \frac{1}{2}\Pr_{\mathcal{D}}\left(\Big|\eta(x)-\frac{1}{2}\Big|\leq t \;\bigg|\; \mathcal{X}\cap\Delta\right) \\
    \leq &\frac{1}{2} - t + t\Pr_{\mathcal{D}}\left(\Big|\eta(x)-\frac{1}{2}\Big|\leq t \;\bigg|\; \mathcal{X}\cap\Delta\right) \\
    \leq &\frac{1}{2} - t + t\frac{\Pr_{\mathcal{D}}\left(\Big|\eta(x)-\frac{1}{2}\Big|\leq t\right)}{\Pr_{\mathcal{D}}(\mathcal{X}\cap\Delta)} \\
    \leq &\frac{1}{2} - t + \frac{at^{1+\frac{\alpha}{1-\alpha}}}{\Pr_{\mathcal{D}}(\mathcal{X}\cap\Delta)} .
\end{align*}
Putting together, choosing $t=2\gamma/B<1/2$, we have
\begin{equation*}
    \Pr_{\mathcal{D}}(\Delta) = \Pr_{\mathcal{D}}(\mathcal{X}\cap\Delta) \leq \frac{at^{\frac{\alpha}{1-\alpha}}}{1-\gamma/tB} \leq 2a'\left(\frac{2\gamma}{B}\right)^{\frac{\alpha}{1-\alpha}} .
\end{equation*}
\end{proof}

\begin{lemma}[\textbf{Restate of \Cref{lem:approximate-linear-BCC}}]
  \label{lem:approximate-linear-BCC-re}
Suppose $\mathcal{E}_{t-1}$ happens, $\mathcal{E}_{t}$ will happen with probability at least $1-\delta_{t}$, where $\delta_t = (\delta/5)^{T+1-t}$.
\end{lemma}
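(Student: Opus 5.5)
We prove the statement by conditioning on $\mathcal{E}_{t-1}$, so that $\Delta_{t-1}$ is a fixed region, independent of the fresh samples $S_{1,t}$ and of the successive batches $S_{2,t}$, and satisfies $\Pr_{\mathcal{D}}(\Delta_{t-1})\le c_2 a_{t-1}^{-\alpha/(1-\alpha)}$. We then check the five bullets of $\mathcal{E}_t$ in turn, splitting the failure budget $\delta_t$ between the pruning step and the while loop.

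\textbf{Pruning step.} Apply \Cref{lem:prune} with $m=n_t$, $\Delta=\Delta_{t-1}$, confidence $\delta_t/2$, and $\delta_0=\delta$. The quantity $\gamma$ in that lemma is controlled by $\Pr_{\mathcal{D}}(\Delta_{t-1})$. Substituting the $\mathcal{E}_{t-1}$ bound into the logarithm gives exactly the form of $\gamma_t$; the factor $c_2$ and the monotonicity of $\log$ are absorbed into $C_1\ge c_2/4$, so $\gamma\le\gamma_t$ after adjusting constants. Rescale the lemma with $\gamma_t$ in place of $\gamma$, which only loosens its conclusions. Then the first statement gives, for all $h\in\hpc_{t-1}$, an excess risk on $\Delta_{t-1}$ of at most $c\gamma_t\le 2c\gamma_t$; this is bullet one. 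The second statement, applied to $h^*_{\mathcal{D}}$ (whose excess risk is $0$), together with the ERM property of $\hat h^*_{t-1}$, gives $h^*_{\mathcal{D}}\in\hpc_{t-1}$; this is bullet five.

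\textbf{While loop.} Each iteration draws $m_t$ fresh samples. Conditioned on the current $\Delta_t$, apply \Cref{lem:uniform-chernoff-pseudo-distance-estimation} with $B=a_t\gamma_t$ on the region $\Delta_{t-1}\setminus\Delta_t$, with confidence $\delta_t/(2(s_{\max}+1))$. Here $s_{\max}$ is the deterministic bound $2c_2a_t^{-1/(1-\alpha)}\gamma_t^{-1}$; its logarithm costs only a constant factor, since $m_t$ already carries $\log(2/\delta_t)$ and $\log(1/(a_t\gamma_t))$. The lemma holds uniformly over pairs, as its proof rests on the uniform Bernstein inequality. Taking $n$ large makes $B$ small enough to satisfy its hypothesis.

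On termination, every pair satisfies $\hat{\Pr}\le B$, so the second implication gives a population disagreement of at most $2a_t\gamma_t$ on $\Delta_{t-1}\setminus\Delta_t$; this is bullet two. At each earlier iteration, the selected pair had $\hat{\Pr}\ge B$, so its population disagreement on $\Delta_{t-1}\setminus\Delta_t^{\text{old}}$ was at least $a_t\gamma_t/2$.

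\textbf{Bounding $\Delta_t$ and $s_t$.} Rerun the argument of \Cref{lem:upper-bound-of-p-Delta} with ambient set $\Delta_{t-1}$, $V\supseteq\hpc_{t-1}$ defined at excess-risk level $2c\gamma_t$ (bullet one), and threshold $B'=a_t\gamma_t/2$. That proof never uses the argmax choice; it only needs each added piece to have mass above $B'$ and both functions to lie in $V$. So it yields
\[
\Pr_{\mathcal{D}}(\Delta_t)\le 2a'\left(\frac{8c}{a_t}\right)^{\alpha/(1-\alpha)},
\]
which is at most $c_2a_t^{-\alpha/(1-\alpha)}$ by the definition of $c_2$; this is bullet three. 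The pieces added are disjoint, each has mass at least $a_t\gamma_t/2$, and they all lie in $\Delta_t$. Hence $s_t\le \Pr_{\mathcal{D}}(\Delta_t)/(a_t\gamma_t/2)\le 2c_2a_t^{-1/(1-\alpha)}\gamma_t^{-1}$; this is bullet four. This count also justifies the union bound over iterations, because the event that the concentration holds at every iteration up to $s_{\max}$ forces the loop to stop within $s_{\max}$ iterations.

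\textbf{Main obstacle.} The hardest part is the sequential, adaptive union bound in the while loop. The number of iterations is random, and each region $\Delta_{t-1}\setminus\Delta_t$ depends on earlier batches. We handle this by conditioning on the history before each fresh batch and bounding the number of iterations via the deterministic cap $s_{\max}$, as above. We also need to confirm that $m_t$ tolerates the extra $\log s_{\max}$ term. A secondary check is the verification that $\gamma$ from \Cref{lem:prune} is at most $\gamma_t$, including the $c_2$ factor inside the logarithm and the requirement $2\gamma/B<1/2$, that is $4c/a_t<1/2$, which holds for large $n$.
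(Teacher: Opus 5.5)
Your treatment of bullets one through four follows the paper's route. The argument for bullet five ($h^*_{\mathcal{D}}\in\hpc_{t-1}$), however, does not work as written.

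\textbf{Bullet five.} Applying the second statement of \Cref{lem:prune} to $h=h^*_{\mathcal{D}}$ only compares $h^*_{\mathcal{D}}$ with itself, so it is vacuous. The ERM property $\hat{\Pr}_{S_{1,t}}(\ERin{\hat{h}^*_{t-1}}\cap\Delta_{t-1})\le\hat{\Pr}_{S_{1,t}}(\ERin{h^*_{\mathcal{D}}}\cap\Delta_{t-1})$ points the wrong way. Membership requires an \emph{upper} bound on $\hat{\Pr}_{S_{1,t}}(\ERin{h^*_{\mathcal{D}}}\cap\Delta_{t-1})-\hat{\Pr}_{S_{1,t}}(\ERin{\hat{h}^*_{t-1}}\cap\Delta_{t-1})$, i.e.\ on how far the ERM can undercut $h^*_{\mathcal{D}}$ empirically. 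For that you must first control the population excess risk of $\hat{h}^*_{t-1}$ itself.

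The paper does this by combining the second statement with the ERM bound of \Cref{lem:excess-risk-under-tsybakov-noise}, in the form of \Cref{rm:tsybakov-erm-bound}. Within your framework, you can instead argue in two steps. First, apply the first statement of \Cref{lem:prune} to $h=\hat{h}^*_{t-1}$; its empirical gap to itself is $0$, so its population excess is at most $c\gamma$. Second, apply the rescaled second statement to $h=\hat{h}^*_{t-1}$, not to $h^*_{\mathcal{D}}$, which bounds the empirical gap by a constant multiple of $\gamma$. That constant must then be absorbed into $\gamma_t$ through $C_1$, a bookkeeping point the paper's remark also glosses over.

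\textbf{Bullets two through four.} Here your proof is more careful than the paper's in two places.
\begin{itemize}
\item For bullet three, the paper compares $\Delta_t$ with the output $\Delta'_t$ of the argmax-driven population subroutine and asserts $\Delta_t\subseteq\Delta'_t$. That containment is not justified: termination of the subroutine only gives $\Pr_{\mathcal{D}}(\{f\neq g\}\cap(\Delta_{t-1}\setminus\Delta'_t))\le a_t\gamma_t/2$ for the algorithm's pairs, not $0$. Your observation that the proof of \Cref{lem:upper-bound-of-p-Delta} never uses the argmax, so it can be run on the algorithm's own sequence of pairs, is the right repair.
\item The paper applies \Cref{lem:uniform-chernoff-pseudo-distance-estimation} once ``for every $\Delta$''. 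A lemma stated for a fixed $\Delta$ does not give that. You instead condition on the history and union bound over at most $s_{\max}+1$ batches.
\end{itemize}

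The price of the union bound is that $m_t$ is fixed by the algorithm. The extra $\log(s_{\max}+1)$, roughly $(T-t)/(1-\alpha)$, is therefore absorbed only up to an $\alpha$-dependent factor. The thresholds $B/2$ and $2B$ then shift by constants, so the constants in bullets two through four must be adjusted, or $m_t$ enlarged.

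\textbf{Two small points.} First, with your choices $2\gamma/B'=8c/a_t$, not $4c/a_t$. Second, both you and the paper implicitly read the update as $\Delta_t\leftarrow\Delta_t\cup(\{f\neq g\}\cap\Delta_{t-1})$. Functions in $\hpc_{t-1}$ are unconstrained outside $\Delta_{t-1}$, so your rerun with ambient set $\Delta_{t-1}$ really bounds $\Pr_{\mathcal{D}}(\Delta_t\cap\Delta_{t-1})$.
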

\begin{proof}[Proof of Lemma~\ref{lem:approximate-linear-BCC-re}]
    As $\mathcal{E}_{t-1}$ happens, we have 
    \begin{equation}
    \label{eq:bound-of-p-Delta-t}
        \Pr(\Delta_{t-1}) \leq c_2 a_{t-1}^{-\frac{\alpha}{1-\alpha}}.
    \end{equation}
    Then we can prove the event $\mathcal{E}_t$ happens with probability at least $1-\delta_t$, by definition, we know that for all $t = 1,\ldots,T$, $\delta_t/2 < \delta$ and $\log(2/\delta_t) = O(\log(1/\delta))$.
    
    From \Cref{lem:prune}, we know that, for sufficiently large $n$, with probability at least $(1- \delta_t/2)$, for all $h \in \hpc_{t-1}$
    \begin{align*}
        &\Pr_{\mathcal{D}}\left(\ER{h}\cap\Delta_{t-1}\right) - \Pr_{\mathcal{D}}\left(\ER{h^*_{\mathcal{D}}}\cap\Delta_{t-1}\right) \\
        \stackrel{\text{\eqmakebox[lemma-inductive][c]{}}}{\leq} &c C_1\left(\frac{d}{n_t}\log\left(\left(\frac{n_t}{d+\log\left(\frac{1}{\delta}\right)}\right)^{\frac{\alpha}{2-\alpha}}\Pr_{\mathcal{D}}\left(\Delta_{t-1}\right)\right)+\frac{1}{n_t}\log\left(\frac{2}{\delta_t}\right)\right)^{\frac{1}{2-\alpha}}\\
        \stackrel{\text{\eqmakebox[lemma-inductive][c]{\text{\tiny \Cref{eq:bound-of-p-Delta-t}}}}}{\leq} &cC_1\left(\frac{d}{n_t}\log\left(\left(\frac{n_t}{d+\log\left(\frac{1}{\delta}\right)}\right)^{\frac{\alpha}{2-\alpha}}c_2 a_{t-1}^{-\frac{\alpha}{1-\alpha}}\right)+\frac{1}{n_t}\log\left(\frac{2}{\delta_t}\right)\right)^{\frac{1}{2-\alpha}}\\
        \stackrel{\text{\eqmakebox[lemma-inductive][c]{$c \geq 2$}}}{\leq} &2c C_1\left(\frac{d}{n_t}\log\left(\left(\frac{n_t}{d+\log\left(\frac{1}{\delta}\right)}\right)^{\frac{\alpha}{2-\alpha}} a_{t-1}^{-\frac{\alpha}{1-\alpha}}\right)+\frac{1}{n_t}\log\left(\frac{2}{\delta_t}\right)\right)^{\frac{1}{2-\alpha}}\\
        \stackrel{\text{\eqmakebox[lemma-inductive][c]{\tiny Definition of $\gamma_t$}}}{=} &2c \gamma_t ,
    \end{align*}
    which implies the first bullet of the event $\mathcal{E}_t$.
    
    Notice that by taking $n$ large enough, $a_t\gamma_t$ satisfies the requirement in \Cref{lem:uniform-chernoff-pseudo-distance-estimation}, then, by taking $\delta = \delta_t/2$, $m_t = (8c_0^2/B_t)(d\log(1/B_t)+\log(2/\delta_t))$, from \Cref{lem:uniform-chernoff-pseudo-distance-estimation}, we know that for every pair of $f,g\in \hpc$ and every $\Delta \subseteq \cX$, with probability at least $1-\delta_t/2$, we have 
    \begin{align*}
        &\hat{\Pr}_{S_{2,t}}(\{f\neq g\}\cap\Delta) \geq B_{t} \;\;\Longrightarrow\;\; \Pr_{\mathcal{D}}(\{f\neq g\}\cap\Delta) \geq B_{t}/2 , \\
        &\hat{\Pr}_{S_{2,t}}(\{f\neq g\}\cap\Delta) \leq B_{t} \;\;\Longrightarrow\;\; \Pr_{\mathcal{D}}(\{f\neq g\}\cap\Delta) \leq 2B_{t} ,
    \end{align*}
    where 
    \begin{align*}
        &a_{t} = 2^{t-T}\left(\frac{n_t}{d+\log\left(1/\delta\right)}\right)^{\frac{1-\alpha}{2-\alpha}} \;\;,\;\; \gamma_{t} = C_1\left(\frac{d}{n_t}\log\left(\left(\frac{n_t}{d+\log(1/\delta)}\right)^{\frac{\alpha}{2-\alpha}}(a_{t-1})^{-\frac{\alpha}{1-\alpha}}\right)+\frac{1}{n_t}\log\left(\frac{2}{\delta_t}\right)\right)^{\frac{1}{2-\alpha}}\; ,\\
        &\mathrm{and}\;\; B_{t}=a_{t}\gamma_{t}.
    \end{align*}
    Then according to \Cref{alg:main-algo}, when the \textbf{while} loop terminates, we have that for every pair of $f,g \in \hpc_{t-1}$, 
    \begin{equation*}
        \hat{\Pr}_{S_{2,t}}\left(\{f\neq g\}\cap\left(\Delta_{t-1}\setminus\Delta_{t}\right)\right) \leq a_{t}\gamma_{t} = B_t.
    \end{equation*}
    Thus, we have that, with probability at least $1-\delta_t/2$, for every pair of $f,g\in \hpc_{t-1}$, 
    \begin{equation*}
        \Pr_{\mathcal{D}}(\{f\neq g\}\cap (\Delta_{t-1}\setminus\Delta_t)) \leq 2B_t = 2 a_t\gamma_t.
    \end{equation*}
    That shows the second bullet of the event $\mathcal{E}_t$. 
    
    Then we need to prove that $\Pr_{\mathcal{D}}(\Delta_t) \leq c_2(a_t)^{-\alpha/(1-\alpha)}$. First, let $\Delta'_t$ be the region generated by Subroutine~\ref{alg:population-subroutine} with $\gamma = 2c\gamma_t$ and $B = B_t/2$. Then from \Cref{lem:upper-bound-of-p-Delta}, we know that 
    \begin{equation*}
        \Pr_{\mathcal{D}}(\Delta'_t) \leq 2a'\left(\frac{4c\gamma_t}{B_t/2}\right)^{\frac{\alpha}{1-\alpha}} = 2a'\left(\frac{8c\gamma_t}{a_t\gamma_t}\right)^{\frac{\alpha}{1-\alpha}} = 2a'(8c)^{\frac{\alpha}{1-\alpha}}a_t^{-\frac{\alpha}{1-\alpha}} \leq c_2(a_t)^{-\frac{\alpha}{1-\alpha}}, 
    \end{equation*}
    because $c_2 \geq 2a'(8c)^{\alpha/(1-\alpha)}$. 
    From union bound and the previous part of this proof, we know that, with probability at least $1-\delta_t$, we have the following facts hold simultaneously:
    \begin{itemize}
        \item For all $h \in \hpc_{t-1}$, $\Pr_{\mathcal{D}}\left(\ER{h}\cap\Delta_{t-1}\right) - \Pr_{\mathcal{D}}\left(\ER{h^*_{\mathcal{D}}}\cap\Delta_{t-1}\right) \leq 2c \gamma_t$.
        \item For every pair $f,g \in \hpc_{t-1}$ and every $\Delta\subseteq \Delta_{t-1}$, if $\hat{\Pr}_{S_{m}}(\{f\neq g\}\cap\Delta) \geq B_{t}$, then $\Pr_{\mathcal{D}}(\{f\neq g\}\cap\Delta) \geq B_{t}/2$.
    \end{itemize}
    Thus, $\Delta_t \subseteq \Delta'_t$ with probability at least $1-\delta_t$ and thus, with probability at least $1-\delta_t$, we have $\Pr_{\mathcal{D}}(\Delta_t) \leq \Pr_{\mathcal{D}}(\Delta'_t) \leq c_2(a_t)^{-\alpha/(1-\alpha)}$. That shows the third bullet of the event $\mathcal{E}_t$. 
    
    At last, because every time we find a pair of $f,g$, the region $\Delta_t$ will increase at least $B_t/2$ and the probability of the region $\Delta_t$ is upper-bounded by $c_2(a_t)^{-\alpha/(1-\alpha)}$. We have $s_t \leq 2\Pr_{\mathcal{D}}(\Delta_t)/B_t \leq 2c_2 a_t^{-1/(1-\alpha)} \gamma_t^{-1}$, which yields the desired inequality. That shows the fourth bullet of the event $\mathcal{E}_t$.
    
    Therefore, under the assumption that $\mathcal{E}_{t-1}$ happens, with probability at least $1-\delta_t$, $\mathcal{E}_t$ happens.
\end{proof}
\begin{lemma}[\textbf{Restate of \Cref{lem:good-events-hold}}]
\label{lem:good-events-hold-re}
    With probability at least $1-2\delta/5$, for all $t$, $\mathcal{E}_t$ happens.
\end{lemma}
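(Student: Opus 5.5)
We argue by induction over the rounds $t=1,\ldots,T$, using \Cref{lem:approximate-linear-BCC-re} as the inductive step and a separate argument for the base case $t=1$. The final step is a union bound over the failure probabilities $\delta_t=(\delta/5)^{T+1-t}$.

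\textbf{Base case.} For $t=1$ there is no $\mathcal{E}_0$ to condition on. Instead we use $\Delta_0=\mathcal{X}$, so the bound $\Pr_{\mathcal{D}}(\Delta_0)\le 1$ replaces the inductive bound \eqref{eq:bound-of-p-Delta-t}. We then need the logarithmic term in $\gamma_1$ to dominate the one produced by \Cref{lem:prune}, that is,
\[
\log\Big(\big(n_1/(d+\log(1/\delta))\big)^{\alpha/(2-\alpha)}\Big)\lesssim \log\Big(\big(n_1/(d+\log(1/\delta))\big)^{\alpha/(2-\alpha)}a_0^{-\alpha/(1-\alpha)}\Big).
\]
This follows if we set the convention $a_0\le 1$. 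Because $a_0=2^{-T}(n_0/(d+\log(1/\delta)))^{(1-\alpha)/(2-\alpha)}$ with $2^{T}\gtrsim \log(n/d)$, the natural reading is that $a_0$ is taken to be of order at most $1$, or equivalently that $\Pr(\Delta_0)\le 1\le c_2a_0^{-\alpha/(1-\alpha)}$ whenever $a_0\le 1$.

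If this convention holds, the proof of \Cref{lem:approximate-linear-BCC-re} goes through verbatim for $t=1$:
\begin{itemize}
\item \Cref{lem:prune}, applied with failure probability $\delta_1/2$ on $S_{1,1}$, gives the first bullet and $h^*_{\mathcal{D}}\in\hpc_0$.
\item \Cref{lem:uniform-chernoff-pseudo-distance-estimation}, with $B_1=a_1\gamma_1$ and failure probability $\delta_1/2$, gives the second bullet. It also gives that every extracted pair has population disagreement at least $B_1/2$.
\item \Cref{lem:upper-bound-of-p-Delta}, with $\gamma=2c\gamma_1$ and $B=B_1/2$, bounds $\Pr(\Delta_1)\le c_2a_1^{-\alpha/(1-\alpha)}$, since $\Delta_1$ is contained in the population region $\Delta'_1$.
\item $s_1\le 2\Pr(\Delta_1)/B_1$ then gives the fourth bullet.
\end{itemize}
So $\mathcal{E}_1$ holds with probability at least $1-\delta_1$.

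\textbf{Induction and union bound.} Given $\mathcal{E}_{t-1}$, \Cref{lem:approximate-linear-BCC-re} gives $\Pr(\mathcal{E}_t^c\mid\mathcal{E}_{t-1})\le\delta_t$. Chaining these,
\[
\Pr\Big(\bigcap_t\mathcal{E}_t\Big)\ge 1-\sum_{t=1}^T\delta_t .
\]
The sum is geometric:
\[
\sum_{t=1}^T(\delta/5)^{T+1-t}=\sum_{k=1}^T(\delta/5)^k\le \frac{\delta/5}{1-\delta/5}\le \frac{\delta}{4}\le\frac{2\delta}{5}.
\]

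\textbf{Independence and the main obstacle.} We must check that applying \Cref{lem:prune} and \Cref{lem:uniform-chernoff-pseudo-distance-estimation} at round $t$ is legitimate after conditioning. Each round uses fresh samples $S_{1,t}$ and $S_{2,t}$, and $\Delta_{t-1}$ and $\hpc_{t-1}$ are functions of earlier samples only. The concentration statements therefore hold conditionally on the past, and the failure probabilities compose additively. Inside the while loop, every iteration also draws a fresh $S_{2,t}$. We would therefore apply \Cref{lem:uniform-chernoff-pseudo-distance-estimation} to each of the at most $s_t$ iterations, which costs a factor $s_t$ in the failure probability. This is absorbed by the $\log(2/\delta_t)$ inside $m_t$, or alternatively by the uniformity over $\Delta$ already asserted in the lemma's use.

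I expect the base case to be the main obstacle: the definition of $\gamma_1$ refers to $a_0$, and I must confirm that with this choice $\gamma_1$ still dominates the ERM rate on all of $\mathcal{X}$.
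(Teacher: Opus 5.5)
Your proposal follows the paper's argument: the base case reruns the proof of \Cref{lem:approximate-linear-BCC-re} on $\Delta_0=\mathcal{X}$ with $\Pr_{\mathcal{D}}(\Delta_0)\le 1$, and the conclusion comes from chaining the round-wise conditional bounds and summing $\sum_{t}\delta_t\le 2\delta/5$. One correction to your base-case discussion: plugging $t=0$ into the formula for $a_t$ gives $a_0$ polynomially large in $n/(d+\log(1/\delta))$, not $O(1)$, since the $2^{-T}$ factor only removes a logarithm; so $a_0\le 1$ is not a ``natural reading'' of the formula but a convention you must impose. That is exactly what the paper does by writing $\gamma_1$ directly with $\Pr_{\mathcal{D}}(\Delta_0)=1$ inside the logarithm, which makes $\gamma_1$ a constant multiple of the $\gamma$ of \Cref{lem:prune} (with $\Delta=\mathcal{X}$, $m=n_1$, $\delta_0=\delta$) and resolves the obstacle you flagged.
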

\begin{proof}[Proof of \Cref{lem:good-events-hold-re}]
    We prove this lemma by induction. Start from the base case, where $t = 1$, and we need to prove the following three facts happened with probability at least $1- \delta_1$:
    \begin{itemize}
        \item $\Pr_{\mathcal{D}}(\mathrm{ER}(h)) - \Pr_{\mathcal{D}}(\mathrm{ER}(h^{*}_{\mathcal{D}}))\leq 2c\gamma_1, \;\; \forall h\in\hpc_1$.
        \item $\Pr_{\mathcal{D}}(\{f \neq g\}\cap(\cX\setminus\Delta_{1})) \leq 2a_1\gamma_1, \;\; \forall f,g\in\hpc_1$.
        \item $\Pr(\Delta_1) \leq 2a'(8c)^{\alpha/(1-\alpha)}(a_1)^{-\frac{\alpha}{1-\alpha}}$.
        \item $s_1 \leq 2c_2 a_1^{-\frac{1}{1-\alpha}} \gamma_1^{-1}$, where $s_t$ is the number of pairs $f$ and $g$ found on the termination of \textbf{while} loop.
    \end{itemize}
    Here, $a_1 = (n_1/(d+\log(1/\delta)))^{(1-\alpha)/(2-\alpha)}(\log(n/d))^{-1}$ and $\gamma_1= C_1((d\log(n_1/(d+\log(1/\delta)))^{\alpha/(2-\alpha)}+\log(2/\delta_{1}))/n_1)^{1/(2-\alpha)}$. From \Cref{lem:prune}, by taking $\Delta = \cX$ and $\delta_0 = \delta$, we know the first bullet holds with probability at least $1-\delta_1/2$. 
    
    By taking $n$ large enough, we have $a_1\gamma_1$ satisfies the requirement, then from \Cref{lem:uniform-chernoff-pseudo-distance-estimation} and $m_1 = 8c_0^2(d\log(1/(a_1\gamma_1))+\log(2/\delta_1))/(a_1\gamma_1)$, with probability at least $1-\delta_1/2$, for all pair of hypothesis $f,g \in \hpc_1$, the following statement holds:
    \begin{align*}
        &\hat{\Pr}_{S_1}(\{f\neq g\}\cap\Delta_1) \geq B_{1} \;\;\Longrightarrow\;\; \Pr_{\mathcal{D}}(\{f\neq g\}\cap\Delta_1) \geq B_{1}/2 , \\
        &\hat{\Pr}_{S_1}(\{f\neq g\}\cap(\cX\setminus\Delta_1)) \leq B_{1} \;\;\Longrightarrow\;\; \Pr_{\mathcal{D}}(\{f\neq g\}\cap(\cX\setminus\Delta_1)) \leq 2B_{1} ,
    \end{align*}
    where $B_1 = a_1\gamma_1 = \tilde{\Theta}(((d+\log(1/\delta))/n)^{\alpha/(2-\alpha)})$.

    Combining with \Cref{lem:upper-bound-of-p-Delta}, and for the same reason as the proof of \Cref{lem:approximate-linear-BCC}, we have $\Pr(\Delta_1) \leq 2a'(8c)^{\alpha/(1-\alpha)}(a_1)^{-\alpha/(1-\alpha)}$ and $s_1 \leq 2c_2 a_1^{-1/(1-\alpha)} \gamma_1^{-1}$. These facts yield the event $\mathcal{E}_1$ happens with probability at least $1-\delta_1$.
    
    Then from \Cref{lem:approximate-linear-BCC}, we know that if $\mathcal{E}_i$ happens for all $i < t$, $\mathcal{E}_t$ will happen with probability at least $1-\delta_t$. Therefore, by the chain rule, we have
    \begin{equation}
        \Pr_{\mathcal{D}}\left(\bigcap_{t = 1}^{T}\mathcal{E}_t\right) = \prod_{t = 1}^T \Pr_{\mathcal{D}}\left(\mathcal{E}_t\left|\bigcap_{i = 1}^{t-1}\mathcal{E}_i\right.\right) \geq \prod_{t = 1}^T (1-\delta_t) \geq 1-\sum_{t = 1}^T \delta_t.
    \end{equation}
    The last inequality shares the same reason as the union bound. Notice that $\delta_t = (\delta/5)^{T-t+1}$, for $t = 1,2,\ldots,T$, thus $\sum_{t = 1}^T \delta_t \leq 2\delta/5$, that yields $\Pr_{\mathcal{D}}(\bigcap_{t = 1}^{T}\mathcal{E}_t) \geq 1-2\delta/5$ as desired. 
\end{proof}

\section{Concentration Inequalities}
  \label{sec:concentration-inequalities}

\begin{definition}  [\textbf{Disagreement coefficient}]
  \label{def:disagreement-coefficient}
Let $\mathcal{X}$ be an instance space and $\hpc$ be a concept class. We define the \underline{region of disagreement} of $\hpc$ as $\mathrm{DIS}(\hpc)=\{x\in\mathcal{X}:\exists h,g\in\hpc \text{ s.t. } h(x)\neq g(x)\}$. Let $h^{*}_{\mathcal{D}}\in\hpc$ be the Bayes-optimal classifier w.r.t.\ distribution $\mathcal{D}$. For any $r>0$, define $B(h^{*}_{\mathcal{D}},r)=\{h\in\hpc: \Pr_{\mathcal{D}}(x:h(x) \neq h^{*}_{\mathcal{D}}(x)) \leq r\}$. Then, the \underline{disagreement coefficient w.r.t.\ $\hpc$ under $\mathcal{D}$} is defined, for any $r_{0}\geq0$, as
\begin{equation*}
    \theta_{\hpc}(r_{0}) = \sup_{r>r_{0}}\frac{\Pr_{\mathcal{D}}(\mathrm{DIS}(B(h^{*}_{\mathcal{D}},r)))}{r} \lor 1 .
\end{equation*}
\end{definition}

\begin{lemma}  [\textbf{ERM Bound under Massart Noise Condition}]
  \label{lem:excess-risk-under-linear-BCC}
Let $\Delta\subseteq\mathcal{X}$ be some subset of the instance space. Let $h^{*}_{\mathcal{D}}\in\hpc$ be the Bayes-optimal classifier w.r.t.\ $\mathcal{D}$. Let $d=\vcdim$ and $a>0$. Assume that
\begin{equation*}
    \Pr_{\mathcal{D}}(\{h \neq h^{*}_{\mathcal{D}}\}\cap\Delta) \leq a\left(\Pr_{\mathcal{D}}(\mathrm{ER}(h)\cap\Delta)-\Pr_{\mathcal{D}}(\mathrm{ER}(h^{*}_{\mathcal{D}})\cap\Delta)\right), \;\; \forall h\in\hpc .
\end{equation*}
Let $m\in\naturalnumber$. Let $\mathrm{ERM}(\hpc;S_{m})=\{h\in\hpc: h=\argmin_{h\in\hpc}\hat{\Pr}_{S_{m}}(\mathrm{ER}(h))\}$ denote the class of empirical risk minimizers with $m$ i.i.d.\ samples $S_{m}=\{(x_{i}, y_{i})\}_{i=1}^{m}\sim\mathcal{D}^{m}$. Then, for any $\delta\in(0,1)$, we have
\begin{equation*}
    \sup_{h\in\mathrm{ERM}(\hpc;S_{m})}\Pr_{\mathcal{D}}\left(\mathrm{ER}(h)\cap\Delta\right) - \Pr_{\mathcal{D}}\left(\mathrm{ER}(h^{*}_{\mathcal{D}})\cap\Delta\right) \leq \frac{C_{0}a}{m}\left(d\log\left(\frac{m\Pr_{\mathcal{D}}(\Delta)}{a^{2}d}\right)+\log\left(\frac{1}{\delta}\right)\right) ,
\end{equation*}
for some universal constant $C_{0}>0$, with probability at least $1-\delta$.
\end{lemma}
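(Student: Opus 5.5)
The plan is to prove the bound by localization around $h^{*}_{\mathcal{D}}$ restricted to $\Delta$, using the uniform Bernstein inequality (Lemma~\ref{lem:uniform-bernstein-inequality}) in the same form as in the proof of Lemma~\ref{lem:prune}. Write $\mathcal{E}^{\Delta}(h)=\Pr_{\mathcal{D}}(\mathrm{ER}(h)\cap\Delta)-\Pr_{\mathcal{D}}(\mathrm{ER}(h^{*}_{\mathcal{D}})\cap\Delta)$ and $r(h)=\Pr_{\mathcal{D}}(\{h\neq h^{*}_{\mathcal{D}}\}\cap\Delta)$. I read the ERM in the statement as minimizing empirical error on $S_{m}\cap\Delta$, as everywhere else in the paper. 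The point is that $\hat{h}$ minimizes empirical error on $S_m\cap\Delta$, so its empirical excess over $h^{*}_{\mathcal{D}}$ on $\Delta$ is $\le 0$.

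First, apply the uniform Bernstein inequality to the class of differences $\mathbbm{1}_{\mathrm{ER}(h)\cap\Delta}-\mathbbm{1}_{\mathrm{ER}(h^{*}_{\mathcal{D}})\cap\Delta}$. This class has VC-type complexity $O(d)$, and the variance of an element is at most $r(h)$. With probability $1-\delta$, simultaneously for all $h\in\hpc$,
\begin{equation*}
\mathcal{E}^{\Delta}(h)-\widehat{\mathcal{E}}^{\Delta}(h)\le c_0\sqrt{\frac{r(h)}{m}\Big(d\log\Big(\frac{\Pr_{\mathcal{D}}(\Delta)}{r(h)}\wedge\frac{m\Pr_{\mathcal{D}}(\Delta)}{d}\Big)+\log\frac1\delta\Big)}+c_0\frac{d\log(\cdot)+\log(1/\delta)}{m}.
\end{equation*}
For $h=\hat h$, we have $\widehat{\mathcal{E}}^{\Delta}(\hat h)\le 0$. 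The assumption then gives $r(\hat h)\le a\,\mathcal{E}^{\Delta}(\hat h)$. Substituting $\varepsilon=\mathcal{E}^{\Delta}(\hat h)$ yields
\begin{equation*}
\varepsilon\lesssim\sqrt{\frac{a\varepsilon}{m}\Big(d\log\frac{\Pr_{\mathcal{D}}(\Delta)}{r}+\log\frac1\delta\Big)}+\frac{d\log(\cdot)+\log(1/\delta)}{m}.
\end{equation*}

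Solving this inequality gives $\varepsilon\lesssim \frac{a}{m}(d L+\log(1/\delta))$, where $L$ is the logarithm evaluated at the relevant scale. Using $a\ge 1$, which is w.l.o.g. since $r\le$ excess holds pointwise via $|2\eta-1|\le1$, the linear term is absorbed.

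The main obstacle is controlling the logarithm $\log(\Pr_{\mathcal{D}}(\Delta)/r(\hat h))$, which blows up when $r$ is small. I will handle it by a case split at the threshold $r_0=a^{2}d/m$.

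If $r(\hat h)\le r_0$, then $\varepsilon\le$ the bound directly. This is because $\varepsilon\le r(\hat h)$ always holds, since the excess on $\Delta$ is at most the disagreement mass. So $\varepsilon\le a^2d/m$, which is dominated by the target once we note $a\le a^2$.

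If $r(\hat h)>r_0$, then $\log(\Pr_{\mathcal{D}}(\Delta)/r)\le\log(m\Pr_{\mathcal{D}}(\Delta)/(a^2d))$. Plugging this into the fixed-point inequality gives exactly $\varepsilon\le \frac{C_0a}{m}(d\log(\frac{m\Pr_{\mathcal{D}}(\Delta)}{a^2 d})+\log\frac1\delta)$.

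An alternative route, if the log term in Lemma~\ref{lem:uniform-bernstein-inequality} were not of this localized form, would be a peeling argument over dyadic shells of $r$ with the disagreement coefficient bound $\theta\le\Pr(\Delta)/r$. I expect the direct route above to suffice, with $C_0$ collecting $c_0^2$ and absolute constants.
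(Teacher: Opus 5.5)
Your route differs from the paper's. The paper never solves a fixed-point inequality itself. It invokes the known disagreement-coefficient bound for ERM under a linear Bernstein condition, $\frac{C_0a}{m}\bigl(d\log\theta(a^2d/m)+\log(1/\delta)\bigr)$, applies it to $\{h\mathbbm{1}_{\Delta}:h\in\hpc\}$, and then uses $\theta(a^2d/m)\le m\Pr_{\mathcal{D}}(\Delta)/(a^2d)$ because every disagreement region lies inside $\Delta$. Deriving the bound directly from the localized uniform Bernstein inequality, as you do, is a legitimate and more self-contained alternative, and your case $r(\hat h)>r_0$ is correct. Your case $r(\hat h)\le r_0$, however, has a genuine gap.

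In that case you bound $\varepsilon\le r(\hat h)\le a^2d/m$ and justify domination by the target via $a\le a^2$, but that inequality runs the wrong way. The target is $\frac{C_0a}{m}\bigl(d\log(\cdot)+\log(1/\delta)\bigr)$, so you would need $a\lesssim\log\frac{m\Pr_{\mathcal{D}}(\Delta)}{a^2d}+\frac1d\log\frac1\delta$. This fails for large $a$ (small Massart margin), so you lose a full factor of $a$.

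Falling back on your fixed-point bound does not rescue this case either. For small $r$ its logarithm is $\log(m\Pr_{\mathcal{D}}(\Delta)/d)$, which exceeds $\log(m\Pr_{\mathcal{D}}(\Delta)/(a^2d))$ by $2\log a$. That excess is not absorbed when $m\Pr_{\mathcal{D}}(\Delta)/(a^2d)=O(1)$.

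The missing idea is to localize at scale $r_0$ inside the variance term, rather than going through $\varepsilon\le r$. The map $r\mapsto r\log(\Pr_{\mathcal{D}}(\Delta)/r)$ is nondecreasing up to a constant factor (exactly so on $(0,\Pr_{\mathcal{D}}(\Delta)/e]$). Hence $r(\hat h)\le r_0=a^2d/m$ gives
\[
\varepsilon\lesssim\sqrt{\frac{r_0}{m}\Bigl(d\log\frac{m\Pr_{\mathcal{D}}(\Delta)}{a^2d}+\log\frac1\delta\Bigr)}+(\text{lower order})=\frac{a}{m}\sqrt{d\Bigl(d\log\frac{m\Pr_{\mathcal{D}}(\Delta)}{a^2d}+\log\frac1\delta\Bigr)}+(\text{lower order}),
\]
and the first term is within the target because the bracket is at least $d$. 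The lower-order term has logarithm at most $\log\frac{m\Pr_{\mathcal{D}}(\Delta)}{a^2d}+2\log a$. Since $2d\log a\le 2ad\le 2a\,d\log(\cdot)$, it is absorbed thanks to the prefactor $a\ge1$. This is exactly what evaluating $\theta$ at $a^2d/m$ accomplishes in the bound the paper cites.

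A minor point: in your w.l.o.g.\ step the pointwise inequality is excess $\le r$, not $r\le$ excess. Combined with $r\le a\cdot$excess, that is what gives $a\ge1$.
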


\begin{proof}[Proof of Lemma~\ref{lem:excess-risk-under-linear-BCC}]
We recall the following upper bound on the excess risk of (the worst-case) ERM for learning under any distribution $\mathcal{D}$ satisfies a Bernstein class condition with parameters $(a,\alpha)$ w.r.t.\ $\hpc$ \citep[c.f.][]{massart2006risk,gine2006concentration,hanneke2019surrogate,hanneke2016refined}. We have, with probability at least $1-\delta$,
\begin{equation}
  \label{eq:erm-guanrantee-BCC-with-disagreement-coefficient}
    \sup_{h\in\mathrm{ERM}(\hpc;S_{m})}\Pr_{\mathcal{D}}\left(\mathrm{ER}(h)\right) - \inf_{h\in\hpc}\Pr_{\mathcal{D}}\left(\mathrm{ER}(h)\right) \lesssim \left(\frac{a}{m}\left(d\log\left(\theta_{\hpc}\left(a\left(\frac{ad}{m}\right)^{\frac{\alpha}{2-\alpha}}\right)\right)+\log\left(\frac{1}{\delta}\right)\right)\right)^{\frac{1}{2-\alpha}} ,
\end{equation}
where $\theta_{\hpc}(\cdot)$ is the disagreement coefficient w.r.t.\ $\hpc$ under $\mathcal{D}$ as defined in Definition~\ref{def:disagreement-coefficient}. Clearly, when $\alpha=1$ and $h^{*}_{\mathcal{D}}\in\hpc$ being the Bayes-optimal classifier w.r.t.\ $\mathcal{D}$, this yields that, with probability at least $1-\delta$,
\begin{equation*}
    \sup_{h\in\mathrm{ERM}(\hpc;S_{m})}\Pr_{\mathcal{D}}\left(\mathrm{ER}(h)\right) - \Pr_{\mathcal{D}}\left(\mathrm{ER}(h^{*}_{\mathcal{D}})\right) \leq \frac{C_{0}a}{m}\left(d\log\left(\theta_{\hpc}\left(\frac{a^{2}d}{m}\right)\right)+\log\left(\frac{1}{\delta}\right)\right) ,
\end{equation*}
for some universal constant $C_{0}>0$. For any measurable set $\Delta\subseteq\mathcal{X}$, we define $\hpc_{\Delta}=\{h_{\Delta}:x \mapsto h(x)\mathbbm{1}(x\in\Delta), \forall h\in\hpc\}$. Now, we consider learning $\hpc_{\Delta}$ under distribution $\mathcal{D}$ with ERM. Note that since $h^{*}_{\mathcal{D}}\in\hpc$ is Bayes-optimal, $h^{*}_{\mathcal{D},\Delta}=h^{*}_{\mathcal{D}}\cdot\mathbbm{1}(\Delta)\in\hpc_{\Delta}$ is also Bayes-optimal w.r.t.\ $\mathcal{D}$. Moreover, per the lemma statement, we know that $\mathcal{D}$ satisfies a Bernstein class condition with parameters $(a,1)$ w.r.t.\ $\hpc_{\Delta}$. Therefore, we can apply the above bound and obtain that
\begin{align*}
    &\sup_{h\in\mathrm{ERM}(\hpc;S_{m})}\Pr_{\mathcal{D}}\left(\mathrm{ER}(h)\cap\Delta\right) - \Pr_{\mathcal{D}}\left(\mathrm{ER}(h^{*}_{\mathcal{D}})\cap\Delta\right) \\
    = &\sup_{h_{\Delta}\in\mathrm{ERM}(\hpc_{\Delta};S_{m})}\Pr_{\mathcal{D}}\left(\mathrm{ER}(h_{\Delta})\right) - \Pr_{\mathcal{D}}\left(\mathrm{ER}(h^{*}_{\mathcal{D},\Delta})\right) \\
    \leq &\frac{C_{0}a}{m}\left(\mathrm{VC}(\hpc_{\Delta})\log\left(\theta_{\hpc_{\Delta}}\left(\frac{a^{2}\mathrm{VC}(\hpc_{\Delta})}{m}\right)\right)+\log\left(\frac{1}{\delta}\right)\right) ,
\end{align*}
with probability at least $1-\delta$. Note that the disagreement coefficient shown above is dependent on $h^{*}_{\mathcal{D},\Delta}$ and $\mathcal{D}$. It is straightforward that $\Pr_{\mathcal{D}}(\mathrm{DIS}(B(h^{*}_{\mathcal{D},\Delta},r))) \leq \Pr_{\mathcal{D}}(\Delta)$. This immediately implies
\begin{equation*}
    \theta_{\hpc_{\Delta}}\left(\frac{a^{2}\mathrm{VC}(\hpc_{\Delta})}{m}\right) = \sup_{r>a^{2}\mathrm{VC}(\hpc_{\Delta})/m}\frac{\Pr_{\mathcal{D}}\left(\mathrm{DIS}\left(B(h^{*}_{\mathcal{D},\Delta},r)\right)\right)}{r} \lor 1 \leq \frac{m\Pr_{\mathcal{D}}(\Delta)}{a^{2}\mathrm{VC}(\hpc_{\Delta})} .
\end{equation*}
Finally, the lemma follows from the fact that $\mathrm{VC}(\hpc_{\Delta})\leq d$.
\end{proof}

\begin{lemma}  [\textbf{Uniform Bernstein Inequality}]
  \label{lem:uniform-bernstein-inequality}
Let $\hpc$ be any measurable class of functions mapping from $\mathcal{X}$ to $\{0,1\}$ with $\vcdim=d<\infty$. Let $\mathcal{D}$ be any probability measure on $\mathcal{X}\times\{0,1\}$. Let $\Delta\subseteq\mathcal{X}$ be some subset of the instance space. For any $m\in\naturalnumber$ and $\delta\in(0,1)$, let $S_{m}=\{(X_{i},Y_{i})\}_{i=1}^{m}\sim\mathcal{D}^{m}$. Then, with probability at least $1-\delta$, every pair of $f,g \in\hpc$ satisfies
\begin{align*}
    &\Big|\big(\Pr_{\mathcal{D}}(\mathrm{ER}(f)\cap\Delta)-\Pr_{\mathcal{D}}(\mathrm{ER}(g)\cap\Delta)\big) - \big(\hat{\Pr}_{S_{m}}(\mathrm{ER}(f)\cap\Delta)-\hat{\Pr}_{S_{m}}(\mathrm{ER}(g)\cap\Delta)\big)\Big| \\ 
    \leq &c_{0}\sqrt{\frac{\Pr_{\mathcal{D}}(\{f\neq g\}\cap\Delta)}{m}\left(d\log\left(\frac{\Pr_{\mathcal{D}}(\Delta)}{\Pr_{\mathcal{D}}(\{f\neq g\}\cap\Delta)} \land \frac{m\Pr_{\mathcal{D}}(\Delta)}{d}\right)+\log\left(\frac{1}{\delta}\right)\right)} \\
    &+ \frac{c_{0}}{m}\left(d\log\left(\frac{\Pr_{\mathcal{D}}(\Delta)}{\Pr_{\mathcal{D}}(\{f\neq g\}\cap\Delta)} \land \frac{m\Pr_{\mathcal{D}}(\Delta)}{d}\right)+\log\left(\frac{1}{\delta}\right)\right) ,
\end{align*}
for some universal constant $c_{0}>0$.
\end{lemma}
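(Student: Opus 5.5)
We reduce the statement to a localized uniform Bernstein bound for a single VC-type class. Fix $\Delta$ and write $p=\Pr_{\mathcal{D}}(\Delta)$. For $f,g\in\hpc$ define
\[
\phi_{f,g}(x,y)=\big(\mathbbm{1}(f(x)\neq y)-\mathbbm{1}(g(x)\neq y)\big)\mathbbm{1}(x\in\Delta).
\]
Then $\phi_{f,g}$ takes values in $\{-1,0,1\}$ and is supported on $\{f\neq g\}\cap\Delta$. Hence $|\phi_{f,g}|=\mathbbm{1}(\{f\neq g\}\cap\Delta)$ and $\mathrm{Var}(\phi_{f,g})\le \E[\phi_{f,g}^2]=\Pr_{\mathcal{D}}(\{f\neq g\}\cap\Delta)=:\sigma^2_{f,g}$.

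The quantity to control is $|(P-P_m)\phi_{f,g}|$, uniformly over the class $\Phi=\{\phi_{f,g}\}$. By Sauer's lemma, $\Phi$ has polynomial growth of order $m^{O(d)}$, so it is a VC-type class with dimension $O(d)$. Moreover, the envelope of $\Phi$ is $\mathbbm{1}_\Delta$, whose $L_2(\mathcal{D})$ norm is $\sqrt{p}$.

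\textbf{Step 1 (peeling).} Partition the pairs into shells
\[
\Phi_k=\{\phi_{f,g}:\ \sigma^2_{f,g}\in(2^{-k-1}p,\,2^{-k}p]\},
\]
for $k=0,1,\dots,K$ with $2^{-K}p\approx d/m$. Pairs with $\sigma^2\le d/m$ go into one final shell.

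\textbf{Step 2 (local bound on each shell).} Apply Talagrand's inequality (the Bousquet form) on $\Phi_k$, with variance bound $\sigma_k^2=2^{-k}p$ and confidence $\delta_k=\delta/(k+1)^2$. For the expected supremum, use the localized entropy bound of Gin\'e--Koltchinskii type for VC classes with envelope $F=\mathbbm{1}_\Delta$:
\[
\E\sup_{\Phi_k}|(P-P_m)\phi|\lesssim \sqrt{\frac{d\,\sigma_k^2}{m}\log\frac{\|F\|_{L_2}^2}{\sigma_k^2}}+\frac{d}{m}\log\frac{\|F\|_{L_2}^2}{\sigma_k^2}.
\]
Since $\|F\|_{L_2}^2=p$, the logarithm equals $\log(p/\sigma^2_k)\approx k$, and it never exceeds $\log(mp/d)$ because $k\le K$. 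The final shell is handled directly with $\sigma^2=d/m$, which produces the $mp/d$ cap inside the $\land$.

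\textbf{Step 3 (assembling).} Within shell $k$, every pair satisfies $\sigma_k^2\le 2\sigma^2_{f,g}$, so $\sigma_k$ can be replaced by the pair's own $\sigma_{f,g}$ at the cost of constants. The extra $\log(k+1)^2$ coming from $\delta_k$ is at most $2\log(1+\log(p/\sigma^2))$, which is absorbed into $d\log(p/\sigma^2)$ (recall $d\ge1$). A union bound over the shells then gives total failure probability at most $\sum_k\delta/(k+1)^2\lesssim\delta$. Rescaling $\delta$ by a constant yields the stated two-term bound.

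\textbf{Main obstacle.} The hardest part is the localized expected-supremum bound with $\log(p/\sigma^2)$ rather than $\log(1/\sigma^2)$ or $\log(m/d)$. To get it, I would symmetrize and condition on the sample. Only the roughly $mp$ sample points falling in $\Delta$ contribute, and on those points the class restricted to $\Delta$ has $L_2(P_m)$ covering numbers of order $(\|F\|_{L_2(P_m)}/\epsilon)^{O(d)}$ (Haussler's packing bound). Then run Dudley's chaining from radius $\sigma$ up to the envelope scale $\sqrt p$. To handle the fact that the empirical variance differs from $\sigma^2$, I would use the standard fixed-point argument that bounds the empirical radius by $\sigma^2+$ (the expected supremum itself).
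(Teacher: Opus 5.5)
Your proposal is correct and is essentially the paper's argument. Both combine three ingredients:
\begin{itemize}
\item a localized expected-supremum bound for the VC-type class $\{\phi_{f,g}\}$ with envelope $\mathbbm{1}_\Delta$. The paper invokes van der Vaart--Wellner's local maximal inequality with $\eta=\sqrt{r/\Pr_{\mathcal{D}}(\Delta)}$, which yields the same $\log(\Pr_{\mathcal{D}}(\Delta)/\sigma^2)$ factor as your Gin\'e--Koltchinskii-type bound;
\item Bousquet's inequality;
\item peeling (localization) over variance levels.
\end{itemize}
Your version is more explicit than the paper's sketch about the shells, the $\delta/(k+1)^2$ union bound, and how the bottom shell produces the $m\Pr_{\mathcal{D}}(\Delta)/d$ cap.
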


\begin{proof}[Proof of Lemma~\ref{lem:uniform-bernstein-inequality}]
This result refines the classical Bernstein inequality \citep{bernstein1924modification} or the Bennett inequality \citep{bennett1962probability}. Results of this type have been established in the literature for standard i.i.d.\ data \citep{koltchinskii2006local,gine2006concentration,hanneke2025universal,hanneke2025agnostic} as well as for non-i.i.d.\ data \citep{hanneke2022no}. The proof techniques usually involve first deriving expectation bounds based on chaining an envelope function \citep{van-der-Vaart96,van2011local}, and then using the Bousquet’s inequality \citep{bousquet2002bennett} to achieve high probability bounds together with localization \citep{koltchinskii2006local}. The presenting bound here differs from the existing ones by restricting to a sub-region $\Delta\subseteq\mathcal{X}$. For completeness, we include this main technical modification. 

Let $\mathcal{F}$ be a measurable class of measurable functions with a (measurable) envelope function $F$, i.e.\ $|f|\leq F$ for every $f\in\mathcal{F}$, such that $F\leq1$ and $\mathcal{F}^{2}$ is measurable. Let $X_{1},\ldots,X_{m}$ be i.i.d.\ samples drawn from $\mathcal{D}$. \cite{van2011local} proves that, if $\mathcal{D} f^{2}<\eta^{2}\mathcal{D} F^{2}$ for every $f$ and some $\eta\in(0,1)$, then 
\begin{equation*}
    \E_{\mathcal{D}}\left[\sup_{f\in\mathcal{F}}\bigg|\frac{1}{\sqrt{m}}\sum_{i=1}^{m}\left(f(X_{i})-\mathcal{D} f\right)\bigg|\right] \lesssim J\left(\eta,\mathcal{F},L_{2}\right)\left(1+\frac{J\left(\eta,\mathcal{F},L_{2}\right)}{\eta^{2}\sqrt{m}\lVert F\rVert_{\mathcal{D},2}}\right)\lVert F\rVert_{\mathcal{D},2} ,
\end{equation*}
where $\lVert f\rVert_{\mathcal{D},2}$ denotes the norm of a function $f$ in $L_{2}(\mathcal{D})$,
\begin{equation*}
    J\left(\eta,\mathcal{F},L_{2}\right) = \sup_{\mathcal{D}}\int_{0}^{\eta}\sqrt{1+\log N(\epsilon\lVert F\rVert_{\mathcal{D},2},\mathcal{F},L_{2}(\mathcal{D}))} \mathrm{d}\epsilon 
\end{equation*}
is called the uniform entropy integral (at $\eta$), and $N(\epsilon,\mathcal{F},L_{2}(\mathcal{D}))$ is called the covering number, i.e.\ the minimal number of balls of radius $\epsilon$ in $L_{2}(\mathcal{D})$ needed to cover $\mathcal{F}$. A direct implication of this result is the following. For any $f,g: \mathcal{X}\rightarrow\{0,1\}$, let $\ell_{f,g|_{\Delta}}(x,y)=\mathbbm{1}(\{f(x)\neq y\}\cap\Delta)-\mathbbm{1}(\{g(x)\neq y\}\cap\Delta)$. Now, we define $\mathcal{F}=\{\ell_{f,g|_{\Delta}}: f,g\in\hpc\}$. Note that $\mathcal{F}$ is a measurable class of functions from $\mathcal{X}\times\{0,1\}$ to $\{0,\pm1\}$ and for any $f,g\in\hpc$, the following hold
\begin{equation*}
    \E_{\mathcal{D}}\left[\left(\ell_{f,g}(X,Y)\right)\right] = \Pr_{\mathcal{D}}(\mathrm{ER}(f)\cap\Delta)-\Pr_{\mathcal{D}}(\mathrm{ER}(g)\cap\Delta) \;\;\mathrm{and}\;\; \E_{\mathcal{D}}\left[\left(\ell_{f,g}(X,Y)\right)^{2}\right] = \Pr_{\mathcal{D}}(\{f\neq g\}\cap\Delta) .
\end{equation*}
A standard route is then to apply the stated uniform entropy inequality to the following class of zero-mean functions: fix some $r>0$ (choosing various levels of $r$ for the purpose of localization), define
\begin{equation*} 
    \mathcal{F}_{r} = \left\{(x,y) \mapsto (1/2)(f(x,y)-\E_{\mathcal{D}}[f(X,Y)]): f \in\mathcal{F},\;\; \E_{\mathcal{D}}[(f(X,Y))^{2}]\leq 4r\right\} .
\end{equation*}
Note that $\mathcal{F}_{r}$ is measurable with an envelope function $F_{\Delta}=\mathbbm{1}(\Delta)$ satisfying $\lVert F\rVert_{\mathcal{D},2}=\sqrt{\Pr_{\mathcal{D}}(\Delta)}$ (note that we can only have an envelope function $F=1$ if without restricting to $\Delta$). Now, applying the above uniform entropy inequality with $\eta=\sqrt{r/\Pr_{\mathcal{D}}(\Delta)}$ together with the Bousquet's inequality would give us, with probability at least $1-r\delta$,
\begin{align*}
    &\sup_{\substack{f,g\in\hpc:\\\Pr_{\mathcal{D}}(\{f\neq g\}\cap\Delta)\leq4r}}\Big|\big(\Pr_{\mathcal{D}}(\mathrm{ER}(f)\cap\Delta)-\Pr_{\mathcal{D}}(\mathrm{ER}(g)\cap\Delta)\big) - \big(\hat{\Pr}_{S_{m}}(\mathrm{ER}(f)\cap\Delta)-\hat{\Pr}_{S_{m}}(\mathrm{ER}(g)\cap\Delta)\big)\Big|  \\ 
    \lesssim &\sqrt{\frac{r}{m}\left(d\log\left(\frac{\Pr_{\mathcal{D}}(\Delta)}{r} \land \frac{m\Pr_{\mathcal{D}}(\Delta)}{d}\right)+\log\left(\frac{1}{\delta}\right)\right)}+\frac{1}{m}\left(d\log\left(\frac{\Pr_{\mathcal{D}}(\Delta)}{r} \land \frac{m\Pr_{\mathcal{D}}(\Delta)}{d}\right)+\log\left(\frac{1}{\delta}\right)\right) .
\end{align*}
Finally, the lemma follows from applying standard localization technique.
\end{proof}

\begin{lemma}  [\textbf{ERM Bound under Tsybakov Noise Condition}]
  \label{lem:excess-risk-under-tsybakov-noise}
Let $h^{*}_{\mathcal{D}}\in\hpc$ be the Bayes-optimal classifier w.r.t.\ $\mathcal{D}$. Let $d=\vcdim$ and $a>0$. Assume that $\mathcal{D}$ satisfies a Tsybakov noise condition with parameters $(a,\alpha)$ w.r.t.\ $\hpc$ (Definition~\ref{def:Tsybakov-noise-condition}). Let $\Delta\subseteq\mathcal{X}$ be some subset of the instance space. Let $m\in\naturalnumber$. Let $\mathrm{ERM}(\hpc;S_{m})=\{h\in\hpc: h=\argmin_{h\in\hpc}\hat{\Pr}_{S_{m}}(\mathrm{ER}(h))\}$ denote the class of empirical risk minimizers with $m$ i.i.d.\ samples $S_{m}=\{(x_{i}, y_{i})\}_{i=1}^{m}\sim\mathcal{D}^{m}$. Then, for any $\delta\in(0,1)$, we have
\begin{equation*}
    \sup_{h\in\mathrm{ERM}(\hpc;S_{m})}\Pr_{\mathcal{D}}\left(\mathrm{ER}(h)\cap\Delta\right) - \Pr_{\mathcal{D}}\left(\mathrm{ER}(h^{*}_{\mathcal{D}})\cap\Delta\right) \leq 2c_0\left(\frac{ad}{m}\log\left(\left(\frac{m}{d}\right)^{\frac{\alpha}{2-\alpha}}\Pr_{\mathcal{D}}(\Delta)\right)+\frac{a}{m}\log\left(\frac{1}{\delta}\right)\right)^{\frac{1}{2-\alpha}} ,
\end{equation*}
for some numerical constant $c_0$, with probability at least $1-\delta$.
\end{lemma}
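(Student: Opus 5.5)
We prove the statement, \Cref{lem:excess-risk-under-tsybakov-noise}, by a localization (fixed-point) argument. It combines the uniform Bernstein inequality (\Cref{lem:uniform-bernstein-inequality}) with the Bernstein class condition implied by Tsybakov noise. Throughout, write $\mathcal{E}^{\Delta}(h)=\Pr_{\mathcal{D}}(\ERin{h}\cap\Delta)-\Pr_{\mathcal{D}}(\ERin{h^*_{\mathcal{D}}}\cap\Delta)$.

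\textbf{Step 1 (Bernstein condition on $\Delta$).} First we check that $\mathcal{E}^{\Delta}(h)\ge 0$ for every $h\in\hpc$, and that the region-restricted Bernstein condition
\[
\Pr_{\mathcal{D}}(\{h\ne h^*_{\mathcal{D}}\}\cap\Delta)\le a\,(\mathcal{E}^{\Delta}(h))^{\alpha}
\]
holds. This uses the identity
\[
\mathcal{E}^{\Delta}(h)=\E\big[|2\eta(x)-1|\,\mathbbm{1}(\{h\ne h^*_{\mathcal{D}}\}\cap\Delta)\big].
\]
Split the expectation according to whether $|\eta-1/2|\le t$ or $|\eta-1/2|>t$; on the second event the integrand is at least $2t$. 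This gives
\[
\mathcal{E}^{\Delta}(h)\ge 2t\big(\Pr(\{h\ne h^*\}\cap\Delta)-a't^{\alpha/(1-\alpha)}\big).
\]
Optimizing over $t$ yields the Bernstein condition; the normalization $a'$ in \Cref{def:Tsybakov-noise-condition} is chosen precisely so that the constant comes out as $a$. Only the trivial fact $\{h\ne h^*\}\cap\Delta\subseteq\{h\ne h^*\}$ is needed here, so the localization to $\Delta$ costs nothing.

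\textbf{Step 2 (uniform deviation at the ERM).} Let $\hat h\in\mathrm{ERM}(\hpc;S_m)$, interpreted as ERM for the restricted class $\hpc_\Delta$. Then $\hat{\Pr}_{S_m}(\ERin{\hat h}\cap\Delta)\le\hat{\Pr}_{S_m}(\ERin{h^*}\cap\Delta)$. Apply \Cref{lem:uniform-bernstein-inequality} with $f=\hat h$ and $g=h^*_{\mathcal{D}}$, and write $p=\Pr(\{\hat h\ne h^*\}\cap\Delta)$ and $P=\Pr_{\mathcal{D}}(\Delta)$. With probability $1-\delta$,
\[
\mathcal{E}^{\Delta}(\hat h)\le c_0\sqrt{\frac{p}{m}\Big(d\log\frac{P}{p}+\log\frac1\delta\Big)}+\frac{c_0}{m}\Big(d\log\frac{P}{p}+\log\frac1\delta\Big).
\]

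\textbf{Step 3 (solve the fixed point).} Insert $p\le a\,\epsilon^{\alpha}$ with $\epsilon=\mathcal{E}^{\Delta}(\hat h)$. Note that $x\log(P/x)$ is increasing for $x\le P/e$; when $p>P/e$, use $\log(P/p)\le 1$ instead. This gives
\[
\epsilon^{2-\alpha}\lesssim\frac{a}{m}\big(d\log(P/\epsilon^{\alpha})+\log(1/\delta)\big),
\]
plus a lower-order linear term that is dominated once $\epsilon\ge(d/m)^{1/(2-\alpha)}$. Now argue by contradiction, exactly as in the proof of \Cref{lem:prune}. Suppose $\epsilon$ exceeds the claimed bound $\epsilon_0$. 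Since $\epsilon_0\ge(ad/m)^{1/(2-\alpha)}$, we get $\log(P/\epsilon^{\alpha})\le\log\big((m/d)^{\alpha/(2-\alpha)}P\big)$, and then the inequality forces $\epsilon\le\epsilon_0$, a contradiction. In the complementary case $\epsilon<(d/m)^{1/(2-\alpha)}$ the claim is immediate.

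\textbf{Main obstacle.} The delicate step is Step 3. We must handle the two regimes of the $\land$ inside the logarithm and absorb the additive $1/m$ term, keeping the constant at $2c_0$ with $a$ inside the parentheses. The log argument must also be bounded monotonically in $\epsilon$. If the constant does not close cleanly, we would enlarge it, since the statement only uses it up to the numerical constant $c_0$.
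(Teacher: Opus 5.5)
Your proof is correct, but it is organized differently from the paper's.

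\textbf{How the paper argues.} The paper treats the lemma as a corollary of the classical ERM bound under a Bernstein class condition stated via the disagreement coefficient, \eqref{eq:erm-guanrantee-BCC-with-disagreement-coefficient}. Tsybakov noise gives the Bernstein condition for $\hpc_\Delta$, and $\Pr_{\mathcal{D}}(\mathrm{DIS}(B(h^*_{\mathcal{D},\Delta},r)))\le\Pr_{\mathcal{D}}(\Delta)$ bounds $\theta$. Its supplementary argument through \Cref{lem:uniform-bernstein-inequality} controls $p=\Pr_{\mathcal{D}}(\{\hat h\neq h^*_{\mathcal{D}}\}\cap\Delta)$ by plugging that same imported excess-risk bound into the Bernstein condition. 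It then substitutes back and drops the linear term.

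\textbf{How you argue.} You close the loop internally. You bound $p\le a\epsilon^{\alpha}$ with $\epsilon$ the unknown excess risk itself, and solve the self-referential inequality by contradiction. This is the same device the paper uses for the first statement of \Cref{lem:prune}.

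\textbf{What each route buys.} Your route is self-contained: it needs only \Cref{lem:uniform-bernstein-inequality} plus an explicit Tsybakov-to-Bernstein computation, where you correctly note that the normalization of $a'$ gives constant exactly $a$. It keeps the constant numerical and actually accounts for the lower-order term. Running the same contradiction with a threshold satisfying $\epsilon_0^{2-\alpha}\ge\frac{a}{m}(d+\log(1/\delta))$ also yields the refinement in \Cref{rm:tsybakov-erm-bound} directly. The paper instead justifies that refinement through monotonicity of the disagreement coefficient. The paper's route is shorter and ties the lemma to known disagreement-coefficient bounds, at the cost of relying on them. Your reading of $\mathrm{ERM}(\hpc;S_m)$ as ERM on $S_m\cap\Delta$ (equivalently over $\hpc_\Delta$) matches the paper's, and it is exactly what makes the empirical inequality in Step~2 valid.

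\textbf{Two details to tighten.}

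First, you use $a\ge 1$ without saying so. Replacing $\log(P/(a\epsilon^{\alpha}))$ by $\log(P/\epsilon^{\alpha})$ needs it, and so does $\epsilon_0\ge(d/m)^{1/(2-\alpha)}$. It does hold: at $t=1/2$ the Tsybakov condition reads $1\le (1-\alpha)\alpha^{\alpha/(1-\alpha)}a^{1/(1-\alpha)}$, which forces $a\ge\alpha^{-\alpha}(1-\alpha)^{-(1-\alpha)}\ge 1$.

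Second, the linear term is not dominated by the square-root term uniformly in $\alpha$. Take $P=1$, $p\approx 2d/m$ and $\alpha\to 1$: the ratio is of order $\sqrt{\log(m/d)}$. What is true is that the linear term is at most a numerical multiple of the target $\epsilon_0$. For this you must keep the cap $mP/d$ inside the logarithm, or dispose of the case $p\le d/m$ using $\epsilon\le p$. So compare the linear term against $\epsilon_0$ first, and only then run the contradiction on the square-root term.
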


\begin{proof}[Proof of Lemma~\ref{lem:excess-risk-under-tsybakov-noise}]
This bound is a direct implication of Equation~\eqref{eq:erm-guanrantee-BCC-with-disagreement-coefficient}. This is because a Tsybakov noise condition with parameter $(a,\alpha)$ implies a Bernstein class condition with exponent $\alpha$, and the proof follows from applying the same argument for bounding the disagreement coefficient in terms of $\Pr_{\mathcal{D}}(\Delta)$.

As a complement, we show that this bound is consistent with our uniform Bernstein inequality. Since $h^{*}_{\mathcal{D}}\in\hpc$, we can choose $f=h$ to be any ERM learner and $g=h^{*}_{\mathcal{D}}$, and derive from Lemma~\ref{lem:uniform-bernstein-inequality} (by dropping the lower-order term) that
\begin{align}
  \label{eq:proof-lem-excess-risk-under-tsybakov-noise-step1}
    &\sup_{h\in\mathrm{ERM}(\hpc;S_{m})}\Pr_{\mathcal{D}}\left(\mathrm{ER}(h)\cap\Delta\right) - \Pr_{\mathcal{D}}\left(\mathrm{ER}(h^{*}_{\mathcal{D}})\cap\Delta\right) \nonumber \\
    \leq &2c_{0}\sqrt{\frac{\Pr_{\mathcal{D}}(\{h\neq h^{*}_{\mathcal{D}}\}\cap\Delta)}{m}\left(d\log\left(\frac{\Pr_{\mathcal{D}}(\Delta)}{\Pr_{\mathcal{D}}(\{h\neq h^{*}_{\mathcal{D}}\}\cap\Delta)}\right)+\log\left(\frac{1}{\delta}\right)\right)} ,
\end{align}
for the same $c_{0}>0$ therein, with probability at least $1-\delta$.

Now, we follow the same argument of considering the concept class $\hpc_{\Delta}=\{h_{\Delta}:x \mapsto h(x)\mathbbm{1}(x\in\Delta), \forall h\in\hpc\}$. We first show that $\mathcal{D}$ also satisfies a Tsybakov noise condition with the same parameters $(a,\alpha)$ w.r.t.\ $\hpc_{\Delta}$. Since $h^{*}_{\mathcal{D}}\in\hpc$ is Bayes-optimal, $h^{*}_{\mathcal{D},\Delta}=h^{*}_{\mathcal{D}}\cdot\mathbbm{1}(\Delta)\in\hpc_{\Delta}$ is also Bayes-optimal w.r.t.\ $\mathcal{D}$. This is because all concepts agree in $\mathcal{X}\setminus\Delta$. Moreover, for any $t>0$, we still have $\Pr_{\mathcal{D}}(|\eta_{\mathcal{D}}(x)-1/2| \leq t) \leq at^{\frac{\alpha}{1-\alpha}}$. A well-known result is that a Tsybakov noise condition implies a Bernstein class condition, i.e., 
\begin{equation*}
    \Pr_{\mathcal{D}}\left(\{h_{\Delta} \neq h^{*}_{\mathcal{D},\Delta}\}\right) \leq a\left(\Pr_{\mathcal{D}}(\mathrm{ER}(h_{\Delta}))-\Pr_{\mathcal{D}}(\mathrm{ER}(h^{*}_{\mathcal{D},\Delta}))\right)^{\alpha} ,
\end{equation*}
for any $h_{\Delta}\in\hpc_{\Delta}$.  
For any distribution $\mathcal{D}$ satisfies a Bernstein class condition with parameters $(a,\alpha)$ w.r.t.\ $\hpc$, we have
\begin{equation*}
    \sup_{h\in\mathrm{ERM}(\hpc;S_{m})}\Pr_{\mathcal{D}}\left(\mathrm{ER}(h)\right) - \inf_{h\in\hpc}\Pr_{\mathcal{D}}\left(\mathrm{ER}(h)\right) \lesssim \left(\frac{a}{m}\left(d\log\left(\theta_{\hpc}\left(a\left(\frac{ad}{m}\right)^{\frac{\alpha}{2-\alpha}}\right)\right)+\log\left(\frac{1}{\delta}\right)\right)\right)^{\frac{1}{2-\alpha}} ,
\end{equation*}
with probability at least $1-\delta$. 
For the sake of applying union bound, we have with probability at least $1-\delta/2$,
\begin{equation*}
    \sup_{h_{\Delta}\in\mathrm{ERM}(\hpc_{\Delta};S_{m})}\Pr_{\mathcal{D}}\left(\mathrm{ER}(h_{\Delta})\right) - \Pr_{\mathcal{D}}\left(\mathrm{ER}(h^{*}_{\mathcal{D},\Delta})\right) \lesssim \left(\frac{d}{m}\log\left(\left(\frac{m}{d}\right)^{\frac{\alpha}{2-\alpha}}\Pr_{\mathcal{D}}(\Delta)\right)+\frac{1}{m}\log\left(\frac{2}{\delta}\right)\right)^{\frac{1}{2-\alpha}} .
\end{equation*}
Putting together, we have that for any $h\in\mathrm{ERM}(\hpc;S_{m})$, with probability at least $1-\delta/2$,
\begin{align*}
    \Pr_{\mathcal{D}}(\{h\neq h^{*}_{\mathcal{D}}\}\cap\Delta) = & \Pr_{\mathcal{D}}\left(\{h_{\Delta} \neq h^{*}_{\mathcal{D},\Delta}\}\right) \leq a\left(\Pr_{\mathcal{D}}(\mathrm{ER}(h_{\Delta}))-\Pr_{\mathcal{D}}(\mathrm{ER}(h^{*}_{\mathcal{D},\Delta}))\right)^{\alpha} \\
    \lesssim & \left(\frac{a}{m}\left(d\log\left(\left(\frac{1}{a}\right)^{\frac{2}{2-\alpha}}\left(\frac{m}{d}\right)^{\frac{\alpha}{2-\alpha}}\Pr_{\mathcal{D}}(\Delta)\right)+\log\left(\frac{2}{\delta}\right)\right)\right)^{\frac{\alpha}{2-\alpha}} ,\\
    \lesssim & \left(\frac{a}{m}\left(d\log\left(\left(\frac{m}{d}\right)^{\frac{\alpha}{2-\alpha}}\Pr_{\mathcal{D}}(\Delta)\right)+\log\left(\frac{1}{\delta}\right)\right)\right)^{\frac{\alpha}{2-\alpha}} ,
\end{align*}
for any sufficiently large $m$.
Finally, substituting the above bound into \eqref{eq:proof-lem-excess-risk-under-tsybakov-noise-step1} with also $\delta/2$ failure probability, we obtain
\begin{align*}
    &\sup_{h\in\mathrm{ERM}(\hpc;S_{m})}\Pr_{\mathcal{D}}\left(\mathrm{ER}(h)\cap\Delta\right) - \Pr_{\mathcal{D}}\left(\mathrm{ER}(h^{*}_{\mathcal{D}})\cap\Delta\right) \\
    \lesssim &\sqrt{\frac{1}{m}\left(\frac{d}{m}\log\left(\left(\frac{m}{d}\right)^{\frac{\alpha}{2-\alpha}}\Pr_{\mathcal{D}}(\Delta)\right)+\frac{1}{m}\log\left(\frac{1}{\delta}\right)\right)^{\frac{\alpha}{2-\alpha}}\left(d\log\left(\left(\frac{m}{d}\right)^{\frac{\alpha}{2-\alpha}}\Pr_{\mathcal{D}}(\Delta)\right)+\log\left(\frac{1}{\delta}\right)\right)} \\
    \leq &2c_0\left(\frac{ad}{m}\log\left(\left(\frac{m}{d}\right)^{\frac{\alpha}{2-\alpha}}\Pr_{\mathcal{D}}(\Delta)\right)+\frac{a}{m}\log\left(\frac{1}{\delta}\right)\right)^{\frac{1}{2-\alpha}} ,
\end{align*}
for a numerical constant $c_0$, with probability at least $1-\delta$ from union bound.
\end{proof}

\begin{remark}
\label{rm:tsybakov-erm-bound}
We present a slightly refined version of Lemma~\ref{lem:excess-risk-under-tsybakov-noise} for the purpose of our analysis. Specifically, the high probability bound in Lemma~\ref{lem:excess-risk-under-tsybakov-noise} can be replaced with 
\begin{equation*}
    \sup_{h\in\mathrm{ERM}(\hpc;S_{m})}\hspace{-0.5cm}\Pr_{\mathcal{D}}\left(\mathrm{ER}(h)\cap\Delta\right) - \Pr_{\mathcal{D}}\left(\mathrm{ER}(h^{*}_{\mathcal{D}})\cap\Delta\right) \lesssim \left(\frac{ad}{m}\log\left(\left(\frac{m}{d+\log(1/\delta)}\right)^{\frac{\alpha}{2-\alpha}}\Pr_{\mathcal{D}}(\Delta)\right)+\frac{a}{m}\log\left(\frac{1}{\delta}\right)\right)^{\frac{1}{2-\alpha}} .
\end{equation*}
To justify such a refinement, we recall from \cite[Theorem 7]{hanneke2019surrogate} that it suffices to have sample size
\begin{equation*}
    m \gtrsim \frac{1}{\epsilon^{2-\alpha}}\left(d\log\left(\theta_{\hpc}\left(a\epsilon^{\alpha}\right)\right)+\log\left(\frac{1}{\delta}\right)\right)
\end{equation*}
to achieve $\epsilon>0$ excess risk by ERM with probability at least $1-\delta$ under Tsybakov noise. Our next goal is to find a refined bound on $\epsilon$ in terms of $d,m$ and $\delta$ such that the above holds. First, observe that it suffices to have
\begin{equation*}
    \epsilon \gtrsim \left(\frac{d}{m}\log\left(\theta_{\hpc}\left(a\epsilon^{\alpha}\right)\right)+\frac{1}{m}\log\left(\frac{1}{\delta}\right)\right)^{\frac{1}{2-\alpha}} .
\end{equation*}
Note that a trivial implication of the above is 
\begin{equation*}
    \epsilon \gtrsim \left(\frac{d}{m}+\frac{1}{m}\log\left(\frac{1}{\delta}\right)\right)^{\frac{1}{2-\alpha}} .
\end{equation*}
Now, due to the monotonicity of the disagreement coefficient \cite{hanneke2009theoretical,hanneke2014theory}, it further suffices to have
\begin{equation*}
    \epsilon \gtrsim \left(\frac{d}{m}\log\left(\theta_{\hpc}\left(a\left(\frac{d+\log(1/\delta)}{m}\right)^{\frac{\alpha}{2-\alpha}}\right)\right)+\frac{1}{m}\log\left(\frac{1}{\delta}\right)\right)^{\frac{1}{2-\alpha}} .
\end{equation*}
This implies that having $m$ (solving the above equation) samples suffices to yield an excess risk of 
\begin{equation*}
    \epsilon = \left(\frac{d}{m}\log\left(\theta_{\hpc}\left(a\left(\frac{d+\log(1/\delta)}{m}\right)^{\frac{\alpha}{2-\alpha}}\right)\right)+\frac{1}{m}\log\left(\frac{1}{\delta}\right)\right)^{\frac{1}{2-\alpha}} .
\end{equation*}
Together with the argument for bounding the disagreement coefficient, this bound yields the desired rates.
\end{remark}

\begin{lemma}  
  \label{lem:pseudo-distance-estimation}
Let $\hpc$ be a concept class with $\vcdim=d<\infty$. Let $\Delta\subseteq\mathcal{X}$ be some subset of the instance space. Let $\mathcal{D}$ be any distribution over $\mathcal{X}\times\{0,1\}$. Let $m\in\naturalnumber$ and $S_{m}=\{(x_{i}, y_{i})\}_{i=1}^{m}\sim\mathcal{D}^{m}$. There exists a universal constant $c_{0}>0$ such that, for any $\delta\in(0,1)$ and any pair $f,g\in\hpc$, we have
\begin{equation*}
    \left|\Pr_{\mathcal{D}}(\{f\neq g\}\cap\Delta) - \hat{\Pr}_{S_{m}}(\{f\neq g\}\cap\Delta)\right| \leq c_{0}\sqrt{\frac{\Pr_{\mathcal{D}}(\{f\neq g\}\cap\Delta)(d\log(m/d)+\log(1/\delta))}{m}} ,
\end{equation*}
with probability at least $1-\delta$.
\end{lemma}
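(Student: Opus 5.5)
This is a relative-deviation (Chernoff/Bernstein-type) concentration bound for the single indicator $\mathbbm{1}(\{f\neq g\}\cap\Delta)$. For a fixed pair $f,g$ it follows from the multiplicative Chernoff bound. Because the lemma is used after data-dependent choices of pairs, I will prove the uniform-over-pairs version; uniformity is exactly where the $d\log(m/d)$ term comes from. The natural tool is Lemma~\ref{lem:uniform-bernstein-inequality}, specialized so that the loss difference becomes a disagreement indicator.

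\textbf{Reduction to Lemma~\ref{lem:uniform-bernstein-inequality}.} Build an auxiliary distribution $\mathcal{D}'$ on $\mathcal{X}\times\{0,1\}$ with the same marginal $\mathcal{D}_X$ and deterministic label $y=g(x)$, and draw the sample with these relabeled points. The $x$-coordinates are unchanged, so empirical and population disagreement probabilities are the same under $\mathcal{D}$ and $\mathcal{D}'$. With $g$ fixed as the labeling function, $\mathrm{ER}(f)\cap\Delta=\{f\neq g\}\cap\Delta$ and $\mathrm{ER}(g)\cap\Delta=\emptyset$. The bracketed quantity in Lemma~\ref{lem:uniform-bernstein-inequality} therefore equals $\Pr_{\mathcal{D}}(\{f\neq g\}\cap\Delta)-\hat{\Pr}_{S_m}(\{f\neq g\}\cap\Delta)$, and the lemma bounds its absolute value by
\[
c_0\sqrt{\tfrac{p}{m}\big(d\log(\cdot)+\log(1/\delta)\big)}+\tfrac{c_0}{m}\big(d\log(\cdot)+\log(1/\delta)\big),
\qquad p:=\Pr_{\mathcal{D}}(\{f\neq g\}\cap\Delta).
\]

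\textbf{Main obstacle.} The relabeling depends on $g$, so the reduction gives a bound for each fixed $g$ but not uniformly over $g$. To fix this, I will instead apply the classical relative uniform deviation bound (Vapnik–Chervonenkis, Anthony–Shawe-Taylor) to the class $\{\{f\neq g\}\cap\Delta: f,g\in\hpc\}$. Its VC dimension is $O(d)$ by Sauer's lemma, since the number of distinct pairs restricted to $m$ points is at most $(em/d)^{2d}$. That bound gives, with probability $1-\delta$ and simultaneously for all pairs, $|p-\hat p|\lesssim\sqrt{p\,\epsilon_m}+\epsilon_m$ with $\epsilon_m=(d\log(m/d)+\log(1/\delta))/m$. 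The logarithm's argument $\Pr(\Delta)/p\wedge m\Pr(\Delta)/d$ is at most $m/d$, so no worse than the claim.

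\textbf{Removing the additive term.} The claimed form omits the additive $\epsilon_m$. If $p\ge\epsilon_m$, then $\epsilon_m\le\sqrt{p\,\epsilon_m}$, and the additive term is absorbed into the square-root term after enlarging $c_0$. If $p<\epsilon_m$, the remaining small-$p$ regime needs a one-sided treatment. For the upper deviation $\hat p-p$, ratio-type inequalities normalized by $\sqrt{\hat p}$ give $\hat p\lesssim \epsilon_m$ (the \emph{fast} rate), and the reverse direction is handled similarly. I expect this regime to be the delicate step. It is also exactly the regime where the lemma is later invoked only through the contradiction arguments of Lemma~\ref{lem:uniform-chernoff-pseudo-distance-estimation}, where $p$ is comparable to $B\gg\epsilon_m$. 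So if a clean treatment of tiny $p$ fails, I would state the bound with the harmless additive $c_0\epsilon_m$ and check that those downstream uses still go through.
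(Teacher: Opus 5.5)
Your main route is essentially the paper's: a uniform relative-deviation bound over the $O(d)$-VC class $\{\{f\neq g\}\cap\Delta: f,g\in\hpc\}$, giving $|p-\hat p|\lesssim\sqrt{p\,\epsilon_m}+\epsilon_m$. The paper gets the same thing by applying the bound of \cite{hanneke2024revisiting} to $\hpc\oplus\hpc$, under a relabeled distribution with $Y\equiv1$, conditionally on $\Delta$.

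The genuine gap is the step ``Removing the additive term''. It cannot be completed, because the purely multiplicative inequality is false when $0<p\ll\epsilon_m$, even for a single fixed pair. Your fast-rate argument only yields $\hat p-p\lesssim\epsilon_m$, but the target $c_0\sqrt{p\,\epsilon_m}$ is much smaller in that regime. Only the lower tail is harmless, since $p-\hat p\le p\le\sqrt{p\,\epsilon_m}$ when $p<\epsilon_m$.

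Here is a counterexample. Fix $c_0\ge1$; a smaller constant only strengthens the claim. Write $L=m\epsilon_m\ge1$, and take $\delta$ so small that $8c_0^2L\delta<1$, which is possible because $\delta\log(1/\delta)\to0$. Choose $f,g$ (e.g.\ two thresholds under the uniform distribution on $[0,1]$, with $\Delta=\mathcal{X}$) with $p=\kappa/m$, where $\kappa=1/(4c_0^2L)$. With probability $1-(1-p)^m\ge1-e^{-\kappa}\ge\kappa/2>\delta$, some sample point lands in $\{f\neq g\}$. On that event $\hat p-p\ge(1-\kappa)/m>1/(2m)=c_0\sqrt{pL/m}$, which violates the claimed bound.

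So your fallback is the correct statement: uniformly over pairs, $|p-\hat p|\le c_0\sqrt{p\,\epsilon_m}+c_0\epsilon_m$. This is also all the paper's own proof establishes. Its final display keeps the additive term $c_0(d\log(m\Pr_{\mathcal{D}}(\Delta)/d)+\log(1/\delta))/m$, which is dropped only in the statement.

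Your expectation about the downstream use is also right. In Lemma~\ref{lem:uniform-chernoff-pseudo-distance-estimation}, the choice of $m$ gives $\epsilon_m\le B/(4c_0^2)$, so the extra term is at most $B/(4c_0)$.
\begin{itemize}
\item In the case $\hat p\ge B$, $p\le B/2$, the deviation bound becomes at most $B/(2\sqrt2)+B/(4c_0)<B/2$ once $c_0\ge2$, which is still a contradiction.
\item The case $\hat p\le B$, $p\ge2B$ involves only the lower tail $p-\hat p$. There the one-sided, Vapnik-type relative deviation bound holds with no additive term.
\end{itemize}
The lemma should therefore be stated and used with the additive term, not with it removed.
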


\begin{proof}[Proof of Lemma~\ref{lem:pseudo-distance-estimation}]
This result is a uniform variant of the multiplicative Chernoff bound, which is a direct implication of the uniform Bernstein inequality. Let us assume w.l.o.g.\ that the label space is $\mathcal{Y}=\{-1,1\}$. Consider the hypothesis class $\tilde{\hpc}=\hpc\oplus\hpc$ consisting of all concepts $h_{f,g}(x)=f(x)g(x):\mathcal{X}\mapsto\{-1,1\}$ for any pair of $f,g\in\hpc$. Note that $\mathrm{VC}(\tilde{\hpc})=O(d)$. Furthermore, note that for any distribution $\mathcal{D}$, $\Pr_{\mathcal{D}}(\{h_{f,g}(x)\neq 1\}\cap\Delta)=\Pr_{\mathcal{D}}(\{f\neq g\}\cap\Delta)$. Indeed, for any input domain $\mathcal{X}$, concept class $\hpc$ with $\vcdim=d$, distribution $\mathcal{D}$ over $\mathcal{X}\times\{-1,1\}$, $m\in\naturalnumber$ and $\delta\in(0,1)$, \cite{hanneke2024revisiting} proves that, with probability at least $1-\delta$ over samples $S_{m}\sim\mathcal{D}^{m}$, every $h\in\hpc$ satisfies 
\begin{align*}
    &\left|\Pr_{\mathcal{D}}(\mathrm{ER}(h))-\hat{\Pr}_{S_{m}}(\mathrm{ER}(h))\right| \\
    \lesssim &\sqrt{\frac{\Pr_{\mathcal{D}}(\mathrm{ER}(h))(d\log((1/\Pr_{\mathcal{D}}(\mathrm{ER}(h)) \land (m/d)))+\log(1/\delta))}{m}} + \frac{d\log(m/d)+\log(1/\delta)}{m} .
\end{align*}
We invoke the above bound for $\tilde{\hpc}$ and a distribution $\tilde{\mathcal{D}}$ with $\tilde{\mathcal{D}}_{\mathcal{X}}=\mathcal{D}_\mathcal{X}$ and $\Pr_{\tilde{\mathcal{D}}}(Y=1|X)=1$. We will apply it to the samples from the conditional distribution given $\Delta$ (the same technique has been used in \cite{hanneke2025agnostic} for deriving a conditional uniform convergence bound of \cite{talagrand1994sharper}) and get, for any pair of $f,g\in\hpc$,
\begin{align*}
    &\left|\Pr_{\mathcal{D}}(\{f\neq g\}\cap\Delta) - \hat{\Pr}_{S_{m}}(\{f\neq g\}\cap\Delta)\right| = \left|\Pr_{\tilde{\mathcal{D}}}(\mathrm{ER}(h_{f,g})\cap\Delta)-\hat{\Pr}_{\tilde{\mathcal{D}}_{S_{m}}}(\mathrm{ER}(h_{f,g})\cap\Delta)\right| \\
    \leq &c_{0}\sqrt{\frac{\Pr_{\mathcal{D}}(\{f\neq g\}\cap\Delta) \log(m\Pr_{\mathcal{D}}(\Delta)/d)+\log(1/\delta))}{m}} + \frac{c_{0}(d\log(m\Pr_{\mathcal{D}}(\Delta)/d)+\log(1/\delta))}{m} \\
\end{align*}
for some universal constant $c_{0}>0$, with probability at least $1-\delta$. We would like to point out that this lemma can also be proved based on our uniform Bernstein inequality (Lemma~\ref{lem:uniform-bernstein-inequality}) with the same constant $c_{0}$ therein. 
\end{proof}

\newpage
\bibliographystyle{alpha}
\bibliography{optimal_tsybakov}

@article{tsybakov2004optimal,
  title={Optimal aggregation of classifiers in statistical learning},
  author={Tsybakov, Alexander B},
  journal={The Annals of Statistics},
  volume={32},
  number={1},
  pages={135--166},
  year={2004}
}

@article{mammen1999smooth,
  title={Smooth discrimination analysis},
  author={Mammen, Enno and Tsybakov, Alexandre B},
  journal={The Annals of Statistics},
  volume={27},
  number={6},
  pages={1808--1829},
  year={1999},
  publisher={Institute of Mathematical Statistics}
}

@article{massart2006risk,
  title={Risk bounds for statistical learning},
  author={Massart, Pascal and N\'{e}d\'{e}lec, \'{E}lodie},
  journal={The Annals of Statistics},
  volume={34},
  number={5},
  pages={2326--2366},
  year={2006}
}

@article{gine2006concentration,
  title={Concentration Inequalities and Asymptotic Results for Ratio Type Empirical Processes},
  author={Gin{\'e}, Evarist and Koltchinskii, Vladimir},
  journal={The Annals of Probability},
  pages={1143--1216},
  year={2006}
}

@article{koltchinskii2006local,
  title={Local Rademacher Complexities and Oracle Inequalities in Risk Minimization},
  author={Koltchinskii, Vladimir},
  journal={The Annals of Statistics},
  pages={2593--2656},
  year={2006}
}

@article{hanneke2016refined,
  title={Refined error bounds for several learning algorithms},
  author={Hanneke, Steve},
  journal={Journal of Machine Learning Research},
  volume={17},
  number={135},
  pages={1--55},
  year={2016}
}

@inproceedings{hanneke2025universal,
  title={Universal Rates of ERM for Agnostic Learning},
  author={Hanneke, Steve and Xu, Mingyue},
  booktitle={Proceedings of the 38th Annual Conference on Learning Theory},
  pages={2666--2703},
  volume={291},
  year={2025},
  organization={PMLR}
}

@article{hanneke2025agnostic,
  title={Agnostic Active Learning Is Always Better Than Passive Learning},
  author={Hanneke, Steve},
  journal={Advances in Neural Information Processing Systems},
  year={2025}
}

@article{hanneke2022no,
  title={A no-free-lunch theorem for multitask learning},
  author={Hanneke, Steve and Kpotufe, Samory},
  journal={The Annals of Statistics},
  volume={50},
  number={6},
  pages={3119--3143},
  year={2022}
}

@book{van-der-Vaart96,
  author={van der Vaart, A. W. and Wellner, J. A.},
  title={Weak Convergence and Empirical Processes},
  publisher={Springer},
  year={1996}
}

@article{van2011local,
  title={A local maximal inequality under uniform entropy},
  author={van der Vaart, A. W. and Wellner, J. A.},
  journal={Electronic Journal of Statistics},
  volume={5},
  number={2011},
  pages={192},
  year={2011}
}

@article{bousquet2002bennett,
  title={A \uppercase{B}ennett concentration inequality and its application to suprema of empirical processes},
  author={Bousquet, Olivier},
  journal={Comptes Rendus Mathematique},
  volume={334},
  number={6},
  pages={495--500},
  year={2002}
}

@article{bernstein1924modification,
  title={On a modification of Chebyshev’s inequality and of the error formula of Laplace},
  author={Bernstein, Sergei},
  journal={Ann. Sci. Inst. Sav. Ukraine, Sect. Math},
  volume={1},
  number={4},
  pages={38--49},
  year={1924}
}

@article{bennett1962probability,
  title={Probability inequalities for the sum of independent random variables},
  author={Bennett, George},
  journal={Journal of the American Statistical Association},
  volume={57},
  number={297},
  pages={33--45},
  year={1962}
}

@inproceedings{hanneke2024revisiting,
  title={Revisiting agnostic pac learning},
  author={Hanneke, Steve and Larsen, Kasper Green and Zhivotovskiy, Nikita},
  booktitle={2024 IEEE 65th Annual Symposium on Foundations of Computer Science (FOCS)},
  pages={1968--1982},
  year={2024},
  organization={IEEE}
}

@article{talagrand1994sharper,
  title={Sharper bounds for Gaussian and empirical processes},
  author={Talagrand, Michel},
  journal={The Annals of Probability},
  pages={28--76},
  year={1994},
  publisher={JSTOR}
}

@article{auer2007new,
  title={A new PAC bound for intersection-closed concept classes},
  author={Auer, Peter and Ortner, Ronald},
  journal={Machine Learning},
  volume={66},
  number={2},
  pages={151--163},
  year={2007},
  publisher={Springer}
}

@article{bousquet2021fast,
  title={Fast classification rates without standard margin assumptions},
  author={Bousquet, Olivier and Zhivotovskiy, Nikita},
  journal={Information and Inference: A Journal of the IMA},
  volume={10},
  number={4},
  pages={1389--1421},
  year={2021}
}

@inproceedings{puchkin2021exponential,
  title={Exponential savings in agnostic active learning through abstention},
  author={Puchkin, Nikita and Zhivotovskiy, Nikita},
  booktitle={Proceedings of the 34th Annual Conference on Learning Theory},
  pages={3806--3832},
  year={2021},
  organization={PMLR}
}

@article{boucheron2005theory,
  title={Theory of classification: A survey of some recent advances},
  author={Boucheron, St{\'e}phane and Bousquet, Olivier and Lugosi, G{\'a}bor},
  journal={ESAIM: Probability and Statistics},
  volume={9},
  pages={323--375},
  year={2005},
  publisher={EDP Sciences}
}

@article{valiant1984theory,
  title={A theory of the learnable},
  author={Valiant, Leslie G},
  journal={Communications of the ACM},
  volume={27},
  number={11},
  pages={1134--1142},
  year={1984},
  publisher={ACM New York, NY, USA}
}

@article{vapnik1974theory,
  title={Theory of pattern recognition},
  author={Vapnik, Vladimir and Chervonenkis, Alexey},
  year={1974},
  journal={Nauka, Moscow}
}

@article{vapnik1964class,
  title={A class of algorithms for pattern recognition learning},
  author={Vapnik, Vladimir and Chervonenkis, A Ya},
  journal={Avtomat. i Telemekh},
  volume={25},
  number={6},
  pages={937--945},
  year={1964}
}

@article{blumer1989learnability,
  title={Learnability and the Vapnik-Chervonenkis dimension},
  author={Blumer, Anselm and Ehrenfeucht, Andrzej and Haussler, David and Warmuth, Manfred K},
  journal={Journal of the ACM (JACM)},
  volume={36},
  number={4},
  pages={929--965},
  year={1989},
  publisher={ACM New York, NY, USA}
}

@article{ehrenfeucht1989general,
  title={A general lower bound on the number of examples needed for learning},
  author={Ehrenfeucht, Andrzej and Haussler, David and Kearns, Michael and Valiant, Leslie},
  journal={Information and Computation},
  volume={82},
  number={3},
  pages={247--261},
  year={1989},
  publisher={Elsevier}
}

@article{hanneke2015minimax,
  title={Minimax analysis of active learning.},
  author={Hanneke, Steve and Yang, Liu},
  journal={Journal of Machine Learning Research},
  volume={16},
  number={1},
  pages={3487--3602},
  year={2015}
}

@article{haussler1994predicting,
  title={Predicting $\{$0, 1$\}$-functions on randomly drawn points},
  author={Haussler, David and Littlestone, Nick and Warmuth, Manfred K},
  journal={Information and Computation},
  volume={115},
  number={2},
  pages={248--292},
  year={1994},
  publisher={Elsevier}
}

@inproceedings{natarajan1987learning,
  title={On learning boolean functions},
  author={Natarajan, Balaubramaniam Kausik},
  booktitle={Proceedings of the 19th Annual ACM Symposium on Theory of Computing},
  pages={296--304},
  year={1987}
}

@inproceedings{warmuth2004optimal,
  title={The optimal PAC algorithm},
  author={Warmuth, Manfred K},
  booktitle={Proceedings of the 17th International Conference on Computational Learning Theory},
  pages={641--642},
  year={2004}
}

@article{hanneke2016optimal,
  title={The optimal sample complexity of PAC learning},
  author={Hanneke, Steve},
  journal={Journal of Machine Learning Research},
  volume={17},
  number={38},
  pages={1--15},
  year={2016}
}

@inproceedings{bousquet2020proper,
  title={Proper learning, Helly number, and an optimal SVM bound},
  author={Bousquet, Olivier and Hanneke, Steve and Moran, Shay and Zhivotovskiy, Nikita},
  booktitle={Proceedings of the 33rd Annual Conference on Learning Theory},
  pages={582--609},
  year={2020},
  organization={PMLR}
}

@inproceedings{simon2015almost,
  title={An almost optimal PAC algorithm},
  author={Simon, Hans U},
  booktitle={Proceedings of the 28th Annual Conference on Learning Theory},
  pages={1552--1563},
  year={2015},
  organization={PMLR}
}

@inproceedings{larsen2023bagging,
  title={Bagging is an optimal PAC learner},
  author={Larsen, Kasper Green},
  booktitle={Proceedings of the 36th Annual Conference on Learning Theory},
  pages={450--468},
  year={2023},
  organization={PMLR}
}

@inproceedings{aden2024majority,
  title={Majority-of-three: The simplest optimal learner?},
  author={Aden-Ali, Ishaq and H{\o}andgsgaard, Mikael M{\o}ller and Larsen, Kasper Green and Zhivotovskiy, Nikita},
  booktitle={Proceedings of the 37th Annual Conference on Learning Theory},
  pages={22--45},
  year={2024},
  organization={PMLR}
}

@inproceedings{awasthi2015efficient,
  title={Efficient learning of linear separators under bounded noise},
  author={Awasthi, Pranjal and Balcan, Maria-Florina and Haghtalab, Nika and Urner, Ruth},
  booktitle={Proceedings of the 28th Annual Conference on Learning Theory},
  pages={167--190},
  year={2015},
  organization={PMLR}
}

@inproceedings{aden2023optimal,
  title={Optimal pac bounds without uniform convergence},
  author={Aden-Ali, Ishaq and Cherapanamjeri, Yeshwanth and Shetty, Abhishek and Zhivotovskiy, Nikita},
  booktitle={2023 IEEE 64th Annual Symposium on Foundations of Computer Science (FOCS)},
  pages={1203--1223},
  year={2023},
  organization={IEEE}
}

@inproceedings{sloan1988types,
  title={Types of noise in data for concept learning},
  author={Sloan, Robert},
  booktitle={Proceedings of the 1st Annual Workshop on Computational Learning Theory},
  pages={91--96},
  year={1988}
}

@article{bartlett2006convexity,
  title={Convexity, classification, and risk bounds},
  author={Bartlett, Peter L and Jordan, Michael I and McAuliffe, Jon D},
  journal={Journal of the American Statistical Association},
  volume={101},
  number={473},
  pages={138--156},
  year={2006},
  publisher={Taylor \& Francis}
}

@article{bartlett2006empirical,
  title={Empirical minimization},
  author={Bartlett, Peter L and Mendelson, Shahar},
  journal={Probability Theory and Related Fields},
  volume={135},
  number={3},
  pages={311--334},
  year={2006},
  publisher={Springer}
}

@inproceedings{balcan2007margin,
  title={Margin based active learning},
  author={Balcan, Maria-Florina and Broder, Andrei and Zhang, Tong},
  booktitle={Proceedings of the 20th International Conference on Computational Learning Theory},
  pages={35--50},
  year={2007},
  organization={Springer}
}

@article{hanneke2011rates,
  title={Rates of convergence in active learning},
  author={Hanneke, Steve},
  journal={The Annals of Statistics},
  pages={333--361},
  year={2011},
  publisher={JSTOR}
}

@article{zhang2020efficient,
  title={Efficient active learning of sparse halfspaces with arbitrary bounded noise},
  author={Zhang, Chicheng and Shen, Jie and Awasthi, Pranjal},
  journal={Advances in Neural Information Processing Systems},
  volume={33},
  pages={7184--7197},
  year={2020}
}

@article{van2015fast,
  title={Fast Rates in Statistical and Online Learning},
  author={Van Erven, Tim and Gr{\"u}nwald, Peter D and Mehta, Nishant A and Reid, Mark D and Williamson, Robert C},
  journal={Journal of Machine Learning Research},
  volume={16},
  pages={1793--1861},
  year={2015}
}

@inproceedings{bartlett2004local,
  title={Local complexities for empirical risk minimization},
  author={Bartlett, Peter L and Mendelson, Shahar and Philips, Petra},
  booktitle={Proceedings of the 17th International Conference on Computational Learning Theory},
  pages={270--284},
  year={2004}
}

@article{hanneke2019surrogate,
  title={Surrogate losses in passive and active learning},
  author={Hanneke, Steve and Yang, Liu},
  journal={Electronic Journal of Statistics},
  volume={13},
  pages={4646--4708},
  year={2019}
}

@article{hanneke2014theory,
  title={Theory of disagreement-based active learning},
  author={Hanneke, Steve},
  journal={Foundations and Trends in Machine Learning},
  volume={7},
  number={2-3},
  pages={131--309},
  year={2014},
  publisher={Emerald Publishers Limited}
}

@book{hanneke2009theoretical,
  title={Theoretical foundations of active learning},
  author={Hanneke, Steve},
  year={2009},
  publisher={Carnegie Mellon University}
}

@inproceedings{hanneke2026more,
  title={When More Data Doesn't Help: Limits of Adaptation in Multitask Learning},
  author={Hanneke, Steve and Xu, Mingyue},
  booktitle={Proceedings of the 43rd International Conference on Machine Learning},
  year={2026}
}

@article{hanneke2019value,
  title={On the value of target data in transfer learning},
  author={Hanneke, Steve and Kpotufe, Samory},
  journal={Advances in Neural Information Processing Systems},
  volume={32},
  year={2019}
}

@article{angluin1988learning,
  title={Learning from noisy examples},
  author={Angluin, Dana and Laird, Philip},
  journal={Machine Learning},
  volume={2},
  number={4},
  pages={343--370},
  year={1988},
  publisher={Springer}
}

@article{haussler1992decision,
  title={Decision theoretic generalizations of the PAC model for neural net and other learning applications},
  author={Haussler, David},
  journal={Information and Computation},
  volume={100},
  number={1},
  pages={78--150},
  year={1992},
  publisher={Elsevier}
}

@inproceedings{kearns1992toward,
  title={Toward efficient agnostic learning},
  author={Kearns, Michael J and Schapire, Robert E and Sellie, Linda M},
  booktitle={Proceedings of the 5th Annual Workshop on Computational Learning Theory},
  pages={341--352},
  year={1992}
}

@inproceedings{diakonikolas2021efficiently,
  title={Efficiently learning halfspaces with tsybakov noise},
  author={Diakonikolas, Ilias and Kane, Daniel M and Kontonis, Vasilis and Tzamos, Christos and Zarifis, Nikos},
  booktitle={Proceedings of the 53rd Annual ACM SIGACT Symposium on Theory of Computing},
  pages={88--101},
  year={2021}
}

@article{diakonikolas2019distribution,
  title={Distribution-independent pac learning of halfspaces with massart noise},
  author={Diakonikolas, Ilias and Gouleakis, Themis and Tzamos, Christos},
  journal={Advances in Neural Information Processing Systems},
  volume={32},
  year={2019}
}

@inproceedings{diakonikolas2020learning,
  title={Learning halfspaces with massart noise under structured distributions},
  author={Diakonikolas, Ilias and Kontonis, Vasilis and Tzamos, Christos and Zarifis, Nikos},
  booktitle={Proceedings of the 33rd Annual Conference on Learning Theory},
  pages={1486--1513},
  year={2020},
  organization={PMLR}
}

@article{audibert2007fast,
  title={Fast Learning Rates for Plug-In Classifiers},
  author={Audibert, Jean-Yves and Tsybakov, Alexandre B},
  journal={The Annals of Statistics},
  pages={608--633},
  year={2007}
}

@book{shalev2014understanding,
  title={Understanding machine learning: From theory to algorithms},
  author={Shalev-Shwartz, Shai and Ben-David, Shai},
  year={2014},
  publisher={Cambridge university press}
}

\end{document}